\documentclass{article} %

\ifdefined\pdfsuppressptexinfo\pdfsuppressptexinfo=-1\fi
\ifdefined\pdfinfoomitdate\pdfinfoomitdate=1\fi
\ifdefined\pdftrailerid\pdftrailerid{}\fi
\PassOptionsToPackage{table}{xcolor}
\usepackage{iclr2027_conference,times}
\iclrfinalcopy

\usepackage{latexsym}   
\usepackage{amsbsy}
\usepackage{amssymb}
\usepackage{amsmath}
\usepackage[final]{graphicx}
\usepackage{enumitem}
\usepackage{amsthm}

\usepackage{booktabs}

\allowdisplaybreaks

\newtheorem{theorem}{Theorem}%
\newtheorem{corollary}{Corollary}%
\newtheorem{proposition}{Proposition}[section]

\newtheorem{remark}{Remark}%

\newcommand{\beq}{\begin{equation}}
\newcommand{\eeq}{\end{equation}}

\renewcommand{\top}{{\mkern-1.5mu\mathsf{T}}}

\newcommand{\Rbb}{{\mathbb{R}}}

\newcommand{\sG}{{\scriptscriptstyle G}}

\newcommand{\din}{{d_{\mathrm{in}}}}
\newcommand{\dout}{{d_{\mathrm{out}}}}

\newcommand{\A}{\mathbf{A}}
\newcommand{\I}{\mathbf{I}}
\newcommand{\Q}{\mathbf{Q}}
\newcommand{\boldS}{\mathbf{S}}
\newcommand{\W}{\mathbf{W}}

\newcommand{\s}{\mathbf{s}}
\newcommand{\w}{{\mathbf{w}}}

\newcommand{\hbW}{{\widehat{\W}}}

\newcommand{\hbw}{{\widehat{\w}}}

\usepackage{amsmath,amsfonts,bm}

\def\eqref#1{equation~\ref{#1}}

\def\sG{{\mathbb{G}}}

\renewcommand{\eqref}[1]{(\ref{#1})}
\renewcommand{\sG}{{\scriptscriptstyle G}}
\usepackage{multirow}
\usepackage{float}
\usepackage[table]{xcolor}
\definecolor{jarqFP16}{HTML}{E7E7E7}
\definecolor{jarqRefined}{HTML}{E4EEF7}
\definecolor{jarqGood}{HTML}{1A7F37}
\definecolor{jarqBad}{HTML}{C0392B}
\newcommand{\pplgood}[1]{{\tiny\textcolor{jarqGood}{#1}}}
\newcommand{\pplbad}[1]{{\tiny\textcolor{jarqBad}{#1}}}

\newlength{\ppldw}
\floatstyle{ruled}
\newfloat{algorithm}{tbp}{loa}
\floatname{algorithm}{Algorithm}
\usepackage{hyperref}
\usepackage{url}
\usepackage{xspace}
\usepackage{booktabs} 
\usepackage{graphicx}
\usepackage{wrapfig}
\usepackage{pgfplots}
\usepgfplotslibrary{groupplots}
\pgfplotsset{compat=1.18}
\newcommand{\ppl}[1]{%
  \pgfmathprintnumber[verbatim,fixed,precision=4,fixed zerofill]{#1}}
\newcommand{\method}{\textsc{JARQ}\xspace}
\newcommand{\methodfull}{%
\textbf{J}oint
\textbf{A}lternating
\textbf{R}efinement for
\textbf{Q}uantization\xspace}

\title{JARQ: Joint Alternating Refinement\\for Quantization}

\author{Xinyu Wang\textsuperscript{1}\thanks{Equal contribution.}\quad
Sicheng Lyu\textsuperscript{1}\footnotemark[1]\quad
Xiao-Wen Chang\textsuperscript{1}\thanks{Corresponding author.}\\[5pt]
\normalfont\textsuperscript{1}McGill University\\[3pt]
\normalfont\small\href{mailto:xinyu.wang5@mail.mcgill.ca}{\texttt{xinyu.wang5@mail.mcgill.ca}}\quad
\href{mailto:sicheng.lyu@mail.mcgill.ca}{\texttt{sicheng.lyu@mail.mcgill.ca}}\\
\normalfont\small\href{mailto:xiaowen.chang@mcgill.ca}{\texttt{xiaowen.chang@mcgill.ca}}
}

\date{}
\hypersetup{
  pdftitle={JARQ: Joint Alternating Refinement for Quantization},
  pdfauthor={Xinyu Wang, Sicheng Lyu, Xiao-Wen Chang},
  pdfsubject={Post-training quantization of large language models},
  hidelinks
}

\begin{document}
\raggedbottom

\maketitle
\fancyhead{}

\begin{abstract}
Group-wise post-training quantizers for large language models round
weights onto a grid that is not refit to the resulting integer codes.
We show that this leaves accuracy on the table: the best grid depends on the codes,
input correlations couple the errors of different groups, and useful
code changes often involve many codes at once. We propose \method{}
(\methodfull{}), a plug-in refinement that starts from any group-wise
quantizer and alternates a joint least-squares fit of all group scales
with bounded Babai proposals that move many codes of a group together
on the current grid. The problem is a bilinear box-constrained
mixed-integer least-squares problem; the solver is
backpropagation-free, does not increase the layer-wise objective under
exact scale solves, and keeps the host's bit width, groups, zero points,
and inference cost. Across Llama-2, Llama-3, and Qwen models
with RTN, GPTQ, OmniQuant, and AWQ hosts, \method{} lowers perplexity
in 90 of 96 comparisons, cuts three-bit RTN perplexity by up to 36\%,
raises mean multiple-choice accuracy in 23 of 24 configurations, and
improves QEP, QuaRot, and OJBKQ outputs, at under a minute per 7B block. Code will be available at
\url{https://github.com/Euphoria040201/JARQ}.
\end{abstract}

\section{Introduction}
\label{sec:introduction}

Post-training quantization (PTQ) makes large language models cheaper
to serve by storing weights in a few bits, using only a small
calibration set~\citep{frantar-gptq,lin2023awq,OmniQuant,xiao2023smoothquant,ashkboos2024quarot}.
At three or four bits, a group-wise quantizer stores each weight as an
integer code on a grid set by a per-group scale. Widely used methods
obtain the codes by rounding onto a grid: GPTQ fixes the grid and rounds
with sequential error compensation, AWQ rounds after activation-aware
scaling, and OmniQuant learns clipping thresholds by gradient descent
and rounds each weight to the current
grid~\citep{frantar-gptq,lin2023awq,OmniQuant}. The final grid is not
refit to the codes it produces, and those codes are not revisited.

Leaving the grid fixed is suboptimal, because the grid and the codes
interact in three ways. First, the best grid depends on the codes: once
some codes change, a different scale fits the layer output better, and
on that grid other codes become optimal, so a grid fixed in advance
goes stale (\emph{grid staleness}). Second, the groups of a channel
interact: correlated inputs mix their output errors, so the best scale
of one group depends on the scales of all others (\emph{cross-group
coupling}). Third, useful code changes are often collective: within a
group, correlated inputs create improvements that no single-code change
can reach (\emph{single-code stationarity}). OJBKQ captures the third
effect by decoding many codes at once, but on a fixed, precomputed
grid~\citep{OJBKQ}. Figure~\ref{fig:teaser} illustrates the first and
third effects on one group of two correlated weights, where refitting
the scale unlocks a two-code move.

\begin{figure*}[t]
\centering
\definecolor{tzhost}{HTML}{A5ADB7}
\definecolor{tzscale}{HTML}{2878B5}
\definecolor{tzcode}{HTML}{1A7F37}
\definecolor{tzbad}{HTML}{C0392B}
\begin{tikzpicture}[x=1.00cm, y=1.00cm, >=stealth]
\begin{scope}
\begin{scope}
\clip (-0.55,-0.15) rectangle (2.55,2.70);
\fill[black!4] (0,0) rectangle (3.6000,3.6000);
\draw[black!25, line width=0.4pt] (0,0) rectangle (3.6000,3.6000);
\draw[black!12, line width=0.35pt, rotate around={-45:(0.4200,1.5500)}] (0.4200,1.5500) ellipse (1.4639 and 0.5906);
\draw[black!12, line width=0.35pt, rotate around={-45:(0.4200,1.5500)}] (0.4200,1.5500) ellipse (1.8898 and 0.7625);
\draw[black!12, line width=0.35pt, rotate around={-45:(0.4200,1.5500)}] (0.4200,1.5500) ellipse (2.3146 and 0.9339);
\draw[tzhost, line width=0.7pt, rotate around={-45:(0.4200,1.5500)}] (0.4200,1.5500) ellipse (1.0983 and 0.4432);
\fill[tzhost] (0.0000,0.0000) circle (1.1pt);
\fill[tzhost] (0.0000,1.2000) circle (1.1pt);
\fill[tzhost] (0.0000,2.4000) circle (1.1pt);
\fill[tzhost] (1.2000,0.0000) circle (1.1pt);
\fill[tzhost] (1.2000,1.2000) circle (1.1pt);
\fill[tzhost] (1.2000,2.4000) circle (1.1pt);
\fill[tzhost] (2.4000,0.0000) circle (1.1pt);
\fill[tzhost] (2.4000,1.2000) circle (1.1pt);
\fill[tzhost] (2.4000,2.4000) circle (1.1pt);
\draw[tzbad, dash pattern=on 1.5pt off 1pt, line width=0.6pt] (1.2000,1.2000) -- (2.4000,1.2000);
\node[tzbad, font=\tiny, inner sep=0pt] at (2.4000,1.2000) {$\times$};
\draw[tzbad, dash pattern=on 1.5pt off 1pt, line width=0.6pt] (1.2000,1.2000) -- (0.0000,1.2000);
\node[tzbad, font=\tiny, inner sep=0pt] at (0.0000,1.2000) {$\times$};
\draw[tzbad, dash pattern=on 1.5pt off 1pt, line width=0.6pt] (1.2000,1.2000) -- (1.2000,2.4000);
\node[tzbad, font=\tiny, inner sep=0pt] at (1.2000,2.4000) {$\times$};
\draw[tzbad, dash pattern=on 1.5pt off 1pt, line width=0.6pt] (1.2000,1.2000) -- (1.2000,0.0000);
\node[tzbad, font=\tiny, inner sep=0pt] at (1.2000,0.0000) {$\times$};
\node[font=\small, inner sep=0pt] at (0.4200,1.5500) {$\star$};
\node[font=\tiny, anchor=west] at (0.4900,1.6000) {$\mathbf{w}$};
\fill[black] (1.2000,1.2000) circle (1.8pt);
\end{scope}
\draw[black!30] (-0.55,-0.15) rectangle (2.55,2.70);
\node[anchor=south, font=\scriptsize, align=center] at (1.00,2.72) {\textbf{(a)} Host grid};
\node[font=\tiny, align=center, text=black!75, anchor=north] at (1.00,-0.20) {every single-code move is worse;\\no fixed-grid decoder improves it};
\end{scope}
\begin{scope}[xshift=3.45cm]
\begin{scope}
\clip (-0.55,-0.15) rectangle (2.55,2.70);
\fill[black!4] (0,0) rectangle (2.9550,2.9550);
\draw[black!25, line width=0.4pt] (0,0) rectangle (2.9550,2.9550);
\draw[black!12, line width=0.35pt, rotate around={-45:(0.4200,1.5500)}] (0.4200,1.5500) ellipse (1.4639 and 0.5906);
\draw[black!12, line width=0.35pt, rotate around={-45:(0.4200,1.5500)}] (0.4200,1.5500) ellipse (1.8898 and 0.7625);
\draw[black!12, line width=0.35pt, rotate around={-45:(0.4200,1.5500)}] (0.4200,1.5500) ellipse (2.3146 and 0.9339);
\draw[tzscale, line width=0.7pt, rotate around={-45:(0.4200,1.5500)}] (0.4200,1.5500) ellipse (0.7990 and 0.3224);
\fill[tzhost!45] (0.0000,0.0000) circle (1.1pt);
\fill[tzhost!45] (0.0000,1.2000) circle (1.1pt);
\fill[tzhost!45] (0.0000,2.4000) circle (1.1pt);
\fill[tzhost!45] (1.2000,0.0000) circle (1.1pt);
\fill[tzhost!45] (1.2000,1.2000) circle (1.1pt);
\fill[tzhost!45] (1.2000,2.4000) circle (1.1pt);
\fill[tzhost!45] (2.4000,0.0000) circle (1.1pt);
\fill[tzhost!45] (2.4000,1.2000) circle (1.1pt);
\fill[tzhost!45] (2.4000,2.4000) circle (1.1pt);
\draw[->, tzscale!70, line width=0.4pt, shorten >=1.2pt, shorten <=1.4pt] (1.2000,1.2000) -- (0.9850,0.9850);
\draw[->, tzscale!70, line width=0.4pt, shorten >=1.2pt, shorten <=1.4pt] (2.4000,1.2000) -- (1.9700,0.9850);
\draw[->, tzscale!70, line width=0.4pt, shorten >=1.2pt, shorten <=1.4pt] (1.2000,2.4000) -- (0.9850,1.9700);
\draw[->, tzscale!70, line width=0.4pt, shorten >=1.2pt, shorten <=1.4pt] (2.4000,2.4000) -- (1.9700,1.9700);
\draw[->, tzscale!70, line width=0.4pt, shorten >=1.2pt, shorten <=1.4pt] (0.0000,2.4000) -- (0.0000,1.9700);
\draw[->, tzscale!70, line width=0.4pt, shorten >=1.2pt, shorten <=1.4pt] (2.4000,0.0000) -- (1.9700,0.0000);
\fill[tzscale] (0.0000,0.0000) circle (1.1pt);
\fill[tzscale] (0.0000,0.9850) circle (1.1pt);
\fill[tzscale] (0.0000,1.9700) circle (1.1pt);
\fill[tzscale] (0.9850,0.0000) circle (1.1pt);
\fill[tzscale] (0.9850,0.9850) circle (1.1pt);
\fill[tzscale] (0.9850,1.9700) circle (1.1pt);
\fill[tzscale] (1.9700,0.0000) circle (1.1pt);
\fill[tzscale] (1.9700,0.9850) circle (1.1pt);
\fill[tzscale] (1.9700,1.9700) circle (1.1pt);
\draw[black!40, line width=0.4pt] (0,0) -- (1.2000,1.2000);
\node[font=\small, inner sep=0pt] at (0.4200,1.5500) {$\star$};
\node[font=\tiny, anchor=west] at (0.4900,1.6000) {$\mathbf{w}$};
\fill[tzscale] (0.9850,0.9850) circle (1.8pt);
\end{scope}
\draw[black!30] (-0.55,-0.15) rectangle (2.55,2.70);
\node[anchor=south, font=\scriptsize, align=center] at (1.00,2.72) {\textbf{(b)} Refit scales, codes fixed};
\node[font=\tiny, align=center, text=black!75, anchor=north] at (1.00,-0.20) {least-squares scale fit (joint\\over groups) moves the grid};
\end{scope}
\begin{scope}[xshift=6.90cm]
\begin{scope}
\clip (-0.55,-0.15) rectangle (2.55,2.70);
\fill[black!4] (0,0) rectangle (2.9550,2.9550);
\draw[black!25, line width=0.4pt] (0,0) rectangle (2.9550,2.9550);
\draw[black!12, line width=0.35pt, rotate around={-45:(0.4200,1.5500)}] (0.4200,1.5500) ellipse (1.4639 and 0.5906);
\draw[black!12, line width=0.35pt, rotate around={-45:(0.4200,1.5500)}] (0.4200,1.5500) ellipse (1.8898 and 0.7625);
\draw[black!12, line width=0.35pt, rotate around={-45:(0.4200,1.5500)}] (0.4200,1.5500) ellipse (2.3146 and 0.9339);
\draw[tzcode, line width=0.7pt, rotate around={-45:(0.4200,1.5500)}] (0.4200,1.5500) ellipse (0.5940 and 0.2397);
\fill[tzscale!55] (0.0000,0.0000) circle (1.1pt);
\fill[tzscale!55] (0.0000,0.9850) circle (1.1pt);
\fill[tzscale!55] (0.0000,1.9700) circle (1.1pt);
\fill[tzscale!55] (0.9850,0.0000) circle (1.1pt);
\fill[tzscale!55] (0.9850,0.9850) circle (1.1pt);
\fill[tzscale!55] (0.9850,1.9700) circle (1.1pt);
\fill[tzscale!55] (1.9700,0.0000) circle (1.1pt);
\fill[tzscale!55] (1.9700,0.9850) circle (1.1pt);
\fill[tzscale!55] (1.9700,1.9700) circle (1.1pt);
\draw[tzhost, line width=0.6pt] (0.0000,2.4000) circle (1.9pt);
\node[font=\tiny, text=black!60, anchor=west] at (0.0000,2.4000) {old grid: worse};
\draw[->, tzcode, line width=1.0pt, shorten >=2pt, shorten <=2pt] (0.9850,0.9850) -- (0.0000,1.9700);
\draw[tzscale, line width=0.6pt, fill=white] (0.9850,0.9850) circle (1.8pt);
\node[font=\small, inner sep=0pt] at (0.4200,1.5500) {$\star$};
\node[font=\tiny, anchor=west] at (0.4900,1.6000) {$\mathbf{w}$};
\fill[tzcode] (0.0000,1.9700) circle (1.9pt);
\end{scope}
\draw[black!30] (-0.55,-0.15) rectangle (2.55,2.70);
\node[anchor=south, font=\scriptsize, align=center] at (1.00,2.72) {\textbf{(c)} Babai step, new grid};
\node[font=\tiny, align=center, text=black!75, anchor=north] at (1.00,-0.20) {both codes move: $(1,1)\to(0,2)$};
\end{scope}
\draw[->, black!55, line width=0.5pt] (7.55,3.12) to[bend right=18] node[midway, above, font=\tiny] {repeat, $\le K$ times} (4.80,3.12);
\begin{scope}[xshift=10.40cm]
\draw[black!30] (0,-0.15) rectangle (2.25,2.70);
\node[anchor=south, font=\scriptsize] at (1.12,2.72) {\textbf{(d)} Objective};
\draw[tzhost, dash pattern=on 2pt off 1.5pt, line width=0.6pt] (0,2.3946) -- (2.25,2.3946);
\node[font=\tiny, text=black!60, anchor=south east] at (2.22,2.3946) {host};
\draw[tzhost, line width=1.3pt] (0.0300,2.3946) -- (0.4200,2.3946);
\node[font=\tiny, text=black!70, rotate=90, anchor=east] at (0.2250,-0.19) {host};
\draw[tzscale, line width=1.3pt] (0.4800,1.1967) -- (0.8700,1.1967);
\draw[black!35, line width=0.4pt] (0.4500,2.3946) -- (0.4500,1.1967);
\node[font=\tiny, text=black!70, rotate=90, anchor=east] at (0.6750,-0.19) {scale};
\draw[tzcode, line width=1.3pt] (0.9300,0.5942) -- (1.3200,0.5942);
\draw[black!35, line width=0.4pt] (0.9000,1.1967) -- (0.9000,0.5942);
\node[font=\tiny, text=black!70, rotate=90, anchor=east] at (1.1250,-0.19) {code};
\draw[tzscale, line width=1.3pt] (1.3800,0.4900) -- (1.7700,0.4900);
\draw[black!35, line width=0.4pt] (1.3500,0.5942) -- (1.3500,0.4900);
\node[font=\tiny, text=black!70, rotate=90, anchor=east] at (1.5750,-0.19) {scale};
\draw[tzcode, line width=1.3pt] (1.8300,0.4900) -- (2.2200,0.4900);
\draw[black!35, line width=0.4pt] (1.8000,0.4900) -- (1.8000,0.4900);
\node[font=\tiny, text=black!70, rotate=90, anchor=east] at (2.0250,-0.19) {code};
\node[font=\tiny, text=black!70, anchor=south] at (1.5750,0.6442) {$-75\%$};
\end{scope}
\end{tikzpicture}
\vspace{-2pt}
\caption{\method{} on one group of two weights with correlated inputs.
Ellipses are error level sets around $\mathbf{w}$ ($\star$); dots are grid
points. (a)~The host codes are optimal on the host grid. (b)~Refitting
the scale moves the grid. (c)~On the new grid, one Babai step moves both
codes (Proposition~\ref{prop:grid-staleness}). (d)~The objective stays at or below the host
level.}
\label{fig:teaser}
\end{figure*}

We propose \method{} (\methodfull{}), a plug-in refinement for any
group-wise quantizer. Starting from the host's scales, codes, and zero
points, it minimizes the layer's output reconstruction error by
alternating two updates: a joint least-squares fit of
all group scales of an output channel, and a bounded Babai
(nearest-plane) proposal that can move many codes of a group at once
on the current grid. The objective is a bilinear box-constrained
mixed-integer least-squares (BBMILS) problem; the bit width, groups,
zero points, and inference cost stay unchanged.

Our contributions are:
\begin{itemize}
    \item We address grid staleness, cross-group coupling, and
    single-code stationarity together in one BBMILS refinement problem
    that any group-wise quantizer can enter as a host.

    \item We give a backpropagation-free alternating solver and show why
    the grid must move: codes that no search on the host grid can improve
    may still be improved after a scale refit
    (Proposition~\ref{prop:grid-staleness}), and with exact scale solves
    does not increase the layer's calibration objective
    (Theorem~\ref{thm:descent}). On full models, alternation beats a scale refit
    alone in all 16 comparisons, while code updates alone on the host grid
    raise perplexity in 5 of 16 (Section~\ref{sec:ablations}).

    \item Across six models, four hosts, and two bit widths,
    \method{} lowers perplexity in 90 of 96 comparisons and cuts RTN's
    W3A16 WikiText-2 perplexity by up to 36\%. It raises mean
    multiple-choice accuracy in 23 of 24 configurations on three models
    and also improves QEP, QuaRot, and OJBKQ outputs at under a minute per
    7B block (Section~\ref{sec:experiments}).
\end{itemize}

\section{Related Work}
\label{sec:related_work}

\paragraph{Layer-wise PTQ.}
GPTQ and OBC reduce layer-wise reconstruction error with second-order
sequential quantization~\citep{frantar-gptq,frantar2022obc}; AWQ uses
activation-aware scaling and OmniQuant learns clipping and equivalent
transformations~\citep{lin2023awq,OmniQuant}. QEP corrects the target
for errors from preceding layers~\citep{arai2025qep}, and QuaRot and
SpinQuant rotate weights and activations~\citep{ashkboos2024quarot,liu2024spinquant}.
AdaRound and BRECQ learn rounding by layer or block
reconstruction~\citep{nagel2020adaround,li2021brecq}, and
SignRound/AutoRound tunes rounding offsets and clipping with signed
gradient descent~\citep{cheng2024signround}.
These methods decide the target, the representation, or the initial
codes; \method{} starts from their output and re-optimizes the grid and
the codes together.

\paragraph{Integer search on a lattice.}
GPTQ, run from the last to the first dimension, is Babai's
nearest-plane algorithm~\citep{babai1986lovasz} on a lattice defined by
the Hessian~\citep{chen2025geometry}; QuIP relates its LDLQ rounding
to GPTQ, and QuIP\# quantizes to lattice
codebooks~\citep{chee2023quip,tseng2024quipsharp}. OJBKQ applies Babai--Klein
decoding with precomputed scales~\citep{OJBKQ}; Qronos and CoreQ improve
sequential rounding with error correction~\citep{Qronos2026,cha2026coreq};
ADMM-Q alternates continuous updates with discrete projections and
notes that the grid can become stale, allowing one grid
refresh~\citep{lucas2026admmq}. The other methods decode on a grid that
is fixed during the search. \method{} instead refits all group scales
of a channel jointly by least squares at every iteration, so each new
grid opens new code improvements (Proposition~\ref{prop:grid-staleness}).

\paragraph{Joint scale--code optimization.}
Alternating between codes and their coefficients goes back to
multi-bit quantization of recurrent networks~\citep{xu2018alternating};
HQQ fixes the scales and optimizes the zero points without calibration
data by half-quadratic splitting~\citep{badri2023hqq}. For LLMs, QuantEase runs
coordinate descent over codes on the layer reconstruction objective
with the grid held fixed~\citep{behdin2023quantease}.
\method{} instead fits all group scales of a channel jointly under
the output objective and moves a group's codes as a block on each
refitted grid via Babai proposals.

\section{Methodology}
\label{sec:method}

We first write refinement of a quantized layer as one reconstruction
problem over scales and codes (Sections~\ref{sec:preliminaries}
and~\ref{sec:bbmils}). The problem mixes a continuous grid with discrete
assignments, so we solve it by alternating two updates, each designed
around one of the observations in Section~\ref{sec:introduction}: a joint
scale fit that accounts for coupled groups, and a Babai proposal that
moves several codes at once. Repeating the pair lets the grid follow the
codes and the codes follow the grid (Section~\ref{sec:jarq_solver}).

\subsection{Preliminaries and Problem Formulation}
\label{sec:preliminaries}

Let $\mathbf{W}=[\mathbf{w}_1,\ldots,\mathbf{w}_{d_{\mathrm{out}}}]
\in\mathbb{R}^{d_{\mathrm{in}}\times d_{\mathrm{out}}}$ denote the original weight matrix of a linear layer, and let $\mathbf{x}^{\top}\in\mathbb{R}^{1\times d_{\mathrm{in}}}$ denote an input row vector to this layer.
Partition each column $\mathbf{w}_j$, $j=1,\ldots,d_{\mathrm{out}}$,
into $G$ groups
$\mathbf{w}_{i,j}\in\mathbb{R}^{d_i}$, $i=1,\ldots,G$,
where $\sum_{i=1}^{G}d_i=d_{\mathrm{in}}$.
Groups typically have equal size, in which case
$d_i=d_{\mathrm{in}}/G$ for all $i$.
For bit width $b$, define the integer set
$\mathbb{B}=\{0,1,\ldots,2^b-1\}$.
A group-wise uniform quantizer represents each weight column $\w_j$ by
\begin{equation}
\widehat{\mathbf{w}}_j
=
\begin{bmatrix}
s_{1,j}(\mathbf{q}_{1,j}-z_{1,j}\mathbf{1})\\
\vdots\\
s_{\sG,j}(\mathbf{q}_{\sG,j}-z_{\sG,j}\mathbf{1})
\end{bmatrix}
=\mathbf{D}_{\mathbf{s}_j}(\mathbf{q}_j-\mathbf{z}_j), \ \ 
\quad \mathbf{q}_{i,j}\in\mathbb{B}^{d_i},
\label{eq:quantized_group}
\end{equation}
where
$$
\mathbf{s}_j= 
\begin{bmatrix} 
s_{1,j} \\ \vdots \\ s_{\sG,j}
\end{bmatrix}, 
\quad 
\mathbf{D}_{\mathbf{s}_j} =
\begin{bmatrix} 
s_{1,j}\mathbf{I}_{d_1} \\ & \!\! \ddots \!\! \\ & &  s_{\sG,j}\mathbf{I}_{d_{\sG}}
\end{bmatrix}, 
\quad
\mathbf{q}_j= 
\begin{bmatrix} 
\mathbf{q}_{1,j} \\ \vdots \\ \mathbf{q}_{\sG,j} 
\end{bmatrix},
\quad 
\mathbf{z}_j =
\begin{bmatrix}
z_{1,j} \mathbf{1}_{d_1} \\ \vdots \\ z_{\sG,j} \mathbf{1}_{d_{\sG}}
\end{bmatrix}.
$$
The scale sets the grid spacing, the zero point sets its
offset, and the integer code selects a grid value. Collect these variables
as $\mathbf{S}=[\mathbf{s}_1,\ldots,\mathbf{s}_{d_{\mathrm{out}}}]
\in\mathbb{R}^{G\times d_{\mathrm{out}}}$,
$\mathbf{Z}=(z_{i,j})\in\mathbb{Z}^{G\times d_{\mathrm{out}}}$, and $\mathbf{Q}=[\mathbf{q}_1,\ldots,\mathbf{q}_{d_{\mathrm{out}}}]
\in\mathbb{B}^{d_{\mathrm{in}}\times d_{\mathrm{out}}}$.
For later use, we write $\hbW=[\hbw_1,\ldots,\hbw_\dout]
\in \Rbb^{\din \times \dout}$.

Let $\mathbf{X}=[\mathbf{x}_1,\ldots,\mathbf{x}_n]^\top
\in\mathbb{R}^{n\times d_{\mathrm{in}}}$ stack the calibration input
vectors to this layer in the original model as rows.
Let $\widetilde{\mathbf{X}}$ stack the corresponding inputs obtained
using the quantized preceding layers in the same way.
They coincide for the first layer and drift apart in later layers.

Given an existing quantized layer, we refine its scales $\mathbf{S}$ and
integer codes $\mathbf{Q}$ while fixing the bit width, groups, and zero
points $\mathbf{Z}$. The original weight matrix $\mathbf{W}$ remains the reference.
The goal is to choose scales and codes together so that the quantized
layer preserves the reference response.

\subsection{Objective}
\label{sec:bbmils}

We preserve the layer's response
$\mathbf{Y}^{\star}=\widetilde{\mathbf{X}}\mathbf{W}$ to the inputs it
receives in the quantized model by solving
\begin{equation}
\begin{aligned}
& \min_{\mathbf{S},\mathbf{Q}} 
\mathcal{F}(\mathbf{S},\mathbf{Q})
=\|\widetilde{\mathbf{X}}\widehat{\mathbf{W}}
       -\widetilde{\mathbf{X}}\mathbf{W}\|_F^2
  +\nu^2\|\widehat{\mathbf{W}}-\mathbf{W}\|_F^2,\\
& \text{ s.t.}\quad
\mathbf{S}\in\mathbb{R}^{G\times d_{\mathrm{out}}},
\quad \mathbf{Q}\in\mathbb{B}^{d_{\mathrm{in}}\times d_{\mathrm{out}}},
\end{aligned}
\label{eq:adaptive_grid_problem}
\end{equation}
where $\widehat{\mathbf{W}}$ is given by \eqref{eq:quantized_group}.
The output term compares $\widetilde{\mathbf{X}}\widehat{\mathbf{W}}$ with $\widetilde{\mathbf{X}}\mathbf{W}$,
so the target needs no full-precision activations. The output error accounts for correlations
between input coordinates. The weight penalty ($\nu>0$)
restrains changes in directions weakly constrained by the calibration set.

Define the augmented target and design matrices
\begin{equation}
\mathbf{Y}=
\begin{bmatrix}\mathbf{Y}^{\star}\\ \nu\mathbf{W}\end{bmatrix}
\in\mathbb{R}^{m\times d_{\mathrm{out}}},
\qquad
\mathbf{A}=
\begin{bmatrix}\widetilde{\mathbf{X}}\\ \nu\mathbf{I}\end{bmatrix}
\in\mathbb{R}^{m\times d_{\mathrm{in}}},
\qquad m=n+d_{\mathrm{in}}.
\label{eq:strict_target}
\end{equation}
Thus $\mathbf{Y}$ contains the reference outputs followed by the weighted
reference weights; it is constructed from the calibration inputs and
the original layer weights. Writing
$\mathbf{Y}=[\mathbf{y}_1,\ldots,\mathbf{y}_{d_{\mathrm{out}}}]$,
the objective becomes
\begin{equation}
\mathcal{F}(\mathbf{S},\mathbf{Q})
=\|\mathbf{Y}-\mathbf{A}\widehat{\mathbf{W}}\|_F^2
=\sum_{j=1}^{d_{\mathrm{out}}}
\|\mathbf{y}_j-\mathbf{A}\hbw_j\|_2^2
=\sum_{j=1}^{d_{\mathrm{out}}}
\|\mathbf{y}_j-\mathbf{A}\mathbf{D}_{\mathbf{s}_j}
  (\mathbf{q}_j-\mathbf{z}_j)\|_2^2.
\label{eq:bbmils_columns}
\end{equation}
Both $\mathbf{A}$ and $\mathbf{Y}$ stay fixed during a layer's refinement.
The target contains no correction for errors of preceding layers;
mixing in the full-precision response $\mathbf{X}\mathbf{W}$ yields QEP's
propagation-corrected target~\citep{arai2025qep}, for which all results below hold unchanged
(Appendix~\ref{sec:target_dependence}).

Because $\widehat{\mathbf{W}}$ multiplies real scales by bounded integer
codes, \eqref{eq:bbmils_columns} is a bilinear box-constrained
mixed-integer least-squares (BBMILS) problem; fixing the scales
leaves a closest-vector problem.

\subsection{Alternating Update}
\label{sec:jarq_solver}
\label{sec:complete_algorithm}

We now approximately minimize \eqref{eq:adaptive_grid_problem}.
Scales determine the grid, and codes determine the assignments to it.
A host settles the grid and rounds the codes onto it; but once the codes
move, a different grid fits the output best, and on that grid different
codes become optimal. \method{} therefore alternates the two: at
iteration $k$, it refits $\mathbf{S}$ for the codes
$\mathbf{Q}^{(k-1)}$, then updates $\mathbf{Q}$ on the new grid
$\mathbf{S}^{(k)}$. The objective in \eqref{eq:bbmils_columns} separates
over output columns, so each column can be refined independently.

We call the PTQ method supplying the initial quantized layer the
\emph{host}. It may use rounding, error compensation, learned clipping,
or another procedure. We require only its group-wise uniform
representation $(\mathbf{S}^{(0)},\mathbf{Q}^{(0)},\mathbf{Z})$,
and initialize directly from these scales, codes, and zero points. The two updates are
described below, followed by their core loop in
Algorithm~\ref{alg:jarq}. Alternation matters even when the host's
codes are already optimal for the host's grid, as the following
construction shows:

\begin{proposition}[Refitting the grid unlocks descent]
\label{prop:grid-staleness}
There exist a group problem of the form \eqref{eq:group_bils} and a host
state $(s_0,\mathbf{q}_0)$ such that $\mathbf{q}_0$ minimizes the error
over $\mathbb{B}^{d_i}$ at scale $s_0$, yet after a least-squares scale
refit, the bounded Babai
proposal changes two codes and strictly lowers the error.
\end{proposition}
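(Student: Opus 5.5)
The proof is by explicit construction, modeled on Figure~\ref{fig:teaser}. I take a single group with $d_i=2$, bit width $b=2$ (so $\mathbb{B}=\{0,1,2,3\}$), zero point $z=0$, and all other groups of the column held fixed. Their contribution is absorbed into the residual target, so the group problem \eqref{eq:group_bils} reduces to $\min_{s,\mathbf{q}}\|\mathbf{r}-s\mathbf{R}\mathbf{q}\|_2^2$. I choose the design directly in upper-triangular form $\mathbf{R}=\begin{bmatrix}1&\rho\\0&\tau\end{bmatrix}$, which is harmless since any $\mathbf{A}$ can be replaced by the $\mathbf{R}$ factor of its QR decomposition. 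I write $\mathbf{r}=\mathbf{R}\mathbf{w}$ for a real target $\mathbf{w}$, so the error is $\|\mathbf{R}(\mathbf{w}-s\mathbf{q})\|_2^2$. A negative $\rho$ of moderate size gives the anti-diagonal ellipses of the figure, and as a starting guess I take $\mathbf{w}\approx(0.42,1.55)$, host scale $s_0=1.2$, and host codes $\mathbf{q}_0=(1,1)$.

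The construction has four steps, checked in this order.

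1. \emph{Optimality on the host grid.} I verify that $\mathbf{q}_0$ minimizes $\|\mathbf{R}(\mathbf{w}-s_0\mathbf{q})\|^2$ over all $16$ points of $\mathbb{B}^2$. In particular, every single-code move from $(1,1)$ and the two-code move to $(0,2)$ must be strictly worse at $s_0$. This is a finite check. The elongated ellipse is what lets $(1,1)$ beat $(0,1)$ and $(1,2)$ even though $\mathbf{w}/s_0\approx(0.35,1.29)$ lies far from it coordinatewise.

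2. \emph{Least-squares refit.} With the codes frozen, the refit has the closed form $s_1=\langle\mathbf{R}\mathbf{q}_0,\mathbf{R}\mathbf{w}\rangle/\|\mathbf{R}\mathbf{q}_0\|^2$. It strictly lowers the error because $s_0$ is not stationary for $\mathbf{q}_0$. The target is $s_1<s_0$, near $0.985$ as in the figure.

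3. \emph{Bounded Babai proposal at $s_1$.} Because $\mathbf{R}$ is triangular, the proposal is explicit. First $q_2=\operatorname{clip}\operatorname{round}(w_2/s_1)$; with $w_2/s_1\approx1.57$ this gives $q_2=2$. Then $q_1=\operatorname{clip}\operatorname{round}\big(w_1/s_1+\rho(w_2/s_1-q_2)\big)$, where the argument is about $0.43-0.43\rho$. So $q_1=0$ provided this quantity stays in $(-\tfrac12,\tfrac12)$, which holds for negative $\rho$ of moderate size.

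4. \emph{Strict descent.} I verify $\|\mathbf{R}(\mathbf{w}-s_1(0,2))\|^2<\|\mathbf{R}(\mathbf{w}-s_1(1,1))\|^2$. Then the proposal changes two codes and lowers the error strictly. Chaining with step 2 also places the new error below the host error.

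The main obstacle is making steps 1 and 3 hold simultaneously, because $s_1$ is not free: it is determined by $(\mathbf{R},\mathbf{w},\mathbf{q}_0)$. The conditions pull against each other:
\begin{itemize}
\item $(1,1)$ must win at $s_0$, which needs a large enough $w_1$ and strong correlation;
\item the refit must shrink the scale enough that $w_2/s_1$ crosses $1.5$;
\item Babai's first-coordinate rounding must land on $0$, which needs a small $w_1/s_1$ and a suitable sign of $\rho$.
\end{itemize}

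My first attempt is to fix $\mathbf{w}$ and $s_0$ as above and scan $\rho\in(-1,0)$ and $\tau>0$. For each pair I compute $s_1$ in closed form and check the strict inequalities in all four steps. The inequalities are open conditions, so any point that satisfies them can then be rounded to simple rational values for a clean verification by hand. If no pair works, I also allow $w_1$ to vary, or allow $s_0$ to be chosen slightly larger, which increases the scale reduction in step 2.
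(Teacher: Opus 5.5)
\textbf{There is a sign error in $\rho$, and it makes the construction impossible.} Your plan has the same shape as the paper's proof: one group with $d_i=2$, $\mathbb{B}=\{0,1,2,3\}$, $z=0$, upper-triangular $\mathbf{R}$, $\mathbf{w}\approx(0.42,1.55)$, $s_0=1.2$, $\mathbf{q}_0=(1,1)$, a refit to $s_1\approx0.985$, and the proposal $(0,2)$. The problem is that you restrict to $\rho<0$, and with that sign step~1 fails everywhere in your scan.

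With $R_{11}=1$ and $R_{12}=\rho$, the matrix $\mathbf{M}=\mathbf{R}^\top\mathbf{R}$ has off-diagonal entry $\rho$. The level sets of $(\mathbf{w}-s\mathbf{q})^\top\mathbf{M}(\mathbf{w}-s\mathbf{q})$ have their long axis along the anti-diagonal only when $\rho>0$, i.e.\ for positively correlated inputs; the figure is drawn for $\rho=0.72$.

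Concretely, $(0,1)$ and $(1,1)$ share $q_2$, so $\tau$ drops out of their comparison, and $f(s_0,(0,1))-f(s_0,(1,1))=-0.432+0.84\rho$. This is negative for every $\rho<0$, indeed for every $\rho\le0.514$. The reason is general: when $w_1/s_0<\tfrac12<1<w_2/s_0$, only a positive cross term can make $(1,1)$ beat $(0,1)$. Your own step~3 points the same way, since $0.43(1-\rho)$ exceeds $\tfrac12$ once $\rho<-0.16$.

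\textbf{Your fallbacks cannot rescue a negative $\rho$.} The following shows no choice of $\mathbf{w}$, $s_0$, $\tau$ realizes the move $(1,1)\to(0,2)$ when $\rho<0$.
\begin{itemize}
\item Along the ray $t\mathbf{1}$ the error depends only on $|ts_0-s_1|$. So optimality of $(1,1)$ against $(0,0)$ and $(2,2)$ forces $\tfrac23s_1<s_0<2s_1$.
\item Optimality against $(0,1)$ gives $w_1+\rho w_2>s_0(\tfrac12+\rho)$, while $q_1^+=0$ needs $w_1+\rho w_2\le s_1(\tfrac12+2\rho)$. Chaining these gives $\tfrac12(s_0-s_1)<\rho(2s_1-s_0)<0$, hence $s_0<s_1$ and $|\rho|<\tfrac18$.
\item Then $w_2\ge\tfrac32s_1>\tfrac32s_0$, so optimality against $(1,2)$ forces $w_1>s_0>\tfrac23s_1$.
\item The Babai condition together with $w_2\le\tfrac52s_1$ then gives $\tfrac16s_1<|\rho|(w_2-2s_1)\le\tfrac12|\rho|s_1$, i.e.\ $|\rho|>\tfrac13$. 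This contradicts $|\rho|<\tfrac18$.
\end{itemize}

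\textbf{The repair.} Take $\rho\in(0,1)$ and $\tau=\sqrt{1-\rho^2}$, so that $\mathbf{M}$ has unit diagonal. This removes the coupling you identified as the main obstacle:
\begin{itemize}
\item $s_1=(w_1+w_2)/2$ for every $\rho$, so $w_2/s_1=2-w_1/s_1$.
\item The Babai argument for $q_1$ becomes $(1-\rho)w_1/s_1$.
\item The errors at $s_1$ are $2(1-\rho)w_1^2$ for $(0,2)$ and $2(1-\rho)(s_1-w_1)^2$ for $(1,1)$.
\end{itemize}
Steps~2--4 then hold as soon as $0<3w_1<w_2$. Only step~1 constrains $\rho$: it needs $\rho>0.514$ from the comparison with $(0,1)$, plus a check of the remaining codes.

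This is exactly the paper's proof. It takes $\rho=18/25$, $\mathbf{w}=(21/50,31/20)^\top$ and $s_0=6/5$, which give:
\begin{itemize}
\item $f(s_0,\mathbf{q}_0)=16889/50000$, with runner-up $19241/50000$ at $(0,2)$;
\item $s_1=197/200$;
\item Babai values $310/197\mapsto2$ and $588/4925\mapsto0$;
\item $f(s_1,(0,2))=3087/31250<89383/500000=f(s_1,(1,1))$.
\end{itemize}
An exact rational instance like this, with all sixteen codes checked at $s_0$, is what turns your scan-and-round plan into an actual proof.
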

Moving the grid causes the descent; Figure~\ref{fig:teaser} draws an
instance, verified in Appendix~\ref{app:grid-staleness-proof}.

\subsubsection{Updating Scales Given Codes}
\label{sec:scale_update}

Groups are stored separately, but their outputs are not independent:
correlated inputs make one group's error depend on the others' scales.
The scale update fits all groups of a channel jointly.

Partition the design matrix consistently with the weight groups:
$\mathbf{A}=[\mathbf{A}_1,\ldots,\mathbf{A}_{\sG}]$, where
$\mathbf{A}_i\in\mathbb{R}^{m\times d_i}$.
At iteration $k$, in the objective $\mathcal{F}(\boldS,\Q)$
we take $\Q=\Q^{(k-1)}$, which is obtained at iteration $k-1$.
Then, from \eqref{eq:quantized_group}, we can write
$$
\A \hbw_j =\mathbf{B}_j^{(k-1)}\mathbf{s}_j, 
$$
where
\begin{equation}
\mathbf{B}_j^{(k-1)}
=\left[
\mathbf{A}_1(\mathbf{q}_{1,j}^{(k-1)}-z_{1,j}\mathbf{1}_{d_1}),
\ldots,
\mathbf{A}_{\sG}(\mathbf{q}_{\sG,j}^{(k-1)}-z_{\sG,j}\mathbf{1}_{d_\sG})
\right]\in\mathbb{R}^{m\times G}.
\label{eq:scale_design}
\end{equation}
From \eqref{eq:bbmils_columns}, we see that the optimal $\s_j$,
to be denoted by $\mathbf{s}_j^{(k)}$, should be the solution 
of the real least-squares problem:
$$ 
\min_{\mathbf{s}\in\mathbb{R}^{G}}
\|\mathbf{y}_j-\mathbf{B}_j^{(k-1)}\mathbf{s}\|_2^2,
$$
Equivalently,  $\mathbf{s}_j^{(k)}$ satisfies the normal equations:
\beq
(\mathbf{B}_j^{(k-1)})^\top\mathbf{B}_j^{(k-1)}
\mathbf{s}_j^{(k)}
=(\mathbf{B}_j^{(k-1)})^\top\mathbf{y}_j.
\label{eq:scale_normal_equations} 
\eeq 
Thus all $G$ scales of $\hbw_j$ are fitted jointly. 
We solve the normal equations by Cholesky factorization. 
If the Gram matrix  $(\mathbf{B}_j^{(k-1)})^\top\mathbf{B}_j^{(k-1)}$
is nearly singular, we add $\lambda_j\I$ to it with
$\lambda_j=10^{-4}$; otherwise $\lambda_j=0$
(Appendix~\ref{app:numerical-updates}). Scales are
unconstrained in sign and are evaluated as signed values;
Appendix~\ref{app:solver-details} maps negative scales to an equivalent
positive form for formats that require one.

Fitting each group's scale separately ignores the off-diagonal terms of
the Gram matrix and
leaves an error gap that vanishes when the groups' column spaces are
orthogonal and is bounded by a quantity quadratic in their coherence
(Propositions~\ref{prop:joint-scale-gap} and~\ref{prop:independent-gap});
Section~\ref{sec:ablations} measures this gap. The joint solve is cheap because
$G$ is small: a 4,096-wide input with groups of 128 weights has $G=32$, so
the $G\times G$ system costs far less than forming $\mathbf{B}_j$
(Appendix~\ref{app:complexity}).

\subsubsection{Updating Integer Codes Given Scales}
\label{sec:code_update}

On a fixed grid, the best change to a group's codes often moves several
codes together, because correlated inputs let one code's error offset
another's. Single-code search stops at such points
(Proposition~\ref{prop:multicode-descent}); the code update below
proposes all codes of a group at once from one triangular solve.

Hold $\mathbf{s}_j^{(k)}$ fixed and visit groups $i=1,\ldots,G$.
The earlier groups in the sweep already have their new codes; the later
groups still have their previous codes. The target for group $i$ is
therefore
\begin{equation}
\begin{aligned}
\mathbf{y}_j^{(i,k)}
=\mathbf{y}_j
&-\sum_{g<i}s_{g,j}^{(k)}
 \mathbf{A}_g(\mathbf{q}_{g,j}^{(k)}-z_{g,j}\mathbf{1}_{d_g}) -\sum_{g>i}s_{g,j}^{(k)}
 \mathbf{A}_g(\mathbf{q}_{g,j}^{(k-1)}-z_{g,j}\mathbf{1}_{d_g}).
\end{aligned}
\label{eq:conditional_residual}
\end{equation}
This subtracts the latest contributions of all other groups.
Ideally, the new group codes would solve
\begin{equation}
\min_{\mathbf{q}\in\mathbb{B}^{d_i}}
\|\mathbf{y}_j^{(i,k)}
 -s_{i,j}^{(k)}\mathbf{A}_i(\mathbf{q}-z_{i,j}\mathbf{1}_{d_i})\|_2^2.
\label{eq:group_bils}
\end{equation}
Problem \eqref{eq:group_bils} is a closest-vector problem over a
bounded set of lattice points. Such problems are NP-hard in
general~\citep{vanemdeboas1981another,micciancio2002complexity}, and
searching the entire integer set is impractical for typical group sizes.
We instead compute one bounded Babai nearest-plane
proposal~\citep{babai1986lovasz} using a triangular system from the
group's QR factor.

For each group, precompute a thin QR factorization, without an additional
column permutation:
\begin{equation}
\mathbf{A}_i=\mathbf{U}_i\mathbf{R}_i,
\qquad \mathbf{U}_i^\top\mathbf{U}_i=\mathbf{I},
\qquad \mathbf{R}_i\in\mathbb{R}^{d_i\times d_i}\ \text{upper triangular}.
\label{eq:group_qr}
\end{equation}
Since $\nu>0$, $\mathbf{A}$ has full column rank, so each $\mathbf{R}_i$ is
nonsingular. These factors are reused across output columns and iterations.
For a nonzero scale, let
$\mathbf{v}=\mathbf{U}_i^\top\mathbf{y}_j^{(i,k)}/s_{i,j}^{(k)}$.
The proposal $\mathbf{q}^{\mathrm{cand}}$ is computed backward:
\begin{equation}
q_\ell^{\mathrm{cand}}
=\operatorname{clip}_{\mathbb{B}}\!\left(
\operatorname{round}\!\left[
z_{i,j}+
\frac{v_\ell-\sum_{h=\ell+1}^{d_i}(\mathbf{R}_i)_{\ell h}
(q_h^{\mathrm{cand}}-z_{i,j})}
{(\mathbf{R}_i)_{\ell\ell}}
\right]\right),
\quad \ell=d_i,\ldots,1.
\label{eq:babai_update}
\end{equation}
Each step subtracts the contribution of the already chosen integer codes
before rounding and clipping. Thus, codes with larger index, chosen first, shape the choices for
smaller indices. We accept the proposal only if it gives a finite,
strictly smaller residual in \eqref{eq:group_bils}; otherwise we keep
$\mathbf{q}_{i,j}^{(k-1)}$. The resulting codes define
$\mathbf{B}_j^{(k)}$ for the next scale fit.

\begin{proposition}[Descent beyond single-code updates]
\label{prop:multicode-descent}
There exists a box-constrained group problem of the form
\eqref{eq:group_bils}, with its scale optimal for the current codes,
such that every single-code change has strictly larger error even
after optimally refitting the scale, while one bounded Babai proposal
strictly reduces the error at the current scale.
\end{proposition}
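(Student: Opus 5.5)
The plan is to prove Proposition~\ref{prop:multicode-descent} by an explicit two-dimensional instance that mirrors Figure~\ref{fig:teaser}(c). Everything in \eqref{eq:group_bils} depends on $\mathbf{A}_i$ only through its triangular factor $\mathbf{R}_i$ and on $\mathbf{y}_j^{(i,k)}$ only through $\mathbf{U}_i^\top\mathbf{y}_j^{(i,k)}$ plus a constant. So it suffices to pick $d_i=2$, $z_{i,j}=0$, a small box such as $\mathbb{B}=\{0,1,2,3\}$ ($b=2$), and an upper-triangular $\mathbf{R}=\mathbf{R}_i$ with target $\mathbf{v}=\mathbf{R}\mathbf{w}$. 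Any such $\mathbf{R}$ is realizable as the group block of some $\mathbf{A}=[\widetilde{\mathbf{X}};\nu\mathbf{I}]$ by choosing the calibration rows suitably, so I will work with $E(s,\mathbf{q})=\|\mathbf{R}\mathbf{w}-s\mathbf{R}\mathbf{q}\|_2^2$ directly.

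\textbf{Step 1: the metric.} I take $\mathbf{R}^\top\mathbf{R}$ to be strongly anisotropic, with a stiff direction $(1,1)/\sqrt2$ weighted by $a^2$ and a soft direction $(-1,1)/\sqrt2$ weighted by $b^2\ll a^2$. Moving along $(-1,1)$ is then cheap, while changing $q_1+q_2$ is expensive. A concrete choice is $\mathbf{R}$ obtained from the Cholesky factor of $a^2\mathbf{u}\mathbf{u}^\top+b^2\mathbf{u}^\perp\mathbf{u}^{\perp\top}$. I will check that its last-column ordering makes Babai decide $q_2$ first along the stiff direction.

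\textbf{Step 2: the current state and the Babai move.} I set $\mathbf{q}_0=(1,1)$ and put $\mathbf{w}$ near $(0,2)$, say $\mathbf{w}=(\epsilon,2-\epsilon)$, or a slightly rescaled version. The current scale is
\[
s_0=\langle\mathbf{R}\mathbf{q}_0,\mathbf{R}\mathbf{w}\rangle/\|\mathbf{R}\mathbf{q}_0\|^2,
\]
which makes it optimal for $\mathbf{q}_0$ by construction; when $a\gg b$ we have $s_0\approx(w_1+w_2)/2\approx1$. I then compute the proposal \eqref{eq:babai_update} at $s_0$ by hand: round $v_2/(s_0R_{22})$, then round the corrected first coordinate. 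The aim is to verify that it returns $(0,2)$, which stays in the box, and that $E(s_0,(0,2))<E(s_0,(1,1))$. This holds because $\mathbf{w}-s_0(0,2)$ is small and lies mostly along the cheap direction, whereas $\mathbf{w}-s_0(1,1)\approx(-1,1)$ has soft-direction length about $\sqrt2$.

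\textbf{Step 3: ruling out single-code moves.} The four box-feasible single-code neighbours are $(0,1)$, $(2,1)$, $(1,0)$ and $(1,2)$. For each, the refitted error has the closed form
\[
\min_s E(s,\mathbf{q})=\|\mathbf{R}\mathbf{w}\|^2-\langle\mathbf{R}\mathbf{q},\mathbf{R}\mathbf{w}\rangle^2/\|\mathbf{R}\mathbf{q}\|^2,
\]
that is, the squared sine of the angle between $\mathbf{R}\mathbf{q}$ and $\mathbf{R}\mathbf{w}$. I need each of these to exceed $E(s_0,(1,1))$. This is the main obstacle. A refit can rescale a neighbour such as $(1,2)$ to about $(2/3,4/3)$, which moves it toward $\mathbf{w}$ along the cheap direction, so the naive extreme choice $\mathbf{w}\approx(0,2)$ fails. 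The refit lets $\mathbf{R}\mathbf{q}$ match $\mathbf{R}\mathbf{w}$ only up to a scalar, so what decides each case is the angle in the $\mathbf{R}$-metric.

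I will therefore tune three parameters: $\mathbf{w}$ (placing it so that $\mathbf{R}\mathbf{w}$ is nearly parallel to $\mathbf{R}(1,1)$, which keeps $(1,1)$'s angle small while $(0,2)$ is exactly parallel after the move), the ratio $a/b$, and the bit width. If needed, I will use the box itself, for example putting $\mathbf{q}_0$ on a boundary code so that some neighbours are infeasible. A clean way to meet all constraints is to take $\mathbf{w}=c\,(0,2)+\delta\,\mathbf{u}$ with a small stiff-direction component $\delta$, chosen so that $(1,1)$ and $(0,2)$ have nearly the same angle but $(0,2)$ is slightly better. The neighbours $(1,2)$ and $(0,1)$ then point in distinctly different $\mathbf{R}$-directions, and I expect their sine terms to be bounded away from zero, but I have not checked this.

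\textbf{Step 4: verification.} Once concrete numbers are fixed, the proof reduces to finitely many scalar inequalities: four refitted neighbour errors, the Babai rounding steps, and one strict decrease. I will state the numbers and tabulate these values, as in the verification of Proposition~\ref{prop:grid-staleness} in Appendix~\ref{app:grid-staleness-proof}.
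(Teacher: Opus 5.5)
Your overall route (an explicit two-dimensional instance, refitted errors via the closed form $\|\mathbf{R}\mathbf{w}\|^2-\langle\mathbf{R}\mathbf{q},\mathbf{R}\mathbf{w}\rangle^2/\|\mathbf{R}\mathbf{q}\|^2$, and a hand-computed Babai step) is the paper's route. However, the specific instance you aim for cannot exist, so no tuning in Step 3 will close the gap.

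With $z_{i,j}=0$, the refitted error $\phi(\mathbf{q})=\min_s E(s,\mathbf{q})$ depends only on the line through $\mathbf{q}$: $\phi(c\mathbf{q})=\phi(\mathbf{q})$ for $c\neq0$. Your Babai target $(0,2)$ is collinear with $(0,1)$, which is a single-code neighbour of $\mathbf{q}_0=(1,1)$ on your own list. Because $s_0$ is optimal for $\mathbf{q}_0$, the descent required in Step 2 is $E(s_0,(0,2))<\phi((1,1))$. Hence
\[
\phi((0,1))=\phi((0,2))\le E(s_0,(0,2))<\phi((1,1)).
\]
So the single-code move $(1,1)\to(0,1)$, with its scale refitted to twice the optimal scale of $(0,2)$, already beats the current state. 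This holds for every choice of $\mathbf{R}$, $a/b$, $\mathbf{w}$, $\delta$, and bit width. Your expectation that $(0,1)$ points in a ``distinctly different $\mathbf{R}$-direction'' is exactly what fails.

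The picture you borrowed, Figure~\ref{fig:teaser}(c), illustrates Proposition~\ref{prop:grid-staleness}, which does not forbid refitted single-code moves. Indeed, in that instance $(0,1)$ with a refit reaches the same $\approx0.085$ as $(0,2)$, well below the $\approx0.179$ of $(1,1)$.

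To repair this, choose the pair so that $\mathbf{q}^+-z\mathbf{1}$ is not parallel to any single-code neighbour of $\mathbf{q}_0-z\mathbf{1}$, or use a nonzero zero point. The paper takes $\mathbf{q}_0=(1,2)^\top$, $\mathbf{q}^+=(2,3)^\top$, $\mathbf{R}=\begin{bmatrix}1&-1\\0&1/10\end{bmatrix}$ and $\mathbf{y}=\mathbf{R}(2,3)^\top$. The soft direction of $\mathbf{R}^\top\mathbf{R}$ is then $(1,1)$, and the two-code move runs along it. This gives $s_0=53/52$ and $\phi(\mathbf{q}_0)=1/104$. The smallest refitted error over all six single-code neighbours is $9/409$, and Babai at $s_0$ returns $(2,3)$ with error $109/270400$.

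Two smaller corrections. First, in a $2$-bit box a single-code change may move a coordinate to any other value, so $(1,1)$ has six neighbours, including $(3,1)$ and $(1,3)$, not four. Second, a group block of $[\widetilde{\mathbf{X}};\nu\mathbf{I}]$ must satisfy $\mathbf{R}_i^\top\mathbf{R}_i\succeq\nu^2\mathbf{I}$, so not every $\mathbf{R}$ is realizable as is. Scaling the instance by a large constant $c$, which multiplies all errors by $c^2$, fixes this, as the paper notes.
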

Correlated input coordinates can therefore make coordinated code
changes beneficial beyond the reach of single-code moves with
conditional scale refitting (Appendix~\ref{app:multicode-proof}).
Without clipping and under a uniform model for rounding errors, Babai's
expected residual is also no larger than that of rounding the
unconstrained solution, and lower by $s_{i,j}^2/12$ times the off-diagonal energy of $\mathbf{R}_i$
(Proposition~\ref{prop:babai-vs-rounding}).
The update applies the nearest-plane recursion that underlies
GPTQ~\citep{chen2025geometry} to one group on the current grid;
Remark~\ref{rem:gptq-babai} discusses the relation and the differences.

\begin{figure}[t]
\noindent
\begin{minipage}[t]{0.43\textwidth}
\vspace{0pt}
\textbf{Layerwise procedure.}
Algorithm~\ref{alg:jarq} summarizes the two updates with a maximum
budget of $K$ iterations; all main experiments use $K=3$.
For efficient computation, factor $\mathbf{A}=\mathbf{U}\mathbf{R}$
and set $\overline{\mathbf{Y}}=\mathbf{U}^{\top}\mathbf{Y}$.
The reduced objective is
\begin{equation}
\mathcal{J}(\mathbf{S},\mathbf{Q})
=\|\overline{\mathbf{Y}}-\mathbf{R}\widehat{\mathbf{W}}\|_F^2.
\label{eq:reduced_problem}
\end{equation}
It differs from $\mathcal{F}$ by a constant, which is zero for the main
target $\mathbf{Y}=\mathbf{A}\mathbf{W}$
(Appendix~\ref{app:solver-details}). We compute the updates with
$(\mathbf{R},\overline{\mathbf{Y}})$ in place of
$(\mathbf{A},\mathbf{Y})$, caching all QR factors and retaining the
constant for stopping. The terminal scales and codes define the
installed weights, whose outputs supply the next layer's calibration
inputs. Each channel stops once the objective after consecutive scale
updates decreases by at most $\tau=10^{-5}$, relative or absolute
(Appendix~\ref{app:numerical-updates}).
The computational cost is analyzed in Appendix~\ref{app:complexity}.
\end{minipage}\hfill
\begin{minipage}[t]{0.545\textwidth}
\vspace{0pt}
\setlength{\intextsep}{0pt}

\input{sections/algorithm}
\end{minipage}
\end{figure}

\textbf{Descent guarantee.}
Write $\mathcal{F}_j(\mathbf{s},\mathbf{q})$ for the $j$th term of
\eqref{eq:bbmils_columns} and
$\phi_j(\mathbf{q})=\min_{\mathbf{s}}\mathcal{F}_j(\mathbf{s},\mathbf{q})$.
We call $(\mathbf{s},\mathbf{q})$ \emph{scale-optimal and Babai-stable}
if $\mathbf{s}$ minimizes $\mathcal{F}_j(\cdot,\mathbf{q})$ and, for every
group with $|s_i|>\epsilon_{\rm fp}$, the bounded Babai proposal computed
at $(\mathbf{s},\mathbf{q})$ does not strictly lower $\mathcal{F}_j$ any further.

\begin{theorem}[Monotone descent and finite termination]
\label{thm:descent}
Fix an output column $j$ and assume that every scale update in
Algorithm~\ref{alg:jarq} is an exact least-squares solve
($\lambda_j=0$). In exact arithmetic,
(i) $\mathcal{F}_j$ does not increase at any scale update or code update;
(ii) $\phi_j$ strictly decreases over every iteration that changes a code;
(iii) with an unlimited budget and no early stopping, codes change in
finitely many iterations, and after the first
iteration without an accepted proposal the state is a fixed point that
is scale-optimal and Babai-stable.
\end{theorem}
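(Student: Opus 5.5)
We prove the three claims in order. Each one follows from the block-coordinate structure of Algorithm~\ref{alg:jarq} on the single column objective $\mathcal{F}_j(\mathbf{s},\mathbf{q})=\|\mathbf{y}_j-\mathbf{A}\mathbf{D}_{\mathbf{s}}(\mathbf{q}-\mathbf{z}_j)\|_2^2$.

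\textbf{Part (i).} Fix $j$ and drop the subscript where harmless. For a scale update, the new $\mathbf{s}^{(k)}$ solves the normal equations \eqref{eq:scale_normal_equations} exactly with $\lambda_j=0$. It is therefore a global minimizer of the convex quadratic $\mathbf{s}\mapsto\|\mathbf{y}_j-\mathbf{B}_j^{(k-1)}\mathbf{s}\|_2^2=\mathcal{F}_j(\mathbf{s},\mathbf{q}^{(k-1)})$. Since the old $\mathbf{s}^{(k-1)}$ is feasible, $\mathcal{F}_j(\mathbf{s}^{(k)},\mathbf{q}^{(k-1)})\le\mathcal{F}_j(\mathbf{s}^{(k-1)},\mathbf{q}^{(k-1)})$. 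If the Gram matrix is singular, any solution of the normal equations is still a minimizer.

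For a code update, the key identity is that, with all other groups held at their current codes, the residual of \eqref{eq:group_bils} for candidate $\mathbf{q}$ equals $\mathcal{F}_j$ evaluated at the current state with group $i$ replaced by $\mathbf{q}$. This holds by the definition \eqref{eq:conditional_residual} of $\mathbf{y}_j^{(i,k)}$: it subtracts exactly the latest contributions of every other group. Each group visit either keeps the codes or accepts a proposal with strictly smaller group residual, so $\mathcal{F}_j$ is nonincreasing through the sweep. It strictly decreases at each accepted proposal. If the implementation works in reduced form with $(\mathbf{R},\overline{\mathbf{Y}})$, we note that $\mathcal{J}$ and $\mathcal{F}$ differ by a constant, so the comparisons are unchanged.

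\textbf{Part (ii).} Consider iteration $k$ and suppose some code changes. Then
\[
\phi_j(\mathbf{q}^{(k-1)})=\mathcal{F}_j(\mathbf{s}^{(k)},\mathbf{q}^{(k-1)})>\mathcal{F}_j(\mathbf{s}^{(k)},\mathbf{q}^{(k)})\ge\phi_j(\mathbf{q}^{(k)}).
\]
The first equality holds because the scale solve is exact. The strict inequality holds because at least one proposal was accepted and each acceptance is strict by part (i). The last inequality is the definition of $\phi_j$ as a minimum over $\mathbf{s}$.

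\textbf{Part (iii).} The codes live in the finite set $\mathbb{B}^{d_{\mathrm{in}}}$. By (ii), $\phi_j$ strictly decreases along the sequence of code vectors produced by changing iterations, so no code vector can repeat. Hence only finitely many iterations change codes.

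Now take the first iteration $k$ with no accepted proposal, so $\mathbf{q}^{(k)}=\mathbf{q}^{(k-1)}$. The state $(\mathbf{s}^{(k)},\mathbf{q}^{(k)})$ is scale-optimal, since $\mathbf{s}^{(k)}$ was just fit exactly to these codes. It is also Babai-stable. The delicate point, and the main obstacle, is that the sweep computes each group's proposal against the conditional residual, and stability must be checked at the fixed state itself. Because no earlier group in the sweep changed, every $\mathbf{y}_j^{(i,k)}$ equals the residual target at $(\mathbf{s}^{(k)},\mathbf{q}^{(k)})$. So the proposal evaluated in the sweep is exactly the bounded Babai proposal at that state, and its rejection means it does not strictly lower $\mathcal{F}_j$. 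Groups with $|s_i|\le\epsilon_{\rm fp}$ are skipped, which matches the definition.

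It remains to show this state is a fixed point. The next scale fit is made for the same codes, so it returns the same minimizer whenever the Gram matrix is nonsingular. In the singular case we assume the solver is deterministic in its inputs, which yields the same $\mathbf{s}$; alternatively we note that every minimizer gives the same $\mathcal{F}_j$. The next code sweep is then identical to the last one, so it again rejects all proposals, and by induction the state persists.

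To prevent cycling among distinct scale vectors in the singular case, we note that the proposal depends on $\mathbf{s}$. We therefore rely on the deterministic-solver assumption, or on the fact that the exact solution is unique when $\mathbf{B}_j$ has full column rank.
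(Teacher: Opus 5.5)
Your proof is correct and follows the paper's argument step for step. The steps are: the exact scale solve gives $\mathcal{F}_j(\mathbf{s}^{(k)},\mathbf{q}^{(k-1)})=\phi_j(\mathbf{q}^{(k-1)})$; the conditional residual makes each accepted proposal a strict decrease of $\mathcal{F}_j$; the strict decrease of $\phi_j$ forbids repeated code vectors in the finite set $\mathbb{B}^{d_{\mathrm{in}}}$; and the first sweep with no accepted proposal is computed entirely at the fixed state. Your hedging about a singular Gram matrix and a deterministic solver is unnecessary: in the algorithm, $\lambda_j=0$ means $\mathbf{G}_j$ passed the eigenvalue and Cholesky tests and is positive definite, so the scale minimizer is unique, which is exactly how the paper closes the fixed-point step.
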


\begin{corollary}[No worse than the host]
\label{cor:host}
In exact arithmetic, if every scale update in the layer is an exact
least-squares solve ($\lambda_j=0$),
the returned state has an objective no larger than that of the host state,
$\mathcal{F}(\mathbf{S},\mathbf{Q})\leq
\mathcal{F}(\mathbf{S}^{(0)},\mathbf{Q}^{(0)})$, for any budget $K$ and any stopping rule.
\end{corollary}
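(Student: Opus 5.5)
The plan is to reduce Corollary~\ref{cor:host} to part~(i) of Theorem~\ref{thm:descent}, applied column by column, and then sum over columns. By \eqref{eq:bbmils_columns}, $\mathcal{F}(\mathbf{S},\mathbf{Q})=\sum_{j}\mathcal{F}_j(\mathbf{s}_j,\mathbf{q}_j)$. Each column's trajectory in Algorithm~\ref{alg:jarq} depends only on that column's own variables, because $\mathbf{A}$, $\mathbf{y}_j$, $\mathbf{z}_j$ and the cached QR factors are fixed. It therefore suffices to show, for every $j$, that the state returned for column $j$ satisfies $\mathcal{F}_j(\mathbf{s}_j,\mathbf{q}_j)\le\mathcal{F}_j(\mathbf{s}_j^{(0)},\mathbf{q}_j^{(0)})$. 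Summing these inequalities over $j$ then gives the claim.

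For a fixed column, I would write the algorithm's run as a finite chain of states
\[
(\mathbf{s}^{(0)},\mathbf{q}^{(0)})\to(\mathbf{s}^{(1)},\mathbf{q}^{(0)})\to(\mathbf{s}^{(1)},\mathbf{q}^{(1)})\to\cdots.
\]
Each arrow is either a scale update or a code update. The returned state is some element of this chain: the truncation point is determined by the budget $K$ or by whichever stopping rule fires, and that could even be right after a scale update. Under the hypothesis $\lambda_j=0$, Theorem~\ref{thm:descent}(i) says that $\mathcal{F}_j$ does not increase along any arrow. By induction along the chain, every state in it, and hence the returned one, has objective at most that of the initial host state.

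The main point to check is that nothing outside these two update types can change the state. I would verify two things.
\begin{itemize}
\item The early-stopping test and the budget only decide \emph{where} the chain is truncated; they do not modify $(\mathbf{s},\mathbf{q})$.
\item Rejected Babai proposals, including those for groups with $|s_i|\le\epsilon_{\rm fp}$ or non-finite residuals, leave the codes unchanged.
\end{itemize}

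One subtlety is the use of the reduced objective $\mathcal{J}$ in \eqref{eq:reduced_problem}. Since $\mathcal{J}$ differs from $\mathcal{F}$ by a constant that does not depend on $(\mathbf{S},\mathbf{Q})$, minimizers and acceptance decisions coincide under the two objectives, so monotonicity transfers between them. A second subtlety is that the installed weights are exactly $\mathbf{D}_{\mathbf{s}_j}(\mathbf{q}_j-\mathbf{z}_j)$. If Appendix~\ref{app:solver-details} maps negative scales to an equivalent positive form, that form represents the same $\widehat{\mathbf{w}}_j$, so it leaves $\mathcal{F}_j$ unchanged.
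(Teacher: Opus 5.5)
Your proposal is correct and follows the paper's own argument: split $\mathcal{F}$ into column terms, note that the returned state is one of the half-step states (the host state itself when $K=0$, or whichever state the budget or early-stopping rule truncates at), apply Theorem~\ref{thm:descent}(i) along that chain, and sum over columns. Your extra checks are consistent with the paper and make explicit what its short proof leaves implicit. These checks are that stopping only truncates, that rejected or skipped proposals are no-ops, that $\mathcal{J}$ and $\mathcal{F}$ differ by a constant, and that the sign conversion preserves $\widehat{\mathbf{w}}_j$.
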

The guarantee is per layer: it compares both states on the calibration
objective with the same inputs $\widetilde{\mathbf{X}}$. Appendix~\ref{app:descent-proof} gives the proofs and the
ridge case. At a Babai-stable point, a group whose proposal needs no
clipping is within $s_{i,j}^2\sum_\ell(\mathbf{R}_i)_{\ell\ell}^2/4$ of
its best real-valued fit (Proposition~\ref{prop:stable-quality}).

\section{Experiments}
\label{sec:experiments}

We test perplexity across six models, four hosts, and two bit widths,
then evaluate downstream accuracy and examine the sources of improvement,
including layer-wise errors and cost.

\subsection{Setup}
\label{sec:experimental_setup}

We quantize Llama-2-7B/13B~\citep{touvron2023llama2}, Llama-3-8B~\citep{grattafiori2024llama3}, Qwen2.5-7B~\citep{qwen2.5}, and
Qwen3-4B/8B~\citep{qwen3} to
W3A16 and W4A16 with groups of 128 weights, using 128 WikiText-2
training sequences of 2,048 tokens. The host is RTN,
GPTQ~\citep{frantar-gptq}, OmniQuant~\citep{OmniQuant} (weight clipping only), or
AWQ~\citep{lin2023awq}; Base is its quantized model and $+$Ours is
\method{} applied to it with the same bit width, groups, and zero points.
All refined models use $K=3$ iterations with early stopping and $\nu=0.6$.
We report WikiText-2 (WT2)~\citep{merity2017pointer} and C4~\citep{raffel2020exploring} perplexity, five multiple-choice tasks,
and generative accuracy on GSM8K and MATH-500 (Appendix~\ref{app:experimental-details}).

\subsection{Main Results}
\label{sec:fullmodel_results}
\label{sec:downstream_results}

\paragraph{Perplexity.}
Table~\ref{tab:main_ppl_matrix} shows lower perplexity in 46 of 48
WT2 comparisons and 44 of 48 C4 comparisons.
The gains are largest for RTN, the only host without calibration data. At W3A16,
refining RTN cuts WT2 perplexity by 36.21\% on Qwen2.5-7B
(11.95$\to$7.63) and by 31.78\% on Llama-3-8B
(12.05$\to$8.22). Part of the RTN gain comes from the calibration data that
refinement adds, but calibrated hosts still leave room: on Qwen3-8B at
W3A16, OmniQuant improves by 8.02\% (11.46$\to$10.55)
and GPTQ by 1.93\% (10.75$\to$10.54). The same holds
at 32B: across OmniQuant and AWQ on Qwen2.5-32B and OmniQuant on
Qwen3-32B, \method{} lowers perplexity in 11 of 12 comparisons, with
the largest gains at three bits (up to 3.0\% on WT2;
Figure~\ref{fig:robustness}c and Appendix~\ref{app:supplementary-results}).

\begin{table*}[t]
\centering
\caption{Perplexity before (Base) and after ($+$Ours) refinement with
$K=3$ (lower is better). Small numbers give the relative change of
$+$Ours over Base in percent (green: lower, red: higher).}
\label{tab:main_ppl_matrix}
\scriptsize
\setlength{\tabcolsep}{1.2pt}
\renewcommand{\arraystretch}{1.05}
\resizebox{\textwidth}{!}{\begin{tabular}{@{}lrrrrrrrrrrrr@{}}
\toprule
 & \multicolumn{2}{c}{Llama-2-7B} & \multicolumn{2}{c}{Llama-2-13B} & \multicolumn{2}{c}{Llama-3-8B} & \multicolumn{2}{c}{Qwen2.5-7B} & \multicolumn{2}{c}{Qwen3-4B} & \multicolumn{2}{c}{Qwen3-8B} \\
\cmidrule(lr){2-3}\cmidrule(lr){4-5}\cmidrule(lr){6-7}\cmidrule(lr){8-9}\cmidrule(lr){10-11}\cmidrule(lr){12-13}
Method & \ooalign{\hphantom{5.472}\cr\hfil WT2\hfil\cr}\hspace{0.6pt}\makebox[\ppldw]{} & \ooalign{\hphantom{6.973}\cr\hfil C4\hfil\cr}\hspace{0.6pt}\makebox[\ppldw]{} & \ooalign{\hphantom{4.884}\cr\hfil WT2\hfil\cr}\hspace{0.6pt}\makebox[\ppldw]{} & \ooalign{\hphantom{6.468}\cr\hfil C4\hfil\cr}\hspace{0.6pt}\makebox[\ppldw]{} & \ooalign{\hphantom{12.048}\cr\hfil WT2\hfil\cr}\hspace{0.6pt}\makebox[\ppldw]{} & \ooalign{\hphantom{16.444}\cr\hfil C4\hfil\cr}\hspace{0.6pt}\makebox[\ppldw]{} & \ooalign{\hphantom{11.954}\cr\hfil WT2\hfil\cr}\hspace{0.6pt}\makebox[\ppldw]{} & \ooalign{\hphantom{10.442}\cr\hfil C4\hfil\cr}\hspace{0.6pt}\makebox[\ppldw]{} & \ooalign{\hphantom{13.638}\cr\hfil WT2\hfil\cr}\hspace{0.6pt}\makebox[\ppldw]{} & \ooalign{\hphantom{16.626}\cr\hfil C4\hfil\cr}\hspace{0.6pt}\makebox[\ppldw]{} & \ooalign{\hphantom{13.456}\cr\hfil WT2\hfil\cr}\hspace{0.6pt}\makebox[\ppldw]{} & \ooalign{\hphantom{13.290}\cr\hfil C4\hfil\cr}\hspace{0.6pt}\makebox[\ppldw]{} \\
\midrule
\multicolumn{13}{@{}l}{\textit{W3A16}} \\
\rowcolor{jarqFP16} FP16 & 5.472\hspace{0.6pt}\makebox[\ppldw][r]{} & 6.973\hspace{0.6pt}\makebox[\ppldw][r]{} & 4.884\hspace{0.6pt}\makebox[\ppldw][r]{} & 6.468\hspace{0.6pt}\makebox[\ppldw][r]{} & 6.136\hspace{0.6pt}\makebox[\ppldw][r]{} & 8.881\hspace{0.6pt}\makebox[\ppldw][r]{} & 6.848\hspace{0.6pt}\makebox[\ppldw][r]{} & 10.442\hspace{0.6pt}\makebox[\ppldw][r]{} & 13.638\hspace{0.6pt}\makebox[\ppldw][r]{} & 16.626\hspace{0.6pt}\makebox[\ppldw][r]{} & 9.715\hspace{0.6pt}\makebox[\ppldw][r]{} & 13.290\hspace{0.6pt}\makebox[\ppldw][r]{} \\
RTN & 6.663\hspace{0.6pt}\makebox[\ppldw][r]{} & 8.405\hspace{0.6pt}\makebox[\ppldw][r]{} & 5.521\hspace{0.6pt}\makebox[\ppldw][r]{} & 7.179\hspace{0.6pt}\makebox[\ppldw][r]{} & 12.048\hspace{0.6pt}\makebox[\ppldw][r]{} & 16.444\hspace{0.6pt}\makebox[\ppldw][r]{} & 11.954\hspace{0.6pt}\makebox[\ppldw][r]{} & 16.054\hspace{0.6pt}\makebox[\ppldw][r]{} & 22.447\hspace{0.6pt}\makebox[\ppldw][r]{} & 27.013\hspace{0.6pt}\makebox[\ppldw][r]{} & 13.456\hspace{0.6pt}\makebox[\ppldw][r]{} & 17.582\hspace{0.6pt}\makebox[\ppldw][r]{} \\
\cellcolor{jarqRefined}\enspace$+$Ours & \cellcolor{jarqRefined}6.204\hspace{0.6pt}\makebox[\ppldw][r]{\pplgood{$-$6.88}} & \cellcolor{jarqRefined}8.217\hspace{0.6pt}\makebox[\ppldw][r]{\pplgood{$-$2.24}} & \cellcolor{jarqRefined}5.270\hspace{0.6pt}\makebox[\ppldw][r]{\pplgood{$-$4.54}} & \cellcolor{jarqRefined}7.158\hspace{0.6pt}\makebox[\ppldw][r]{\pplgood{$-$0.29}} & \cellcolor{jarqRefined}8.219\hspace{0.6pt}\makebox[\ppldw][r]{\pplgood{$-$31.8}} & \cellcolor{jarqRefined}12.667\hspace{0.6pt}\makebox[\ppldw][r]{\pplgood{$-$23.0}} & \cellcolor{jarqRefined}7.625\hspace{0.6pt}\makebox[\ppldw][r]{\pplgood{$-$36.2}} & \cellcolor{jarqRefined}11.916\hspace{0.6pt}\makebox[\ppldw][r]{\pplgood{$-$25.8}} & \cellcolor{jarqRefined}16.194\hspace{0.6pt}\makebox[\ppldw][r]{\pplgood{$-$27.9}} & \cellcolor{jarqRefined}21.625\hspace{0.6pt}\makebox[\ppldw][r]{\pplgood{$-$19.9}} & \cellcolor{jarqRefined}10.712\hspace{0.6pt}\makebox[\ppldw][r]{\pplgood{$-$20.4}} & \cellcolor{jarqRefined}15.103\hspace{0.6pt}\makebox[\ppldw][r]{\pplgood{$-$14.1}} \\
GPTQ & 6.083\hspace{0.6pt}\makebox[\ppldw][r]{} & 7.990\hspace{0.6pt}\makebox[\ppldw][r]{} & 5.248\hspace{0.6pt}\makebox[\ppldw][r]{} & 7.034\hspace{0.6pt}\makebox[\ppldw][r]{} & 7.618\hspace{0.6pt}\makebox[\ppldw][r]{} & 13.242\hspace{0.6pt}\makebox[\ppldw][r]{} & 7.665\hspace{0.6pt}\makebox[\ppldw][r]{} & 11.677\hspace{0.6pt}\makebox[\ppldw][r]{} & 15.153\hspace{0.6pt}\makebox[\ppldw][r]{} & 18.623\hspace{0.6pt}\makebox[\ppldw][r]{} & 10.749\hspace{0.6pt}\makebox[\ppldw][r]{} & 14.720\hspace{0.6pt}\makebox[\ppldw][r]{} \\
\cellcolor{jarqRefined}\enspace$+$Ours & \cellcolor{jarqRefined}6.014\hspace{0.6pt}\makebox[\ppldw][r]{\pplgood{$-$1.13}} & \cellcolor{jarqRefined}7.979\hspace{0.6pt}\makebox[\ppldw][r]{\pplgood{$-$0.14}} & \cellcolor{jarqRefined}5.248\hspace{0.6pt}\makebox[\ppldw][r]{\pplgood{$-$0.01}} & \cellcolor{jarqRefined}7.037\hspace{0.6pt}\makebox[\ppldw][r]{\pplbad{$+$0.04}} & \cellcolor{jarqRefined}7.649\hspace{0.6pt}\makebox[\ppldw][r]{\pplbad{$+$0.41}} & \cellcolor{jarqRefined}12.011\hspace{0.6pt}\makebox[\ppldw][r]{\pplgood{$-$9.30}} & \cellcolor{jarqRefined}7.524\hspace{0.6pt}\makebox[\ppldw][r]{\pplgood{$-$1.85}} & \cellcolor{jarqRefined}11.670\hspace{0.6pt}\makebox[\ppldw][r]{\pplgood{$-$0.05}} & \cellcolor{jarqRefined}15.089\hspace{0.6pt}\makebox[\ppldw][r]{\pplgood{$-$0.42}} & \cellcolor{jarqRefined}18.791\hspace{0.6pt}\makebox[\ppldw][r]{\pplbad{$+$0.90}} & \cellcolor{jarqRefined}10.542\hspace{0.6pt}\makebox[\ppldw][r]{\pplgood{$-$1.93}} & \cellcolor{jarqRefined}14.666\hspace{0.6pt}\makebox[\ppldw][r]{\pplgood{$-$0.36}} \\
OmniQuant & 6.095\hspace{0.6pt}\makebox[\ppldw][r]{} & 7.812\hspace{0.6pt}\makebox[\ppldw][r]{} & 5.322\hspace{0.6pt}\makebox[\ppldw][r]{} & 7.025\hspace{0.6pt}\makebox[\ppldw][r]{} & 8.768\hspace{0.6pt}\makebox[\ppldw][r]{} & 12.476\hspace{0.6pt}\makebox[\ppldw][r]{} & 7.779\hspace{0.6pt}\makebox[\ppldw][r]{} & 11.785\hspace{0.6pt}\makebox[\ppldw][r]{} & 17.221\hspace{0.6pt}\makebox[\ppldw][r]{} & 20.902\hspace{0.6pt}\makebox[\ppldw][r]{} & 11.465\hspace{0.6pt}\makebox[\ppldw][r]{} & 15.454\hspace{0.6pt}\makebox[\ppldw][r]{} \\
\cellcolor{jarqRefined}\enspace$+$Ours & \cellcolor{jarqRefined}6.045\hspace{0.6pt}\makebox[\ppldw][r]{\pplgood{$-$0.82}} & \cellcolor{jarqRefined}7.795\hspace{0.6pt}\makebox[\ppldw][r]{\pplgood{$-$0.22}} & \cellcolor{jarqRefined}5.253\hspace{0.6pt}\makebox[\ppldw][r]{\pplgood{$-$1.30}} & \cellcolor{jarqRefined}7.010\hspace{0.6pt}\makebox[\ppldw][r]{\pplgood{$-$0.22}} & \cellcolor{jarqRefined}7.662\hspace{0.6pt}\makebox[\ppldw][r]{\pplgood{$-$12.6}} & \cellcolor{jarqRefined}11.989\hspace{0.6pt}\makebox[\ppldw][r]{\pplgood{$-$3.91}} & \cellcolor{jarqRefined}7.516\hspace{0.6pt}\makebox[\ppldw][r]{\pplgood{$-$3.38}} & \cellcolor{jarqRefined}11.690\hspace{0.6pt}\makebox[\ppldw][r]{\pplgood{$-$0.81}} & \cellcolor{jarqRefined}16.480\hspace{0.6pt}\makebox[\ppldw][r]{\pplgood{$-$4.30}} & \cellcolor{jarqRefined}20.346\hspace{0.6pt}\makebox[\ppldw][r]{\pplgood{$-$2.66}} & \cellcolor{jarqRefined}10.546\hspace{0.6pt}\makebox[\ppldw][r]{\pplgood{$-$8.02}} & \cellcolor{jarqRefined}14.963\hspace{0.6pt}\makebox[\ppldw][r]{\pplgood{$-$3.18}} \\
AWQ & 6.152\hspace{0.6pt}\makebox[\ppldw][r]{} & 7.771\hspace{0.6pt}\makebox[\ppldw][r]{} & 5.299\hspace{0.6pt}\makebox[\ppldw][r]{} & 6.946\hspace{0.6pt}\makebox[\ppldw][r]{} & 8.039\hspace{0.6pt}\makebox[\ppldw][r]{} & 11.539\hspace{0.6pt}\makebox[\ppldw][r]{} & 7.966\hspace{0.6pt}\makebox[\ppldw][r]{} & 11.759\hspace{0.6pt}\makebox[\ppldw][r]{} & 16.796\hspace{0.6pt}\makebox[\ppldw][r]{} & 20.418\hspace{0.6pt}\makebox[\ppldw][r]{} & 11.103\hspace{0.6pt}\makebox[\ppldw][r]{} & 14.978\hspace{0.6pt}\makebox[\ppldw][r]{} \\
\cellcolor{jarqRefined}\enspace$+$Ours & \cellcolor{jarqRefined}5.924\hspace{0.6pt}\makebox[\ppldw][r]{\pplgood{$-$3.70}} & \cellcolor{jarqRefined}7.741\hspace{0.6pt}\makebox[\ppldw][r]{\pplgood{$-$0.38}} & \cellcolor{jarqRefined}5.220\hspace{0.6pt}\makebox[\ppldw][r]{\pplgood{$-$1.49}} & \cellcolor{jarqRefined}6.865\hspace{0.6pt}\makebox[\ppldw][r]{\pplgood{$-$1.16}} & \cellcolor{jarqRefined}7.491\hspace{0.6pt}\makebox[\ppldw][r]{\pplgood{$-$6.82}} & \cellcolor{jarqRefined}11.591\hspace{0.6pt}\makebox[\ppldw][r]{\pplbad{$+$0.45}} & \cellcolor{jarqRefined}7.518\hspace{0.6pt}\makebox[\ppldw][r]{\pplgood{$-$5.62}} & \cellcolor{jarqRefined}11.637\hspace{0.6pt}\makebox[\ppldw][r]{\pplgood{$-$1.04}} & \cellcolor{jarqRefined}15.513\hspace{0.6pt}\makebox[\ppldw][r]{\pplgood{$-$7.64}} & \cellcolor{jarqRefined}20.166\hspace{0.6pt}\makebox[\ppldw][r]{\pplgood{$-$1.23}} & \cellcolor{jarqRefined}10.533\hspace{0.6pt}\makebox[\ppldw][r]{\pplgood{$-$5.13}} & \cellcolor{jarqRefined}14.776\hspace{0.6pt}\makebox[\ppldw][r]{\pplgood{$-$1.35}} \\
\midrule
\multicolumn{13}{@{}l}{\textit{W4A16}} \\
\rowcolor{jarqFP16} FP16 & 5.472\hspace{0.6pt}\makebox[\ppldw][r]{} & 6.973\hspace{0.6pt}\makebox[\ppldw][r]{} & 4.884\hspace{0.6pt}\makebox[\ppldw][r]{} & 6.468\hspace{0.6pt}\makebox[\ppldw][r]{} & 6.136\hspace{0.6pt}\makebox[\ppldw][r]{} & 8.881\hspace{0.6pt}\makebox[\ppldw][r]{} & 6.848\hspace{0.6pt}\makebox[\ppldw][r]{} & 10.442\hspace{0.6pt}\makebox[\ppldw][r]{} & 13.638\hspace{0.6pt}\makebox[\ppldw][r]{} & 16.626\hspace{0.6pt}\makebox[\ppldw][r]{} & 9.715\hspace{0.6pt}\makebox[\ppldw][r]{} & 13.290\hspace{0.6pt}\makebox[\ppldw][r]{} \\
RTN & 5.726\hspace{0.6pt}\makebox[\ppldw][r]{} & 7.246\hspace{0.6pt}\makebox[\ppldw][r]{} & 4.984\hspace{0.6pt}\makebox[\ppldw][r]{} & 6.588\hspace{0.6pt}\makebox[\ppldw][r]{} & 6.727\hspace{0.6pt}\makebox[\ppldw][r]{} & 9.636\hspace{0.6pt}\makebox[\ppldw][r]{} & 7.227\hspace{0.6pt}\makebox[\ppldw][r]{} & 10.886\hspace{0.6pt}\makebox[\ppldw][r]{} & 15.968\hspace{0.6pt}\makebox[\ppldw][r]{} & 18.291\hspace{0.6pt}\makebox[\ppldw][r]{} & 10.137\hspace{0.6pt}\makebox[\ppldw][r]{} & 13.643\hspace{0.6pt}\makebox[\ppldw][r]{} \\
\cellcolor{jarqRefined}\enspace$+$Ours & \cellcolor{jarqRefined}5.626\hspace{0.6pt}\makebox[\ppldw][r]{\pplgood{$-$1.76}} & \cellcolor{jarqRefined}7.186\hspace{0.6pt}\makebox[\ppldw][r]{\pplgood{$-$0.83}} & \cellcolor{jarqRefined}4.960\hspace{0.6pt}\makebox[\ppldw][r]{\pplgood{$-$0.48}} & \cellcolor{jarqRefined}6.583\hspace{0.6pt}\makebox[\ppldw][r]{\pplgood{$-$0.07}} & \cellcolor{jarqRefined}6.481\hspace{0.6pt}\makebox[\ppldw][r]{\pplgood{$-$3.66}} & \cellcolor{jarqRefined}9.532\hspace{0.6pt}\makebox[\ppldw][r]{\pplgood{$-$1.08}} & \cellcolor{jarqRefined}7.004\hspace{0.6pt}\makebox[\ppldw][r]{\pplgood{$-$3.09}} & \cellcolor{jarqRefined}10.697\hspace{0.6pt}\makebox[\ppldw][r]{\pplgood{$-$1.73}} & \cellcolor{jarqRefined}14.555\hspace{0.6pt}\makebox[\ppldw][r]{\pplgood{$-$8.85}} & \cellcolor{jarqRefined}17.930\hspace{0.6pt}\makebox[\ppldw][r]{\pplgood{$-$1.98}} & \cellcolor{jarqRefined}9.998\hspace{0.6pt}\makebox[\ppldw][r]{\pplgood{$-$1.37}} & \cellcolor{jarqRefined}13.627\hspace{0.6pt}\makebox[\ppldw][r]{\pplgood{$-$0.12}} \\
GPTQ & 5.585\hspace{0.6pt}\makebox[\ppldw][r]{} & 7.133\hspace{0.6pt}\makebox[\ppldw][r]{} & 4.951\hspace{0.6pt}\makebox[\ppldw][r]{} & 6.574\hspace{0.6pt}\makebox[\ppldw][r]{} & 6.423\hspace{0.6pt}\makebox[\ppldw][r]{} & 9.386\hspace{0.6pt}\makebox[\ppldw][r]{} & 7.029\hspace{0.6pt}\makebox[\ppldw][r]{} & 10.842\hspace{0.6pt}\makebox[\ppldw][r]{} & 13.675\hspace{0.6pt}\makebox[\ppldw][r]{} & 16.980\hspace{0.6pt}\makebox[\ppldw][r]{} & 9.941\hspace{0.6pt}\makebox[\ppldw][r]{} & 13.560\hspace{0.6pt}\makebox[\ppldw][r]{} \\
\cellcolor{jarqRefined}\enspace$+$Ours & \cellcolor{jarqRefined}5.564\hspace{0.6pt}\makebox[\ppldw][r]{\pplgood{$-$0.38}} & \cellcolor{jarqRefined}7.129\hspace{0.6pt}\makebox[\ppldw][r]{\pplgood{$-$0.06}} & \cellcolor{jarqRefined}4.947\hspace{0.6pt}\makebox[\ppldw][r]{\pplgood{$-$0.09}} & \cellcolor{jarqRefined}6.565\hspace{0.6pt}\makebox[\ppldw][r]{\pplgood{$-$0.13}} & \cellcolor{jarqRefined}6.416\hspace{0.6pt}\makebox[\ppldw][r]{\pplgood{$-$0.11}} & \cellcolor{jarqRefined}9.408\hspace{0.6pt}\makebox[\ppldw][r]{\pplbad{$+$0.24}} & \cellcolor{jarqRefined}7.001\hspace{0.6pt}\makebox[\ppldw][r]{\pplgood{$-$0.40}} & \cellcolor{jarqRefined}10.672\hspace{0.6pt}\makebox[\ppldw][r]{\pplgood{$-$1.57}} & \cellcolor{jarqRefined}13.553\hspace{0.6pt}\makebox[\ppldw][r]{\pplgood{$-$0.90}} & \cellcolor{jarqRefined}16.917\hspace{0.6pt}\makebox[\ppldw][r]{\pplgood{$-$0.37}} & \cellcolor{jarqRefined}9.860\hspace{0.6pt}\makebox[\ppldw][r]{\pplgood{$-$0.81}} & \cellcolor{jarqRefined}13.550\hspace{0.6pt}\makebox[\ppldw][r]{\pplgood{$-$0.08}} \\
OmniQuant & 5.592\hspace{0.6pt}\makebox[\ppldw][r]{} & 7.123\hspace{0.6pt}\makebox[\ppldw][r]{} & 4.963\hspace{0.6pt}\makebox[\ppldw][r]{} & 6.572\hspace{0.6pt}\makebox[\ppldw][r]{} & 6.628\hspace{0.6pt}\makebox[\ppldw][r]{} & 9.574\hspace{0.6pt}\makebox[\ppldw][r]{} & 7.082\hspace{0.6pt}\makebox[\ppldw][r]{} & 10.766\hspace{0.6pt}\makebox[\ppldw][r]{} & 14.665\hspace{0.6pt}\makebox[\ppldw][r]{} & 17.909\hspace{0.6pt}\makebox[\ppldw][r]{} & 10.141\hspace{0.6pt}\makebox[\ppldw][r]{} & 13.829\hspace{0.6pt}\makebox[\ppldw][r]{} \\
\cellcolor{jarqRefined}\enspace$+$Ours & \cellcolor{jarqRefined}5.584\hspace{0.6pt}\makebox[\ppldw][r]{\pplgood{$-$0.13}} & \cellcolor{jarqRefined}7.103\hspace{0.6pt}\makebox[\ppldw][r]{\pplgood{$-$0.28}} & \cellcolor{jarqRefined}4.953\hspace{0.6pt}\makebox[\ppldw][r]{\pplgood{$-$0.20}} & \cellcolor{jarqRefined}6.562\hspace{0.6pt}\makebox[\ppldw][r]{\pplgood{$-$0.15}} & \cellcolor{jarqRefined}6.496\hspace{0.6pt}\makebox[\ppldw][r]{\pplgood{$-$1.99}} & \cellcolor{jarqRefined}9.539\hspace{0.6pt}\makebox[\ppldw][r]{\pplgood{$-$0.37}} & \cellcolor{jarqRefined}6.996\hspace{0.6pt}\makebox[\ppldw][r]{\pplgood{$-$1.21}} & \cellcolor{jarqRefined}10.694\hspace{0.6pt}\makebox[\ppldw][r]{\pplgood{$-$0.67}} & \cellcolor{jarqRefined}13.728\hspace{0.6pt}\makebox[\ppldw][r]{\pplgood{$-$6.39}} & \cellcolor{jarqRefined}17.386\hspace{0.6pt}\makebox[\ppldw][r]{\pplgood{$-$2.92}} & \cellcolor{jarqRefined}9.858\hspace{0.6pt}\makebox[\ppldw][r]{\pplgood{$-$2.79}} & \cellcolor{jarqRefined}13.582\hspace{0.6pt}\makebox[\ppldw][r]{\pplgood{$-$1.79}} \\
AWQ & 5.607\hspace{0.6pt}\makebox[\ppldw][r]{} & 7.119\hspace{0.6pt}\makebox[\ppldw][r]{} & 4.972\hspace{0.6pt}\makebox[\ppldw][r]{} & 6.559\hspace{0.6pt}\makebox[\ppldw][r]{} & 6.549\hspace{0.6pt}\makebox[\ppldw][r]{} & 9.410\hspace{0.6pt}\makebox[\ppldw][r]{} & 7.103\hspace{0.6pt}\makebox[\ppldw][r]{} & 10.726\hspace{0.6pt}\makebox[\ppldw][r]{} & 14.620\hspace{0.6pt}\makebox[\ppldw][r]{} & 17.793\hspace{0.6pt}\makebox[\ppldw][r]{} & 9.855\hspace{0.6pt}\makebox[\ppldw][r]{} & 13.576\hspace{0.6pt}\makebox[\ppldw][r]{} \\
\cellcolor{jarqRefined}\enspace$+$Ours & \cellcolor{jarqRefined}5.553\hspace{0.6pt}\makebox[\ppldw][r]{\pplgood{$-$0.97}} & \cellcolor{jarqRefined}7.111\hspace{0.6pt}\makebox[\ppldw][r]{\pplgood{$-$0.12}} & \cellcolor{jarqRefined}4.955\hspace{0.6pt}\makebox[\ppldw][r]{\pplgood{$-$0.34}} & \cellcolor{jarqRefined}6.550\hspace{0.6pt}\makebox[\ppldw][r]{\pplgood{$-$0.13}} & \cellcolor{jarqRefined}6.448\hspace{0.6pt}\makebox[\ppldw][r]{\pplgood{$-$1.55}} & \cellcolor{jarqRefined}9.368\hspace{0.6pt}\makebox[\ppldw][r]{\pplgood{$-$0.45}} & \cellcolor{jarqRefined}7.000\hspace{0.6pt}\makebox[\ppldw][r]{\pplgood{$-$1.46}} & \cellcolor{jarqRefined}10.692\hspace{0.6pt}\makebox[\ppldw][r]{\pplgood{$-$0.32}} & \cellcolor{jarqRefined}13.762\hspace{0.6pt}\makebox[\ppldw][r]{\pplgood{$-$5.87}} & \cellcolor{jarqRefined}17.222\hspace{0.6pt}\makebox[\ppldw][r]{\pplgood{$-$3.21}} & \cellcolor{jarqRefined}9.879\hspace{0.6pt}\makebox[\ppldw][r]{\pplbad{$+$0.24}} & \cellcolor{jarqRefined}13.532\hspace{0.6pt}\makebox[\ppldw][r]{\pplgood{$-$0.32}} \\
\bottomrule
\end{tabular}
}
\vspace{-4pt}
\end{table*}

\paragraph{Downstream accuracy.}
Refinement raises mean multiple-choice accuracy in 23 of 24
configurations (Table~\ref{tab:downstream-selected-five}), including all
eight on Llama-3-8B, by up to +5.57 points (Qwen3-8B, RTN, W3A16). Across
individual tasks, it is better in 82 of 120 comparisons; the one lower
average, Qwen3-8B with OmniQuant at W3A16 ($-$0.84), comes mainly from ARC-E. On generation, refinement raises the W4A16 host average on GSM8K
and MATH-500 for all three models (Figure~\ref{fig:generative-w4}); at
W3A16 it raises GSM8K accuracy in six of eight configurations, by up to
8.56 points (Table~\ref{tab:generative-all}). As with perplexity, the
gains grow as bits shrink: the change in mean multiple-choice accuracy
averages $+1.95$ points at W3A16 and $+0.52$ at W4A16, and the three
largest gains all come from RTN. With the RTN and GPTQ hosts, refinement
raises mean accuracy in all 12 configurations.

\begin{table*}[t]
\centering
\caption{Multiple-choice accuracy (\%; higher is better). Entries are
Base/$+$Ours; Avg.\ is the five-task mean, followed by the change of
$+$Ours in points (green: higher, red: lower). Bold marks the higher value; ties are not bolded.}
\label{tab:downstream-selected-five}
\label{tab:downstream_all}
\label{tab:downstream-qwen-fp16}

\begingroup
\setlength{\tabcolsep}{3.1pt}
\newcommand{\downstreamours}[1]{%
  \begingroup
  \setlength{\fboxsep}{0.4pt}%
  \colorbox{jarqRefined}{\strut #1}%
  \endgroup}
\renewcommand{\arraystretch}{1.12}
\scriptsize

\resizebox{\textwidth}{!}{%
\begin{tabular}{lllccccc!{\hspace{3pt}\vrule width 0.5pt\hspace{3pt}}c}
\toprule
Model & Host & W/A
& ARC-E & ARC-C & MMLU & Wino. & BoolQ & \textbf{Avg.} \\
\midrule

\multirow{9}{*}{Llama-3-8B}
& \cellcolor{jarqFP16}FP16 & \cellcolor{jarqFP16}W16A16 & \cellcolor{jarqFP16}77.53 & \cellcolor{jarqFP16}54.10 & \cellcolor{jarqFP16}65.45 & \cellcolor{jarqFP16}74.03 & \cellcolor{jarqFP16}82.26 & \cellcolor{jarqFP16}70.67 \\
\cmidrule(lr){2-9}
& \multirow{2}{*}{RTN}
& W3A16 & 61.07/\downstreamours{\textbf{67.85}} & 40.53/\downstreamours{\textbf{44.45}} & 47.17/\downstreamours{\textbf{53.11}} & 67.48/\downstreamours{\textbf{71.03}} & 69.11/\downstreamours{\textbf{75.60}} & 57.07/\downstreamours{\textbf{62.41}}\,\pplgood{$+$5.34} \\
& & W4A16 & 77.31/\downstreamours{\textbf{77.78}} & \textbf{51.62}/\downstreamours{51.11} & 62.63/\downstreamours{\textbf{63.57}} & 72.38/\downstreamours{72.38} & 79.69/\downstreamours{\textbf{81.07}} & 68.73/\downstreamours{\textbf{69.18}}\,\pplgood{$+$0.45} \\

& \multirow{2}{*}{GPTQ}
& W3A16 & 63.59/\downstreamours{\textbf{69.32}} & 39.16/\downstreamours{\textbf{43.00}} & 52.81/\downstreamours{\textbf{57.47}} & \textbf{71.35}/\downstreamours{71.19} & 69.88/\downstreamours{\textbf{70.34}} & 59.36/\downstreamours{\textbf{62.26}}\,\pplgood{$+$2.90} \\
& & W4A16 & 77.48/\downstreamours{\textbf{78.75}} & \textbf{53.67}/\downstreamours{53.33} & \textbf{63.71}/\downstreamours{63.45} & 74.11/\downstreamours{\textbf{74.43}} & 81.44/\downstreamours{\textbf{82.51}} & 70.08/\downstreamours{\textbf{70.49}}\,\pplgood{$+$0.41} \\

& \multirow{2}{*}{OmniQuant}
& W3A16 & 63.80/\downstreamours{\textbf{68.52}} & 40.96/\downstreamours{\textbf{43.00}} & \textbf{53.45}/\downstreamours{52.55} & 68.75/\downstreamours{\textbf{70.72}} & \textbf{75.08}/\downstreamours{71.41} & 60.41/\downstreamours{\textbf{61.24}}\,\pplgood{$+$0.83} \\
& & W4A16 & \textbf{76.94}/\downstreamours{76.64} & 50.77/\downstreamours{\textbf{52.82}} & 63.64/\downstreamours{\textbf{63.76}} & 71.98/\downstreamours{\textbf{73.09}} & 80.80/\downstreamours{\textbf{80.83}} & 68.83/\downstreamours{\textbf{69.43}}\,\pplgood{$+$0.60} \\

& \multirow{2}{*}{AWQ}
& W3A16 & 74.71/\downstreamours{\textbf{76.09}} & \textbf{48.63}/\downstreamours{47.53} & 55.33/\downstreamours{\textbf{57.34}} & 71.03/\downstreamours{\textbf{72.06}} & \textbf{80.03}/\downstreamours{78.44} & 65.95/\downstreamours{\textbf{66.29}}\,\pplgood{$+$0.34} \\
& & W4A16 & 75.00/\downstreamours{\textbf{78.07}} & 51.71/\downstreamours{\textbf{52.47}} & \textbf{63.40}/\downstreamours{62.88} & \textbf{72.38}/\downstreamours{72.30} & 80.92/\downstreamours{\textbf{81.38}} & 68.68/\downstreamours{\textbf{69.42}}\,\pplgood{$+$0.74} \\

\midrule

\multirow{9}{*}{Qwen2.5-7B}
& \cellcolor{jarqFP16}FP16 & \cellcolor{jarqFP16}W16A16 & \cellcolor{jarqFP16}77.44 & \cellcolor{jarqFP16}51.11 & \cellcolor{jarqFP16}74.30 & \cellcolor{jarqFP16}72.93 & \cellcolor{jarqFP16}84.65 & \cellcolor{jarqFP16}72.09 \\
\cmidrule(lr){2-9}
& \multirow{2}{*}{RTN}
& W3A16 & 68.73/\downstreamours{\textbf{70.79}} & 47.01/\downstreamours{\textbf{48.21}} & 67.14/\downstreamours{\textbf{67.97}} & 64.48/\downstreamours{\textbf{69.38}} & 77.61/\downstreamours{\textbf{82.84}} & 64.99/\downstreamours{\textbf{67.84}}\,\pplgood{$+$2.85} \\
& & W4A16 & 76.68/\downstreamours{\textbf{78.75}} & 50.00/\downstreamours{\textbf{51.79}} & 72.68/\downstreamours{\textbf{73.14}} & \textbf{71.82}/\downstreamours{71.59} & \textbf{84.01}/\downstreamours{83.15} & 71.04/\downstreamours{\textbf{71.68}}\,\pplgood{$+$0.64} \\

& \multirow{2}{*}{GPTQ}
& W3A16 & 72.47/\downstreamours{\textbf{77.10}} & 48.55/\downstreamours{\textbf{50.77}} & 68.19/\downstreamours{\textbf{68.74}} & 68.59/\downstreamours{\textbf{69.53}} & 83.52/\downstreamours{\textbf{84.25}} & 68.26/\downstreamours{\textbf{70.08}}\,\pplgood{$+$1.82} \\
& & W4A16 & 75.72/\downstreamours{\textbf{77.78}} & 49.74/\downstreamours{\textbf{51.96}} & \textbf{73.31}/\downstreamours{73.16} & \textbf{72.22}/\downstreamours{71.82} & \textbf{83.21}/\downstreamours{82.78} & 70.84/\downstreamours{\textbf{71.50}}\,\pplgood{$+$0.66} \\

& \multirow{2}{*}{OmniQuant}
& W3A16 & 77.57/\downstreamours{\textbf{78.66}} & 50.34/\downstreamours{\textbf{52.56}} & \textbf{69.34}/\downstreamours{68.93} & 68.03/\downstreamours{\textbf{71.11}} & 82.81/\downstreamours{\textbf{83.12}} & 69.62/\downstreamours{\textbf{70.88}}\,\pplgood{$+$1.26} \\
& & W4A16 & 76.60/\downstreamours{\textbf{77.82}} & 49.15/\downstreamours{\textbf{51.71}} & \textbf{73.48}/\downstreamours{72.96} & \textbf{72.22}/\downstreamours{71.90} & 84.98/\downstreamours{84.98} & 71.29/\downstreamours{\textbf{71.87}}\,\pplgood{$+$0.58} \\

& \multirow{2}{*}{AWQ}
& W3A16 & 66.55/\downstreamours{\textbf{70.24}} & \textbf{49.57}/\downstreamours{46.16} & \textbf{69.62}/\downstreamours{68.23} & 67.48/\downstreamours{\textbf{71.51}} & \textbf{81.93}/\downstreamours{80.92} & 67.03/\downstreamours{\textbf{67.41}}\,\pplgood{$+$0.38} \\
& & W4A16 & 77.19/\downstreamours{\textbf{77.48}} & \textbf{51.11}/\downstreamours{50.34} & \textbf{73.32}/\downstreamours{73.07} & \textbf{72.14}/\downstreamours{71.90} & 80.83/\downstreamours{\textbf{82.54}} & 70.92/\downstreamours{\textbf{71.07}}\,\pplgood{$+$0.15} \\

\midrule

\multirow{9}{*}{Qwen3-8B}
& \cellcolor{jarqFP16}FP16 & \cellcolor{jarqFP16}W16A16 & \cellcolor{jarqFP16}80.93 & \cellcolor{jarqFP16}56.57 & \cellcolor{jarqFP16}74.96 & \cellcolor{jarqFP16}67.64 & \cellcolor{jarqFP16}86.61 & \cellcolor{jarqFP16}73.34 \\
\cmidrule(lr){2-9}
& \multirow{2}{*}{RTN}
& W3A16 & 71.93/\downstreamours{\textbf{76.14}} & 48.55/\downstreamours{\textbf{52.13}} & 61.10/\downstreamours{\textbf{68.49}} & 61.56/\downstreamours{\textbf{68.90}} & 79.88/\downstreamours{\textbf{85.20}} & 64.60/\downstreamours{\textbf{70.17}}\,\pplgood{$+$5.57} \\
& & W4A16 & 75.88/\downstreamours{\textbf{78.49}} & 53.07/\downstreamours{\textbf{55.38}} & 73.66/\downstreamours{\textbf{74.23}} & 67.64/\downstreamours{\textbf{67.96}} & 85.29/\downstreamours{\textbf{86.64}} & 71.11/\downstreamours{\textbf{72.54}}\,\pplgood{$+$1.43} \\

& \multirow{2}{*}{GPTQ}
& W3A16 & 71.80/\downstreamours{\textbf{78.24}} & 47.61/\downstreamours{\textbf{51.45}} & 67.11/\downstreamours{\textbf{68.59}} & \textbf{67.88}/\downstreamours{66.38} & 84.77/\downstreamours{\textbf{85.29}} & 67.83/\downstreamours{\textbf{69.99}}\,\pplgood{$+$2.16} \\
& & W4A16 & 79.42/\downstreamours{\textbf{79.84}} & \textbf{55.12}/\downstreamours{54.61} & 73.99/\downstreamours{73.99} & \textbf{68.51}/\downstreamours{68.27} & 86.48/\downstreamours{\textbf{87.16}} & 72.70/\downstreamours{\textbf{72.77}}\,\pplgood{$+$0.07} \\

& \multirow{2}{*}{OmniQuant}
& W3A16 & \textbf{77.48}/\downstreamours{72.56} & \textbf{50.60}/\downstreamours{48.04} & 68.03/\downstreamours{\textbf{69.23}} & 64.88/\downstreamours{\textbf{67.56}} & \textbf{85.35}/\downstreamours{84.74} & \textbf{69.27}/\downstreamours{68.43}\,\pplbad{$-$0.84} \\
& & W4A16 & 78.91/\downstreamours{\textbf{79.42}} & 53.84/\downstreamours{\textbf{55.12}} & \textbf{73.81}/\downstreamours{73.71} & 68.27/\downstreamours{\textbf{68.67}} & 86.45/\downstreamours{\textbf{86.88}} & 72.26/\downstreamours{\textbf{72.76}}\,\pplgood{$+$0.50} \\

& \multirow{2}{*}{AWQ}
& W3A16 & 76.09/\downstreamours{\textbf{76.60}} & 50.43/\downstreamours{\textbf{50.94}} & \textbf{69.10}/\downstreamours{69.00} & 64.01/\downstreamours{\textbf{66.54}} & 84.71/\downstreamours{\textbf{85.11}} & 68.87/\downstreamours{\textbf{69.64}}\,\pplgood{$+$0.77} \\
& & W4A16 & \textbf{80.39}/\downstreamours{79.67} & \textbf{55.55}/\downstreamours{55.12} & 73.66/\downstreamours{\textbf{73.86}} & 68.59/\downstreamours{\textbf{68.98}} & 86.18/\downstreamours{\textbf{86.79}} & 72.87/\downstreamours{\textbf{72.88}}\,\pplgood{$+$0.01} \\

\midrule
\multicolumn{8}{l}{Improved / evaluated configurations (Avg.)}
& 23/24 \\
\bottomrule
\end{tabular}%
}

\endgroup
\vspace{-4pt}
\end{table*}

\begin{figure*}[!t]
\centering
\begingroup
\definecolor{genbase}{HTML}{A5ADB7}
\definecolor{genours}{HTML}{2878B5}
\centering
{\scriptsize
GSM8K: (a)--(c)\quad MATH-500: (d)--(f)\qquad
\textcolor{genbase}{\rule{7pt}{5pt}}\,Base\qquad
\textcolor{genours}{\rule{7pt}{5pt}}\,$+$Ours\qquad
\tikz\draw[black!80,dash pattern=on 2.5pt off 1.5pt,line width=0.8pt](0,0)--(0.4,0);\,FP16\par}
\vspace{1pt}
\begin{tikzpicture}
\begin{groupplot}[
 group style={group size=6 by 1,horizontal sep=0.5cm},
 scale only axis,width=0.113\textwidth,height=2.3cm,
 ybar=0.6pt,/pgf/bar width=4.2pt,
 xmin=-0.6,xmax=2.6,xtick={0,1,2},xticklabels={GPTQ,Omni,AWQ},
 xticklabel style={font=\tiny,rotate=40,anchor=north east,inner sep=1pt},
 yticklabel style={font=\tiny,/pgf/number format/fixed,/pgf/number format/precision=1},
 title style={font=\tiny,align=center,yshift=-3pt},
 axis line style={black!35},tick style={black!35},major tick length=2pt,
 ymajorgrids=true,grid style={black!8},axis on top,
 enlarge x limits=false,scaled y ticks=false,clip=false,
]
\nextgroupplot[title={\textbf{(a)} Llama-3-8B\\42.61$\to$\textbf{43.22}\\{\color{black!60}FP16 47.92}},ymin=40.5,ymax=50.0,ylabel={Accuracy (\%)},ylabel style={font=\scriptsize,yshift=-4pt}]
\addplot[fill=genbase,draw=none] coordinates {(0,42.53) (1,42.68) (2,42.61)};
\addplot[fill=genours,draw=none] coordinates {(0,42.61) (1,43.67) (2,43.37)};
\draw[black!80,dash pattern=on 2.5pt off 1.5pt,line width=0.8pt] (axis cs:-0.6,47.92) -- (axis cs:2.6,47.92);
\node[font=\tiny,text=black!75,anchor=south,inner sep=1.5pt] at (axis cs:0,42.61) {+0.08};
\node[font=\tiny,text=black!75,anchor=south,inner sep=1.5pt] at (axis cs:1,43.67) {+0.99};
\node[font=\tiny,text=black!75,anchor=south,inner sep=1.5pt] at (axis cs:2,43.37) {+0.76};
\nextgroupplot[title={\textbf{(b)} Qwen2.5-7B\\79.71$\to$\textbf{80.94}\\{\color{black!60}FP16 83.70}},ymin=76.5,ymax=85.5]
\addplot[fill=genbase,draw=none] coordinates {(0,80.89) (1,80.06) (2,78.17)};
\addplot[fill=genours,draw=none] coordinates {(0,79.91) (1,81.65) (2,81.27)};
\draw[black!80,dash pattern=on 2.5pt off 1.5pt,line width=0.8pt] (axis cs:-0.6,83.70) -- (axis cs:2.6,83.70);
\node[font=\tiny,text=black!75,anchor=south,inner sep=1.5pt] at (axis cs:0,80.89) {$-$0.98};
\node[font=\tiny,text=black!75,anchor=south,inner sep=1.5pt] at (axis cs:1,81.65) {+1.59};
\node[font=\tiny,text=black!75,anchor=south,inner sep=1.5pt] at (axis cs:2,81.27) {+3.10};
\nextgroupplot[title={\textbf{(c)} Qwen3-8B\\88.53$\to$\textbf{89.39}\\{\color{black!60}FP16 90.30}},ymin=87.5,ymax=91.5]
\addplot[fill=genbase,draw=none] coordinates {(0,88.63) (1,88.78) (2,88.17)};
\addplot[fill=genours,draw=none] coordinates {(0,88.70) (1,89.16) (2,90.30)};
\draw[black!80,dash pattern=on 2.5pt off 1.5pt,line width=0.8pt] (axis cs:-0.6,90.30) -- (axis cs:2.6,90.30);
\node[font=\tiny,text=black!75,anchor=south,inner sep=1.5pt] at (axis cs:0,88.70) {+0.07};
\node[font=\tiny,text=black!75,anchor=south,inner sep=1.5pt] at (axis cs:1,89.16) {+0.38};
\node[font=\tiny,text=black!75,anchor=south,inner sep=1.5pt] at (axis cs:2,90.30) {+2.13};
\nextgroupplot[title={\textbf{(d)} Llama-3-8B\\13.73$\to$\textbf{15.87}\\{\color{black!60}FP16 18.00}},ymin=12.0,ymax=19.5]
\addplot[fill=genbase,draw=none] coordinates {(0,13.40) (1,14.20) (2,13.60)};
\addplot[fill=genours,draw=none] coordinates {(0,16.20) (1,15.20) (2,16.20)};
\draw[black!80,dash pattern=on 2.5pt off 1.5pt,line width=0.8pt] (axis cs:-0.6,18.00) -- (axis cs:2.6,18.00);
\node[font=\tiny,text=black!75,anchor=south,inner sep=1.5pt] at (axis cs:0,16.20) {+2.80};
\node[font=\tiny,text=black!75,anchor=south,inner sep=1.5pt] at (axis cs:1,15.20) {+1.00};
\node[font=\tiny,text=black!75,anchor=south,inner sep=1.5pt] at (axis cs:2,16.20) {+2.60};
\nextgroupplot[title={\textbf{(e)} Qwen2.5-7B\\39.33$\to$\textbf{40.20}\\{\color{black!60}FP16 41.00}},ymin=36.5,ymax=43.0]
\addplot[fill=genbase,draw=none] coordinates {(0,41.40) (1,38.00) (2,38.60)};
\addplot[fill=genours,draw=none] coordinates {(0,41.80) (1,38.80) (2,40.00)};
\draw[black!80,dash pattern=on 2.5pt off 1.5pt,line width=0.8pt] (axis cs:-0.6,41.00) -- (axis cs:2.6,41.00);
\node[font=\tiny,text=black!75,anchor=south,inner sep=1.5pt] at (axis cs:0,41.80) {+0.40};
\node[font=\tiny,text=black!75,anchor=south,inner sep=1.5pt] at (axis cs:1,38.80) {+0.80};
\node[font=\tiny,text=black!75,anchor=south,inner sep=1.5pt] at (axis cs:2,40.00) {+1.40};
\nextgroupplot[title={\textbf{(f)} Qwen3-8B\\72.27$\to$\textbf{72.40}\\{\color{black!60}FP16 74.40}},ymin=71.0,ymax=75.5]
\addplot[fill=genbase,draw=none] coordinates {(0,73.00) (1,71.80) (2,72.00)};
\addplot[fill=genours,draw=none] coordinates {(0,73.00) (1,72.20) (2,72.00)};
\draw[black!80,dash pattern=on 2.5pt off 1.5pt,line width=0.8pt] (axis cs:-0.6,74.40) -- (axis cs:2.6,74.40);
\node[font=\tiny,text=black!75,anchor=south,inner sep=1.5pt] at (axis cs:0,73.00) {tie};
\node[font=\tiny,text=black!75,anchor=south,inner sep=1.5pt] at (axis cs:1,72.20) {+0.40};
\node[font=\tiny,text=black!75,anchor=south,inner sep=1.5pt] at (axis cs:2,72.00) {tie};
\end{groupplot}
\end{tikzpicture}
\endgroup
\caption{Generative accuracy at W4A16 on GSM8K (a--c) and MATH-500 (d--f)
for three hosts: Base (gray), $+$Ours (blue), and FP16 (dashed). Labels give the change
in points; titles give the mean over hosts. Axes are zoomed per panel.
W3A16 GSM8K scores are in Table~\ref{tab:generative-all}.}
\label{fig:generative-w4}
\vspace{-6pt}
\end{figure*}

\subsection{Compatibility with Other Quantizers}
\label{sec:compatibility}
\label{sec:qep_results}
\label{sec:rotation_results}

Because \method{} needs only the host's scales, codes, and zero points,
it plugs into target correction, rotation, and lattice-based code
search alike (Table~\ref{tab:host-compatibility}). Starting from QEP+GPTQ, it
lowers WT2 perplexity in all four settings, by 4.4\% on Qwen3-8B at
W3A16. After QuaRot+GPTQ on Llama-2-7B, it lowers both perplexities
at both bit widths with the rotation fixed. After
OJBKQ~\citep{OJBKQ}, which already searches codes with Babai--Klein
decoding on a fixed grid, refitting the grid still lowers WT2
perplexity in all four settings.
QuaRot and OJBKQ use C4 calibration; for Qwen3-4B GPTQ at W3A16,
\method{} raises GSM8K with WT2, C4, or Pile calibration
(Appendix~\ref{app:calibration-corpus}).

\begin{table}[t]
\centering
\caption{Compatibility results for (a) QEP~\citep{arai2025qep}, (b)
QuaRot~\citep{ashkboos2024quarot} on Llama-2-7B, and (c)
OJBKQ~\citep{OJBKQ} (WT2 perplexity, with C4 also in (b); lower is
better). 
Settings are in Appendix~\ref{app:qep-quarot-protocol}.}
\label{tab:host-compatibility}
\label{tab:qep_compatibility}
\label{tab:quarot-compatibility}
\label{tab:ojbkq-compatibility}
\begingroup
\footnotesize
\setlength{\tabcolsep}{2.5pt}
\renewcommand{\ppl}[1]{\pgfmathprintnumber[verbatim,fixed,precision=2,fixed zerofill]{#1}}
\renewcommand{\arraystretch}{1.05}
\begin{minipage}[t]{0.34\textwidth}
\centering
\textbf{(a) QEP}\par\smallskip
\begin{tabular*}{\linewidth}{@{\extracolsep{\fill}}llrr@{}}
\toprule
Model & W/A & Base & $+$Ours \\
\midrule
Qwen2.5-7B & W3A16 & \ppl{7.6690} & \textbf{\ppl{7.5821}} \\
& W4A16 & \ppl{7.0168} & \textbf{\ppl{6.9967}} \\
\addlinespace[2pt]
Qwen3-8B & W3A16 & \ppl{11.0015} & \textbf{\ppl{10.5186}} \\
& W4A16 & \ppl{9.9354} & \textbf{\ppl{9.8786}} \\
\bottomrule
\end{tabular*}
\end{minipage}\hfill
\begin{minipage}[t]{0.28\textwidth}
\centering
\textbf{(b) QuaRot}\par\smallskip
\begin{tabular*}{\linewidth}{@{\extracolsep{\fill}}llrr@{}}
\toprule
W/A & Data & Base & $+$Ours \\
\midrule
W3A16 & WT2 & \ppl{6.3783} & \textbf{\ppl{6.3404}} \\
& C4 & \ppl{8.5945} & \textbf{\ppl{8.4621}} \\
\addlinespace[2pt]
W4A16 & WT2 & \ppl{5.6300} & \textbf{\ppl{5.6121}} \\
& C4 & \ppl{7.4618} & \textbf{\ppl{7.4494}} \\
\bottomrule
\end{tabular*}
\end{minipage}\hfill
\begin{minipage}[t]{0.36\textwidth}
\centering
\textbf{(c) OJBKQ}\par\smallskip
\begin{tabular*}{\linewidth}{@{\extracolsep{\fill}}llrr@{}}
\toprule
Model & W/A & Base & $+$Ours \\
\midrule
Qwen3-8B& W3A16 & 10.98 & \textbf{10.74}\\
& W4A16 & 9.94 & \textbf{9.85}\\
\addlinespace[2pt]
Llama-3-8B & W3A16 & 7.96 & \textbf{7.65} \\
& W4A16 & 6.79 & \textbf{6.45} \\
\bottomrule
\end{tabular*}
\end{minipage}
\endgroup
\vspace{-4pt}
\end{table}

\subsection{Where Do the Gains Come From?}
\label{sec:ablations}
\label{sec:refinement-budget}
\label{sec:efficiency}

\begin{table*}[t]
\centering
\caption{Ablations. (a) Medians over 16 layers at W3A16
(Appendix~\ref{app:refinement-mechanisms}). (b) Full-model WT2/C4 perplexity on Llama-3-8B (L3)
and Qwen3-4B (Q3); bold: lowest (Appendix~\ref{app:mechanism-full-model}).}
\label{tab:mechanism-summary}
\label{tab:ablation-main}
\begingroup
\scriptsize
\setlength{\tabcolsep}{2.5pt}
\begin{minipage}[t]{0.36\textwidth}
\centering
\textbf{(a) Layer level}\par\smallskip
\begin{tabular*}{\linewidth}{@{\extracolsep{\fill}}lrr@{}}
\toprule
Measurement & RTN & GPTQ \\
\midrule
Indep.\ scale fits: gap (\%) & 20.98 & 25.29 \\
3-pass coord.\ descent: gap (\%) & 1.31 & 0.01 \\
3-pass coord.\ descent: time ratio & 1.29 & 1.32 \\
\midrule
Babai gain after 1-code search (\%) & 40.14 & 13.53 \\
Median codes changed & 30 & 6 \\
\midrule
Joint over Fixed, held-out (\%) & 12.32 & 5.47 \\
Layers improved & 16/16 & 16/16 \\
\bottomrule
\end{tabular*}
\end{minipage}\hfill
\begin{minipage}[t]{0.62\textwidth}
\centering
\textbf{(b) Full model (WT2/C4 perplexity)}\par\smallskip
\begin{tabular*}{\linewidth}{@{\extracolsep{\fill}}lllcccc@{}}
\toprule
& Host & & Host & Codes only & Scales only & Full \method{} \\
\midrule
L3 & RTN & W3 & 12.05/16.44 & 8.69/14.17 & 8.58/13.31 & \textbf{8.22}/\textbf{12.67} \\
 &  & W4 & 6.73/9.64 & 6.59/9.61 & 6.49/9.58 & \textbf{6.48}/\textbf{9.53} \\
 & GPTQ & W3 & \textbf{7.62}/13.24 & 7.71/12.32 & 7.78/12.63 & 7.65/\textbf{12.01} \\
 &  & W4 & 6.42/\textbf{9.39} & 6.42/9.42 & 6.42/9.41 & \textbf{6.42}/9.41 \\
\midrule
Q3 & RTN & W3 & 22.45/27.01 & 21.20/27.08 & 17.97/22.32 & \textbf{16.19}/\textbf{21.62} \\
 &  & W4 & 15.97/18.29 & 14.63/\textbf{17.92} & 14.60/17.96 & \textbf{14.55}/17.93 \\
 & GPTQ & W3 & 15.15/\textbf{18.62} & 15.90/18.95 & 15.09/18.86 & \textbf{15.09}/18.79 \\
 &  & W4 & 13.68/16.98 & 13.66/16.93 & 13.57/16.92 & \textbf{13.55}/\textbf{16.92} \\
\bottomrule
\end{tabular*}
\end{minipage}
\endgroup
\vspace{-6pt}
\end{table*}

\paragraph{Each ingredient is needed.}
Table~\ref{tab:ablation-main}(a) isolates the ingredients on 16 layers of
Llama-2-7B and Qwen3-8B at W3A16 (Appendix~\ref{app:refinement-mechanisms}).
\emph{Groups must be fitted jointly:} fitting each group's scale on its
own leaves a median error gap of 21--25\% of the starting objective,
caused by input correlations between groups
(Proposition~\ref{prop:independent-gap}). \emph{Codes must move
together:} after single-code search stops, one Babai proposal still
lowers the objective in 40\% (RTN) and 14\% (GPTQ) of cases.
\emph{The grid must move:} at equal solver time, alternating updates beat
code updates on the host grid in all 16 layers.
Table~\ref{tab:ablation-main}(b) supports this on full models: full
\method{} beats a scale refit alone in 16 of 16 comparisons, whereas code
updates on the host grid alone raise perplexity in 5 of 16.

\begin{figure*}[t]
\centering
\includegraphics[width=\textwidth]{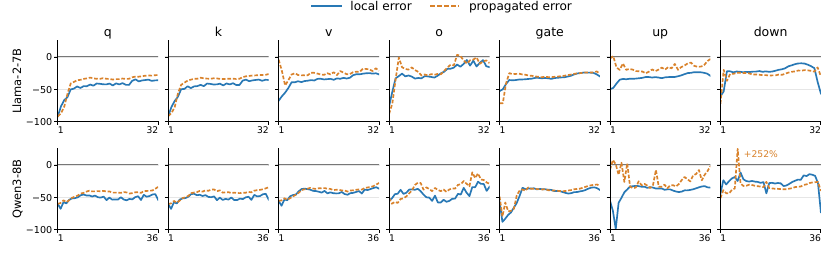}
\vspace{-12pt}
\caption{Per-block error change (\%) from refinement with the AWQ host at
W3A16 on held-out WT2, for the seven linear modules of Llama-2-7B (top)
and Qwen3-8B (bottom); below zero is better. Local error uses fixed
full-precision inputs, propagated error each model's own inputs. Local
error falls in all 476 block--module pairs, propagated error in 468.
Other settings: Appendix~\ref{app:layer-errors}.}
\label{fig:layer-excerpt}
\vspace{-6pt}
\end{figure*}

\paragraph{Gains carry over to held-out data and through depth.}
On held-out WT2 and C4 inputs, refinement lowers the local error of
92.9\% of 7,616 block--module comparisons on Llama-2-7B and Qwen3-8B.
Figure~\ref{fig:layer-excerpt} shows the AWQ host on both models: the
local error of every projection falls at every depth, and the propagated
error falls in 468 of the 476 block--module pairs. At the last block, block-output error falls in 23 of
32 comparisons, including 8/8 for RTN, 7/8 for GPTQ, and 6/8 for AWQ
(Appendix~\ref{app:layer-errors}); OmniQuant, which already fits block
outputs itself, gains least (2/8).

\begin{figure*}[t]
\centering
\begingroup
\definecolor{rbbase}{HTML}{A5ADB7}
\definecolor{rbours}{HTML}{2878B5}
\definecolor{rbc4}{HTML}{9CC3E4}
\centering
\begin{tikzpicture}
\begin{groupplot}[
 group style={group size=3 by 1, horizontal sep=1.3cm},
 scale only axis, height=2.3cm,
 tick label style={font=\tiny}, label style={font=\scriptsize},
 title style={font=\scriptsize, yshift=-4pt},
 axis line style={black!35}, tick style={black!35}, major tick length=2pt,
 ymajorgrids=true, grid style={black!8}, clip=false,
]
\nextgroupplot[width=0.2\textwidth, xmode=log, log basis x=2, xtick={64,128,256}, xticklabels={64,128,256}, xmin=48, xmax=330, xlabel={group size $g$}, xlabel style={yshift=3pt}, title={\textbf{(a)} Llama-2-7B, RTN W3A16}, ymin=5.75, ymax=7.3, ylabel={WT2 PPL}]
\addplot[rbbase, mark=*, mark size=1.3pt, line width=0.9pt] coordinates {(64,6.4009) (128,6.6630) (256,7.1019)};
\addplot[rbours, mark=*, mark size=1.3pt, line width=0.9pt] coordinates {(64,6.0186) (128,6.2043) (256,6.2951)};
\draw[black!45, dash pattern=on 1.5pt off 1.5pt] (axis cs:64,6.4009) -- (axis cs:256,6.4009);
\node[font=\tiny, text=rbours, anchor=north, inner sep=2pt, xshift=6pt] at (axis cs:64,6.0186) {$-$6.0\%};
\node[font=\tiny, text=rbours, anchor=north, inner sep=2pt] at (axis cs:128,6.2043) {$-$6.9\%};
\node[font=\tiny, text=rbours, anchor=north, inner sep=2pt, xshift=-5pt] at (axis cs:256,6.2951) {$-$11.4\%};
\nextgroupplot[width=0.2\textwidth, xmode=log, log basis x=2, xtick={64,128,256}, xticklabels={64,128,256}, xmin=48, xmax=330, xlabel={group size $g$}, xlabel style={yshift=3pt}, title={\textbf{(b)} Qwen3-8B, RTN W3A16}, ymin=9.2, ymax=16.5]
\addplot[rbbase, mark=*, mark size=1.3pt, line width=0.9pt] coordinates {(64,12.3856) (128,13.4564) (256,15.7918)};
\addplot[rbours, mark=*, mark size=1.3pt, line width=0.9pt] coordinates {(64,10.4292) (128,10.7123) (256,11.1354)};
\draw[black!45, dash pattern=on 1.5pt off 1.5pt] (axis cs:64,12.3856) -- (axis cs:256,12.3856);
\node[font=\tiny, text=rbours, anchor=north, inner sep=2pt, xshift=6pt] at (axis cs:64,10.4292) {$-$15.8\%};
\node[font=\tiny, text=rbours, anchor=north, inner sep=2pt] at (axis cs:128,10.7123) {$-$20.4\%};
\node[font=\tiny, text=rbours, anchor=north, inner sep=2pt, xshift=-5pt] at (axis cs:256,11.1354) {$-$29.5\%};
\nextgroupplot[width=0.33\textwidth, title={\textbf{(c)} 32B models}, ybar=0.6pt, /pgf/bar width=5pt, xmin=-0.6, xmax=5.6, xtick={0,...,5}, xticklabels={{Omni\\W3},{Omni\\W4},{AWQ\\W3},{AWQ\\W4},{Omni\\W3},{Omni\\W4}}, xticklabel style={align=center, font=\tiny}, ymin=-3.4, ymax=1.3, ytick={-3,-2,-1,0}, ylabel={PPL change (\%)}, legend style={font=\tiny, draw=none, fill=none, at={(0.99,0.03)}, anchor=south east, legend columns=1, /tikz/every even column/.append style={column sep=4pt}}, legend image code/.code={\draw[#1,draw=none] (0cm,-0.07cm) rectangle (0.2cm,0.07cm);}]
\addplot[fill=rbours, draw=none] coordinates {(0,-1.706) (1,0.258) (2,-3.017) (3,-0.762) (4,-2.327) (5,-0.105)};
\addplot[fill=rbc4, draw=none] coordinates {(0,-0.046) (1,-0.538) (2,-0.003) (3,-0.067) (4,-1.019) (5,-0.285)};
\legend{WT2, C4}
\draw[black!60, line width=0.4pt] (axis cs:-0.6,0) -- (axis cs:5.6,0);
\draw[black!30] (axis cs:3.5,-3.4) -- (axis cs:3.5,1.3);
\node[font=\tiny, text=black!70, anchor=north] at (axis cs:1.5,1.3) {Qwen2.5-32B};
\node[font=\tiny, text=black!70, anchor=north] at (axis cs:4.5,1.3) {Qwen3-32B};
\end{groupplot}
\end{tikzpicture}
\par
{\scriptsize\textcolor{rbbase}{\rule[1.5pt]{9pt}{1pt}}\,Base\qquad\textcolor{rbours}{\rule[1.5pt]{9pt}{1pt}}\,$+$Ours\qquad\tikz\draw[black!45,dash pattern=on 1.5pt off 1.5pt](0,0)--(0.35,0);\,Base at $g=64$\par}
\endgroup
\vspace{-2pt}
\caption{(a, b) WT2 perplexity of RTN at W3A16 versus group size. (c) Perplexity
change from refinement on the 32B models (lower is better).}
\label{fig:robustness}
\vspace{-6pt}
\end{figure*}

\paragraph{Coarser grids gain more, at modest cost.}
With RTN at W3A16, refinement lowers WT2 perplexity by 6.0\%, 6.9\%, and
11.4\% on Llama-2-7B and by 15.8\%, 20.4\%, and 29.5\% on Qwen3-8B as
the group size grows from 64 to 128 to 256 (Figure~\ref{fig:robustness}a,b).
At $g=256$, refined RTN beats unrefined RTN at $g=64$ on WT2 for both
models while storing a quarter of the scales, and every weight-penalty strength
$\nu\in\{0,0.2,\ldots,1\}$ improves both perplexities over RTN
(Appendix~\ref{app:hyperparameter-ablations}). $K=3$ captures most
gains (Figure~\ref{fig:refinement-budgets}) and adds 53--57\,s per
Llama-2-7B block and 36--38\,s per Qwen3-4B block (Table~\ref{tab:host-block-time}); a full Llama-2-7B
RTN run, including calibration, takes 32 min (Table~\ref{tab:group-size-time}).

\section{Conclusion}

Group-wise quantizers round weights onto a grid that is not refit to
the resulting codes. \method{} refits all group scales jointly and
moves codes in blocks on the new grid; code updates are more reliable
after the grid has moved. It lowers perplexity in 90 of 96 comparisons, most at
three bits, with no change to storage or inference cost. Its
guarantee is per layer, and gains are smallest for strong hosts at
four bits. Sharing QEP's target (Appendix~\ref{sec:target_dependence}) and
refitting zero points are next steps.

\section*{AI Use Statement}
Large language model assistants were used to edit the writing, check
the consistency of reported numbers across tables and text, and help
search the literature. All methods, experiments, results, and
conclusions are the authors' own, and the authors verified all
AI-assisted content.

\section*{Reproducibility Statement}
Section~\ref{sec:method} and Algorithm~\ref{alg:jarq} specify the
solver; Appendix~\ref{app:solver-details} gives all numerical settings
and stopping rules; Appendix~\ref{app:experimental-details} gives
models, calibration data, host settings, and evaluation protocols; and
Appendix~\ref{app:theoretical-analysis} contains all proofs of the results
stated in Section~\ref{sec:method}. Code will be available at
\url{https://github.com/Euphoria040201/JARQ}.

\bibliography{references}
\bibliographystyle{jarq}

\clearpage
\appendix

\section{Experimental Settings}
\label{app:experimental-details}

This appendix specifies the quantization settings, the evaluated model
states, and the quality and cost measurements.

\subsection{Calibration and Host Quantizers}
\label{app:strict-protocol}

The four hosts in Table~\ref{tab:main_ppl_matrix} use
128 WikiText-2 (WT2) training sequences of 2,048 tokens, seed 0,
asymmetric weight quantization with group size $g=128$,
3-bit or 4-bit weights, and 16-bit activations.
Quality is evaluated after reconstructing the quantized weights
in the model's floating-point dtype.

RTN rounds weights to the quantization grid.
GPTQ uses activation ordering and damping 0.1; refinement keeps its
group assignments in the reordered coordinates and restores the
original coordinates when writing back the weights.
OmniQuant uses 20 epochs of learnable weight clipping, with learnable
equivalent transformations disabled. AWQ uses its official scale
search and quantization routines with our calibration inputs.
The Qwen OmniQuant and AWQ runs use native scaled dot-product attention.

Refinement starts from the host's scales, codes, and zero points.
It updates scales and codes and keeps the zero points fixed.
The reconstruction objective is \eqref{eq:adaptive_grid_problem}
with $\nu=0.6$. After each layer is refined, its outputs supply the
inputs to the following layer. For every host, the final scales and
codes define the returned weights. Numerical settings and early
stopping are given in Appendix~\ref{app:solver-details}.

\subsection{Perplexity and Iteration Budget}
\label{app:ppl-protocol}

All main experiments use an iteration budget of $K=3$ with early
stopping. This choice follows the budget sweep on Llama-2 in
Figure~\ref{fig:refinement-budgets}, where three iterations capture
most of the WT2 perplexity reduction for RTN hosts, with smaller
gains from additional iterations.
Base denotes the host model ($K=0$), and $+$Ours denotes the model
refined with $K=3$. The QuaRot runs also use $K=3$ and
Algorithm~\ref{alg:jarq}, as detailed in Appendix~\ref{sec:variants}.
The Llama-2-7B and Qwen3-8B RTN entries in Table~\ref{tab:main_ppl_matrix}
are the $g=128$, $\nu=0.6$ states of the group-size and regularization
ablations (Appendix~\ref{app:hyperparameter-ablations}).

\paragraph{Llama-3-8B.}
We use the original base checkpoint, \texttt{meta-llama/Meta-Llama-3-8B}.
Table~\ref{tab:main_ppl_matrix} reports all four hosts at W3A16 and
W4A16. Within each configuration, WT2 and C4 are evaluated on the same
model state. Table~\ref{tab:downstream-selected-five} evaluates the
same states and the original FP16 model on five multiple-choice tasks.

\paragraph{Qwen3-4B.}
We use \texttt{Qwen/Qwen3-4B} with native scaled dot-product attention.
Table~\ref{tab:main_ppl_matrix} includes the FP16 reference and all four
hosts at W3A16 and W4A16. Perplexity is measured from the saved full-model
states after all 36 Transformer blocks have been processed.
The host and refinement streams propagate their own preceding outputs:
host calibration uses the host prefix, while refinement uses the refined
prefix. WT2 and C4 are evaluated on the same saved state for each entry.

\subsection{Downstream Evaluation}
\label{app:downstream-protocol}

\paragraph{Multiple-choice tasks.}
The five tasks are ARC-Easy and ARC-Challenge~\citep{clark2018thinksolvedquestionanswering}, MMLU~\citep{hendryckstest2021}, WinoGrande~\citep{sakaguchi2021winogrande}, and BoolQ~\citep{clark2019boolq}.
ARC-Easy and ARC-Challenge use zero-shot normalized accuracy;
WinoGrande and BoolQ use zero-shot accuracy; MMLU uses five-shot
accuracy. Avg.\ is their unweighted mean,
\[
\mathrm{Avg.}
=\frac{\mathrm{ARC\mbox{-}E}+\mathrm{ARC\mbox{-}C}
+\mathrm{MMLU}+\mathrm{WinoGrande}+\mathrm{BoolQ}}{5}
\]
\paragraph{Generative tasks.}
Table~\ref{tab:generative-all} (Appendix~\ref{app:gsm8k-results})
reports GSM8K accuracy for Llama-3-8B, Qwen2.5-7B, and Qwen3-8B
with GPTQ, OmniQuant, and AWQ at W3A16 and W4A16 (except GPTQ on
Llama-3-8B at W3A16), together with FP16 references.
Figure~\ref{fig:generative-w4} compares GSM8K and MATH-500 at W4A16
on the same models; its panel averages are unweighted means over the
three hosts for each model and task.
GSM8K~\citep{cobbe2021training} uses five-shot strict exact-match scoring and MATH-500~\citep{hendrycks2021math,lightman2024lets} uses
four-shot exact-match scoring.

\paragraph{Evaluation settings.}
All tasks use the full task splits, seed 0, completion prompts,
greedy generation, batch size 1, and context length 4,096.
For Llama-3-8B, completion prompts include the beginning-of-sequence
token. Base and $+$Ours ($K=0$ and $K=3$) use the same questions,
prompts, and scoring settings. FP16 rows use the original models with
the same task data and evaluation settings.

\subsection{Compatibility with Other Quantizers}
\label{app:qep-quarot-protocol}

\paragraph{QEP.}
We use the official QEP correction followed by GPTQ, with
128 WT2 calibration sequences of 2,048 tokens, $g=128$,
seed 0, native scaled dot-product attention, and W3A16/W4A16.
Refinement starts from the resulting quantized model and uses the
original full-precision weights in its reconstruction target;
QEP's correction step is performed before refinement.
The WT2 values in Table~\ref{tab:host-compatibility}(a) use $K=3$, as in
Table~\ref{tab:main_ppl_matrix}.

\paragraph{QuaRot.}
We use the official Hadamard rotation and GPTQ implementation on
Llama-2-7B, with 128 C4 calibration sequences of 2,048 tokens,
$g=128$, seed 0, and W3A16/W4A16.
The rotation and zero points stay fixed during refinement, and
weights and inputs are represented in the same rotated coordinates.
Refinement uses Algorithm~\ref{alg:jarq} with $K=3$ and $\nu=0.6$
(Appendix~\ref{sec:variants}). Evaluation uses the QuaRot evaluator.

\paragraph{OJBKQ.}
Table~\ref{tab:host-compatibility}(c) reports our reproduction of
OJBKQ~\citep{OJBKQ} on Qwen3-8B and Llama-3-8B at W3A16/W4A16, with
128 C4 calibration sequences of 2,048 tokens, $g=128$, and BF16
activations. On Qwen3-8B, the reproduced WT2 perplexities match the
published values (10.98 at W3A16 and 9.94 at W4A16; Table~1 of
\citet{OJBKQ}). Base is the reproduced OJBKQ model; $+$Ours refines
its scales and codes through the same host interface as above, with
$K=3$ and $\nu=0.6$. Both columns report measured WT2 perplexity.

\subsection{Efficiency Measurements}
\label{app:efficiency-protocol}

We measure the first Transformer block of Qwen3-8B at W3A16 and
$g=128$, including its seven attention and MLP linear layers.
Inputs are the same 128 WT2 calibration sequences of 2,048 tokens,
with seed 0. Each of three repetitions uses one NVIDIA A100.
Within a repetition, OmniQuant runs its 20-epoch weight clipping
once, and every budget $K>0$ refines that same result with the same
inputs and with early stopping enabled.
Table~\ref{tab:single-block-efficiency} and
Figure~\ref{fig:refinement-efficiency} report the results.

\begin{table*}[!htbp]
\centering
\caption{Time and memory for the first Qwen3-8B block at W3A16 with
the OmniQuant host. Times are mean $\pm$ sample standard deviation over
three repetitions; total time includes the host. Memory is the
time-weighted mean over both stages.}
\label{tab:single-block-efficiency}
\begingroup
\small
\setlength{\tabcolsep}{4pt}
\renewcommand{\arraystretch}{1.1}
\begin{tabular*}{\textwidth}{@{\extracolsep{\fill}}rccrrr@{}}
\toprule
$K$ & Refinement time (s) & Total time (s) & Overhead
& GPU memory (GiB) & CPU memory (GiB) \\
\midrule
0 & $0.00\pm0.00$ & $128.37\pm0.83$ & \textemdash & 2.07 & 6.63 \\
3 & $48.43\pm0.29$ & $176.80\pm1.12$ & $+37.73\%$ & 2.48 & 5.95 \\
5 & $57.61\pm0.41$ & $185.98\pm1.20$ & $+44.88\%$ & 2.53 & 5.86 \\
7 & $67.15\pm0.74$ & $195.52\pm1.50$ & $+52.31\%$ & 2.58 & 5.79 \\
10 & $80.65\pm0.55$ & $209.02\pm1.37$ & $+62.83\%$ & 2.64 & 5.69 \\
\bottomrule
\end{tabular*}
\endgroup
\end{table*}

\begin{figure}[!htbp]
\centering
\begin{minipage}[t]{0.48\textwidth}
\centering
\includegraphics[width=\linewidth]{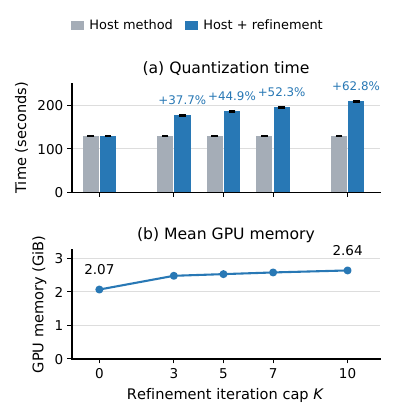}
\caption{Cost on the first Qwen3-8B block (OmniQuant, W3A16).
(a) Host time ($K=0$) versus host-plus-refinement time
(mean $\pm$ sample SD, three runs); labels give the overhead.
(b) Time-weighted mean GPU memory.
Values are in Table~\ref{tab:single-block-efficiency}.}
\label{fig:refinement-efficiency}
\end{minipage}\hfill
\begin{minipage}[t]{0.48\textwidth}
\centering
\includegraphics[width=\linewidth]{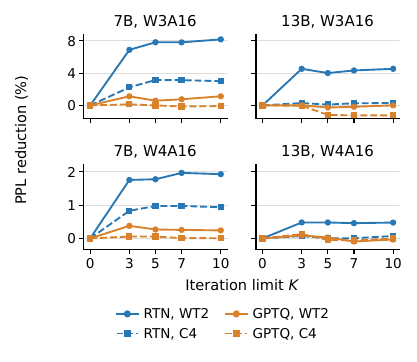}
\caption{Llama-2 perplexity versus iteration budget $K$.
Reduction is $100(1-\mathrm{PPL}_{K}/\mathrm{PPL}_{0})$;
higher is better.}
\label{fig:refinement-budgets}
\end{minipage}
\end{figure}

\paragraph{Time.}
The host timer includes parameter search and quantization. The
refinement timer includes computing calibration responses, QR
reduction, scale and code updates, proposal comparisons, output
propagation, and synchronization. Model loading, input capture,
warmup, serialization, and result validation are excluded.
For repetition $r$, total time is
$T_{\mathrm{total},r}=T_{\mathrm{host},r}+T_{\mathrm{ref},r}$, and
overhead is $100\,\overline{T}_{\mathrm{ref}}/\overline{T}_{\mathrm{host}}$.
These timings measure quantization cost, not inference latency.

\paragraph{Memory.}
GPU memory is allocated tensor memory; CPU memory is process
resident memory. For repetition $r$, we report the time-weighted
mean over the two stages,
\begin{equation}
\overline{M}_{\mathrm{comp},r}=
\frac{T_{\mathrm{host},r}\overline{M}_{\mathrm{host},r}
      +T_{\mathrm{ref},r}\overline{M}_{\mathrm{ref},r}}
     {T_{\mathrm{host},r}+T_{\mathrm{ref},r}},
\label{eq:composed-memory}
\end{equation}
averaged over the three repetitions. The CPU mean can decrease with
$K$ when refinement spends more time in a lower-memory stage.

\paragraph{Cost per host.}
Table~\ref{tab:host-block-time} measures every host on Llama-2-7B and
Qwen3-4B at W3A16, $g=128$, and $K=3$, over the first Transformer block
and over the first two blocks. Each entry is the median of the completed
repetitions (one or two per entry). Refinement adds 53--57\,s per
Llama-2-7B block and 36--38\,s per Qwen3-4B block regardless of the host,
and the added time for two blocks is about twice that for one. At this
rate, refining all 32 blocks of Llama-2-7B takes about 30 minutes,
consistent with the full-model RTN runs in Table~\ref{tab:group-size-time}.

\begin{table}[!htbp]
\centering
\caption{Quantization time (s) of each host alone and with refinement
($+$Ours) at W3A16 over the first block and the first two blocks;
$\Delta$ is the time added by refinement. Medians of the completed
repetitions (one or two per entry).}
\label{tab:host-block-time}
\begingroup
\small
\setlength{\tabcolsep}{5pt}
\begin{tabular}{llrrrrrr}
\toprule
& & \multicolumn{3}{c}{One block} & \multicolumn{3}{c}{Two blocks} \\
\cmidrule(lr){3-5}\cmidrule(lr){6-8}
Model & Host & Host & $+$Ours & $\Delta$ & Host & $+$Ours & $\Delta$ \\
\midrule
\multirow{4}{*}{Llama-2-7B} & RTN & 4.9 & 61.5 & 56.6 & 7.8 & 119.8 & 112.0 \\
 & GPTQ & 33.0 & 86.6 & 53.6 & 64.0 & 170.1 & 106.1 \\
 & AWQ & 28.4 & 81.5 & 53.1 & 49.3 & 154.0 & 104.7 \\
 & OmniQuant & 69.7 & 126.6 & 56.9 & 135.6 & 244.9 & 109.3 \\
\midrule
\multirow{4}{*}{Qwen3-4B} & RTN & 2.8 & 40.3 & 37.5 & 4.8 & 79.4 & 74.6 \\
 & GPTQ & 25.0 & 61.0 & 36.0 & 47.2 & 119.2 & 72.0 \\
 & AWQ & 17.1 & 53.0 & 35.9 & 32.2 & 105.0 & 72.8 \\
 & OmniQuant & 45.4 & 83.1 & 37.7 & 89.4 & 163.8 & 74.4 \\
\bottomrule
\end{tabular}
\endgroup
\end{table}

\section{Additional Results}
\label{app:supplementary-results}

This appendix reports perplexity on the 32B models and the full GSM8K
results. Figure~\ref{fig:qwen32-perplexity} summarizes the 32B models
relative to their FP16 references.

\begin{figure}[!htbp]
\centering
\begin{minipage}[t]{0.48\textwidth}
\centering
\includegraphics[width=\linewidth]{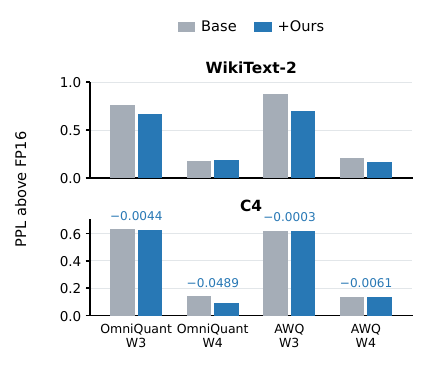}
\par\smallskip{\small (a) Qwen2.5-32B: OmniQuant and AWQ}
\end{minipage}\hfill
\begin{minipage}[t]{0.48\textwidth}
\centering
\includegraphics[width=\linewidth]{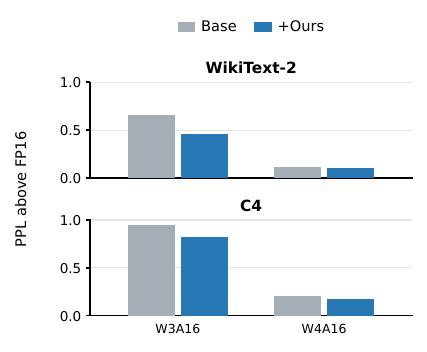}
\par\smallskip{\small (b) Qwen3-32B: OmniQuant}
\end{minipage}
\caption{Perplexity gap to FP16, $\mathrm{PPL}-\mathrm{PPL}_{\mathrm{FP16}}$,
on the 32B models (lower is better). Gray is Base ($K=0$); blue is
$+$Ours ($K=3$). Axes start at zero, with a tighter range for C4 in (a).
Labels give $\mathrm{PPL}_{+\mathrm{Ours}}-\mathrm{PPL}_{\mathrm{Base}}$.
Absolute values are in Tables~\ref{tab:qwen32-ppl}
and~\ref{tab:qwen3-32b-omniquant-ppl}.}
\label{fig:qwen32-perplexity}
\end{figure}

\subsection{Qwen2.5-32B Perplexity}
\label{app:qwen32-budgets}

We use the calibration and host settings of
Appendix~\ref{app:strict-protocol}. Evaluation covers all complete,
nonoverlapping 2,048-token WT2 test windows and 256 C4 validation
windows sampled with seed 0. The evaluated weights use FP16, with
native scaled dot-product attention and FP32 logits and loss.
WT2 and C4 are evaluated on the same model state within each
configuration.

Table~\ref{tab:qwen32-budgets} compares each host with its refinement
at $K=3$ and with the FP16 reference. WT2 perplexity decreases in
three of four configurations, by up to 3.02\%, and C4 perplexity in all
four, by up to 0.54\%.

\begin{table*}[!htbp]
\centering
\caption{Qwen2.5-32B perplexity (lower is better). Base uses $K=0$ and
$+$Ours uses $K=3$. Bold marks the better value in each Base/$+$Ours
pair (compared before rounding).}
\label{tab:qwen32-budgets}
\label{tab:qwen32-ppl}
\begingroup
\small
\setlength{\tabcolsep}{4pt}
\renewcommand{\arraystretch}{1.1}
\begin{tabular*}{\textwidth}{@{\extracolsep{\fill}}llrrrr@{}}
\toprule
& & \multicolumn{2}{c}{WT2} & \multicolumn{2}{c}{C4} \\
\cmidrule(lr){3-4}\cmidrule(lr){5-6}
Host & W/A & Base & $+$Ours & Base & $+$Ours \\
\midrule
FP16 & W16A16 & \multicolumn{2}{c}{\ppl{5.0176}} & \multicolumn{2}{c}{\ppl{8.9503}} \\
\midrule
OmniQuant & W3A16 & \ppl{5.7804} & \textbf{\ppl{5.6818}} & \ppl{9.5809} & \textbf{\ppl{9.5765}} \\
& W4A16 & \textbf{\ppl{5.1951}} & \ppl{5.2085} & \ppl{9.0920} & \textbf{\ppl{9.0431}} \\
\midrule
AWQ & W3A16 & \ppl{5.8960} & \textbf{\ppl{5.7181}} & \ppl{9.5722} & \textbf{\ppl{9.5719}} \\
& W4A16 & \ppl{5.2222} & \textbf{\ppl{5.1824}} & \ppl{9.0894} & \textbf{\ppl{9.0833}} \\
\bottomrule
\end{tabular*}
\endgroup
\end{table*}

\subsection{Qwen3-32B Perplexity}
\label{app:qwen3-32b-omniquant}

We quantize the Qwen3-32B dense model with the OmniQuant host at W3A16
and W4A16.
Table~\ref{tab:qwen3-32b-omniquant-ppl} compares each host with its
refinement at $K=3$. Perplexity decreases on WT2 and C4 at both bit
widths, with larger reductions at W3A16.

\begin{table}[!htbp]
\centering
\caption{Qwen3-32B perplexity with the OmniQuant host (lower is better).
Base uses $K=0$ and $+$Ours uses $K=3$. Bold marks the better value in
each Base/$+$Ours pair (compared before rounding).}
\label{tab:qwen3-32b-omniquant-ppl}
\begingroup
\small
\setlength{\tabcolsep}{4pt}
\renewcommand{\arraystretch}{1.1}
\begin{tabular*}{\textwidth}{@{\extracolsep{\fill}}lrrrr@{}}
\toprule
& \multicolumn{2}{c}{WT2} & \multicolumn{2}{c}{C4} \\
\cmidrule(lr){2-3}\cmidrule(lr){4-5}
W/A & Base & $+$Ours & Base & $+$Ours \\
\midrule
W3A16 & \ppl{8.2562} & \textbf{\ppl{8.0641}} & \ppl{11.7171} & \textbf{\ppl{11.5977}} \\
W4A16 & \ppl{7.7106} & \textbf{\ppl{7.7025}} & \ppl{10.9813} & \textbf{\ppl{10.9500}} \\
\bottomrule
\end{tabular*}
\endgroup
\end{table}

\subsection{Calibration Corpus}
\label{app:calibration-corpus}

Table~\ref{tab:calibration-corpus} varies only the calibration corpus
for GPTQ on Qwen3-4B at W3A16, using 128 sequences of 2,048 tokens from
WT2, C4, or the Pile~\citep{gao2020pile}. For each corpus, \method{}
($K=3$) starts from the GPTQ model calibrated on that corpus and uses
the same calibration data. GSM8K uses five-shot strict exact match and
MATH-500 four-shot Minerva exact match, with the settings of
Appendix~\ref{app:downstream-protocol}. Refinement raises GSM8K accuracy
for all three corpora, by 6.52, 6.21, and 2.04 points. MATH-500 accuracy
rises with C4 and Pile calibration and falls by 1.2 points with WT2.

\begin{table}[!htbp]
\centering
\caption{GSM8K and MATH-500 accuracy (\%; higher is better) of Qwen3-4B
with GPTQ at W3A16 under three calibration corpora. Bold marks the better
value in each Base/$+$Ours pair.}
\label{tab:calibration-corpus}
\begingroup
\small
\setlength{\tabcolsep}{6pt}
\begin{tabular}{lcccc}
\toprule
& \multicolumn{2}{c}{GSM8K} & \multicolumn{2}{c}{MATH-500} \\
\cmidrule(lr){2-3}\cmidrule(lr){4-5}
Calibration & Base & $+$Ours & Base & $+$Ours \\
\midrule
WT2  & 59.67 & \textbf{66.19} & \textbf{29.4} & 28.2 \\
C4   & 61.49 & \textbf{67.70} & 29.8 & \textbf{30.2} \\
Pile & 69.98 & \textbf{72.02} & 42.4 & \textbf{45.6} \\
\bottomrule
\end{tabular}
\endgroup
\end{table}

\subsection{GSM8K Accuracy}
\label{app:gsm8k-results}

Table~\ref{tab:generative-all} reports GSM8K accuracy for Llama-3-8B,
Qwen2.5-7B, and Qwen3-8B with the GPTQ, OmniQuant, and AWQ hosts at
W3A16 and W4A16; the W4A16 pairs are plotted in
Figure~\ref{fig:generative-w4}. Refinement raises accuracy in 14 of the
17 reported pairs: six of eight at W3A16 (one tie) and eight of nine at
W4A16. Evaluation settings are in
Appendix~\ref{app:downstream-protocol}.

\begin{table}[!htbp]
\centering
\caption{GSM8K accuracy (\%; higher is better). Gray marks FP16
references and blue shading marks $+$Ours. Bold marks the better value
in each Base/$+$Ours pair; ties are not bolded. \textemdash{} denotes a
configuration not reported.}
\label{tab:generative-all}
\begingroup
\small
\setlength{\tabcolsep}{4pt}
\renewcommand{\arraystretch}{1.1}
\begin{tabular*}{\textwidth}{
  @{\extracolsep{\fill}}lll
  r>{\columncolor{jarqRefined}[\tabcolsep][0pt]}r@{}}
\toprule
Model & Host & W/A & Base & $+$Ours \\
\midrule
\multirow{7}{*}{Llama-3-8B}
& \cellcolor{jarqFP16}FP16 & \cellcolor{jarqFP16}W16A16
& \multicolumn{2}{>{\columncolor{jarqFP16}}c}{47.92} \\
\cmidrule(lr){2-5}
& \multirow{2}{*}{GPTQ} & W3A16 & \textemdash & \textemdash \\
&  & W4A16 & 42.53 & \textbf{42.61} \\
& \multirow{2}{*}{OmniQuant} & W3A16 & 20.17 & \textbf{22.29} \\
&  & W4A16 & 42.68 & \textbf{43.67} \\
& \multirow{2}{*}{AWQ} & W3A16 & 21.30 & 21.30 \\
&  & W4A16 & 42.61 & \textbf{43.37} \\
\midrule
\multirow{7}{*}{Qwen2.5-7B}
& \cellcolor{jarqFP16}FP16 & \cellcolor{jarqFP16}W16A16
& \multicolumn{2}{>{\columncolor{jarqFP16}}c}{83.70} \\
\cmidrule(lr){2-5}
& \multirow{2}{*}{GPTQ} & W3A16 & 69.22 & \textbf{69.37} \\
&  & W4A16 & \textbf{80.89} & 79.91 \\
& \multirow{2}{*}{OmniQuant} & W3A16 & 60.05 & \textbf{68.61} \\
&  & W4A16 & 80.06 & \textbf{81.65} \\
& \multirow{2}{*}{AWQ} & W3A16 & 67.17 & \textbf{70.74} \\
&  & W4A16 & 78.17 & \textbf{81.27} \\
\midrule
\multirow{7}{*}{Qwen3-8B}
& \cellcolor{jarqFP16}FP16 & \cellcolor{jarqFP16}W16A16
& \multicolumn{2}{>{\columncolor{jarqFP16}}c}{90.30} \\
\cmidrule(lr){2-5}
& \multirow{2}{*}{GPTQ} & W3A16 & 75.21 & \textbf{80.67} \\
&  & W4A16 & 88.63 & \textbf{88.70} \\
& \multirow{2}{*}{OmniQuant} & W3A16 & 81.05 & \textbf{81.20} \\
&  & W4A16 & 88.78 & \textbf{89.16} \\
& \multirow{2}{*}{AWQ} & W3A16 & \textbf{82.56} & 81.43 \\
&  & W4A16 & 88.17 & \textbf{90.30} \\
\bottomrule
\end{tabular*}
\endgroup
\end{table}

\section{Component Ablations}
\label{app:refinement-mechanisms}

\subsection{Setup}

We test 16 linear layers from Llama-2-7B and Qwen3-8B with the RTN and
GPTQ hosts, sampling 32 output channels per layer at W3A16 and
$g=128$. Alternating refinement uses an iteration budget of $K=3$.
Inputs come from the full-precision model and stay fixed during each
test. Separate held-out inputs measure reconstruction beyond the
samples used for fitting.

For each layer, we sum the objective over the sampled channels before
computing relative differences, and we report medians across layers.
Timings use one synchronized FP64 measurement after warmup on an
NVIDIA RTX A6000; scale-update timers include constructing the scale
system. Table~\ref{tab:mechanism-summary} collects the results of the
three tests below.

\subsection{Joint Scale Fitting}
\label{app:mechanism-scales}

With codes fixed, independent fitting matches each group's contribution
separately, coordinate descent updates one scale at a time against the
full residual, and joint fitting solves for all scales at once.
The reported gap is
$100(L_{\mathrm{method}}-L_{\mathrm{joint}})/L_{\mathrm{base}}$,
where $L_{\mathrm{base}}$ is the objective before scale fitting.
The first block of Table~\ref{tab:mechanism-summary} compares the
reconstruction error and solve time of the three fits with the same
fixed codes.

\subsection{Multi-Code Updates}
\label{app:mechanism-codes}

Each case is one output channel and one group of 128 weights.
With the scales and the other groups fixed, we run single-code updates
until no one-code change strictly lowers the objective; a separate FP64
check tests every allowed code for every coordinate. A Babai proposal
is then computed from this state.
The second block of Table~\ref{tab:mechanism-summary} reports the
number and share of accepted proposals and the minimum and median
number of codes they change.

\subsection{Fixed-Grid versus Joint Refinement}
\label{app:mechanism-alternation}

Fixed keeps the host scales and updates only codes; Joint alternates
scale and code updates using Algorithm~\ref{alg:jarq}. Both start from
the same weights. Fixed receives the time used by Joint and returns its
last complete sweep within that time; shared factorization and
diagnostic work are excluded.
The last block of Table~\ref{tab:mechanism-summary} reports the median
held-out error reduction and the number of layers improved.

\subsection{Full-Model Component Ablation}
\label{app:mechanism-full-model}

Table~\ref{tab:full-model-components} applies each update alone to the
full model on Llama-3-8B and Qwen3-4B, with the settings of
Table~\ref{tab:main_ppl_matrix}. \emph{Codes only} runs one code sweep
of Algorithm~\ref{alg:jarq} on the host grid; \emph{Scales only} runs
one joint scale fit with the host codes. Both variants start
independently from the same host state and propagate their own outputs
block by block. With the codes fixed, the
joint fit has a unique solution, so repeating it changes nothing; the
iteration budget of full \method{} is studied in
Figure~\ref{fig:refinement-budgets}.

Each update alone is effective for the RTN host: on Llama-3-8B at
W3A16, codes only lowers WT2 perplexity by 27.9\% and scales only by
28.8\%, so the two partly correct the same error. Their interaction
depends on the grid. Codes only raises perplexity in 5 of the 16
comparisons, including WT2 for both GPTQ settings at W3A16, whose codes
already come from a nearest-plane recursion on a related lattice. Full
\method{}, which alternates both updates, is below Scales only in all 16
comparisons: by 4.2\% and 9.9\% on WT2 for RTN at W3A16, and by at most
0.3\% on WT2 at W4A16. Full \method{} gives the lowest WT2
perplexity in 7 of the 8 settings; on Llama-3-8B with GPTQ at W3A16, no
variant improves WT2 over the host, while full \method{} gives the
lowest C4 perplexity.

\begin{table}[!htbp]
\centering
\caption{Full-model component ablation (WT2/C4 perplexity; lower is
better). Bold marks the lowest value per setting and corpus.}
\label{tab:full-model-components}
\begingroup
\small
\setlength{\tabcolsep}{4pt}
\begin{tabular}{lllcccc}
\toprule
Model & Host & W/A & Host & Codes only & Scales only & Full \method{} \\
\midrule
\multirow{4}{*}{Llama-3-8B} & \multirow{2}{*}{RTN} & W3A16 & 12.048/16.444 & 8.689/14.171 & 8.580/13.311 & \textbf{8.219}/\textbf{12.667} \\
 &  & W4A16 & 6.727/9.636 & 6.586/9.605 & 6.490/9.583 & \textbf{6.481}/\textbf{9.532} \\
 & \multirow{2}{*}{GPTQ} & W3A16 & \textbf{7.618}/13.242 & 7.714/12.318 & 7.784/12.628 & 7.649/\textbf{12.011} \\
 &  & W4A16 & 6.423/\textbf{9.386} & 6.419/9.416 & 6.417/9.411 & \textbf{6.416}/9.408 \\
\midrule
\multirow{4}{*}{Qwen3-4B} & \multirow{2}{*}{RTN} & W3A16 & 22.447/27.013 & 21.205/27.078 & 17.968/22.316 & \textbf{16.194}/\textbf{21.625} \\
 &  & W4A16 & 15.968/18.291 & 14.629/\textbf{17.924} & 14.598/17.965 & \textbf{14.555}/17.930 \\
 & \multirow{2}{*}{GPTQ} & W3A16 & 15.153/\textbf{18.623} & 15.903/18.953 & 15.091/18.859 & \textbf{15.089}/18.791 \\
 &  & W4A16 & 13.675/16.980 & 13.665/16.932 & 13.570/16.923 & \textbf{13.553}/\textbf{16.917} \\
\bottomrule
\end{tabular}
\endgroup
\end{table}

\section{Layer-Wise Error Profiles}
\label{app:layer-errors}

This appendix tracks how refinement changes errors inside the model, from
single linear modules to the output of each Transformer block.

\subsection{Setup}
\label{app:layer-protocol}

We profile every block and linear module of Llama-2-7B (32 blocks) and
Qwen3-8B (36 blocks) with the RTN, GPTQ, AWQ, and OmniQuant hosts at W3A16
and W4A16, comparing each host with host$+$\method{} ($K=3$, $\nu=0.6$,
$g=128$). Calibration uses the 128 WT2 training sequences of the main
experiments. Errors are measured on held-out data: 32 sequences of 2,048
tokens each from the WT2 test set and the C4 validation set, over all
tokens and output channels. The 16 host, bit-width, and model
configurations cover 544 block states and 3,808 matrices, giving 7,616
paired module measurements over the two corpora.

Each portrait page contains one $7\times8$ linear-module grid for one
model and metric.
Rows are the seven projections of a block (\texttt{q}, \texttt{k},
\texttt{v}, \texttt{o}, \texttt{gate}, \texttt{up}, \texttt{down}); the
four left columns are the hosts on WT2 and the four right columns the
same hosts on C4. Blue and orange mark W3A16 and W4A16; solid and dashed
lines mark the host and host$+$\method{}. The horizontal axis is the block
index; all vertical axes are logarithmic with limits set per panel.

\subsection{Metrics}
\label{app:layer-metrics}

Squared errors are summed over all tokens and output channels; NMSE
divides the total squared error by the total energy of the full-precision
reference. Products are computed in FP32 from FP16 activations and summed
in FP64.

\paragraph{Block-output NMSE.}
The full-precision and quantized models each propagate the same token
sequences, and
\begin{equation}
 E_{\ell,s}^{\mathrm{block}}
 =\frac{\|\mathbf H_{\ell}^{s}-\mathbf H_{\ell}^{\mathrm{FP}}\|_F^2}
 {\|\mathbf H_{\ell}^{\mathrm{FP}}\|_F^2}
 \label{eq:layer-block-nmse}
\end{equation}
compares the hidden state after block $\ell$, including normalization,
nonlinearities, and residual connections, for state $s$ (host or
host$+$\method{}); the last block is measured before the final
normalization and output head.

\paragraph{Local linear NMSE.}
Both states receive the same full-precision input to module $a$:
\begin{equation}
 E_{\ell,a,s}^{\mathrm{local}}
 =\frac{\|\mathbf X_{\ell,a}^{\mathrm{ref}}
 (\mathbf W_{\ell,a}^{s}-\mathbf W_{\ell,a}^{\mathrm{ref}})\|_F^2}
 {\|\mathbf Y_{\ell,a}^{\mathrm{ref}}\|_F^2},\qquad
 \mathbf Y^{\mathrm{ref}}=\mathbf X^{\mathrm{ref}}\mathbf W^{\mathrm{ref}}
 +\mathbf b^{\mathrm{ref}}.
 \label{eq:layer-local-nmse}
\end{equation}
This isolates the module's own weight error. For AWQ, the reference block
uses the unclipped, transformed full-precision weights fed by the original
full-precision block inputs, so both states are compared in the same
coordinates.

\paragraph{Propagated output SSE.}
Each state feeds the module its own propagated input:
\begin{equation}
 S_{\ell,a,s}^{\mathrm{out}}
 =\|\mathbf X_{\ell,a}^{\mathrm{FP}}\mathbf W_{\ell,a}^{\mathrm{FP}}
 -\mathbf X_{\ell,a}^{s}\mathbf W_{\ell,a}^{s}\|_F^2 ,
 \label{eq:layer-output-sse}
\end{equation}
so the error includes the effect of all preceding modules.

\paragraph{Combined error.}
As a supplementary metric, we add a weight penalty with coefficient 0.6,
\begin{equation}
 C_{\ell,a,s}=S_{\ell,a,s}^{\mathrm{out}}
 +0.6\|\mathbf W_{\ell,a}^{\mathrm{FP}}-\mathbf W_{\ell,a}^{s}\|_F^2,
 \qquad C_{\ell,s}^{\mathrm{block}}=\sum_{a}C_{\ell,a,s},
 \label{eq:layer-combined}
\end{equation}
where the sum runs over the seven modules of block $\ell$ and the weight
penalty is counted once per matrix and corpus. Unlike the
refinement objective \eqref{eq:adaptive_grid_problem}, which uses the
calibration inputs $\widetilde{\mathbf X}$ and coefficient $\nu^2=0.36$,
these metrics use held-out inputs. For AWQ, propagated quantities also contain
the difference between folded and original coordinates, so their raw
magnitudes are not comparable across hosts or modules.

\subsection{Results}
\label{app:layer-findings}

Table~\ref{tab:layer-error-counts} counts the pairs in which
host$+$\method{} has strictly lower error than the host.

\begin{table}[!htbp]
\centering
\caption{Pairs in which refinement lowers the error. The first three
columns count all modules and blocks of both models, bit widths, and
corpora; the last two count the final block only.}
\label{tab:layer-error-counts}
\small
\setlength{\tabcolsep}{4pt}
\renewcommand{\arraystretch}{1.1}
\begin{tabular}{lrrrrr}
\toprule
Host & \shortstack{Local\\NMSE} & \shortstack{Linear\\output SSE}
& \shortstack{Linear\\combined} & \shortstack{Final block\\NMSE}
& \shortstack{Final block\\combined} \\
\midrule
RTN & 1,818/1,904 & 1,888/1,904 & 1,888/1,904 & 8/8 & 8/8 \\
GPTQ & 1,739/1,904 & 1,447/1,904 & 1,443/1,904 & 7/8 & 6/8 \\
AWQ & 1,721/1,904 & 1,648/1,904 & 1,644/1,904 & 6/8 & 8/8 \\
OmniQuant & 1,794/1,904 & 1,456/1,904 & 1,456/1,904 & 2/8 & 3/8 \\
\bottomrule
\end{tabular}

\end{table}

\paragraph{Local gains hold on held-out data.}
Local NMSE decreases in 7,072 of 7,616 module pairs (92.9\%) and in at
least 90\% of pairs for every host, including OmniQuant (Figures~\ref{fig:layer-local-nmse-llama2-7b}
and~\ref{fig:layer-local-nmse-qwen3-8b}). The reduction in the calibration
objective therefore carries over to held-out inputs.

\paragraph{Propagation depends on the host.}
At the final block, block-output NMSE decreases in 23 of 32 comparisons:
8/8 for RTN, 7/8 for GPTQ, 6/8 for AWQ, and 2/8 for OmniQuant. A likely reason is that
OmniQuant optimizes block outputs directly, and refinement toward the
layer target can undo part of
that compensation. For Llama-2-7B with OmniQuant at W3A16 on C4, all seven
local NMSEs of the last block decrease, and \texttt{down\_proj} falls by
47.51\% (Figure~\ref{fig:layer-local-nmse-llama2-7b}), while its propagated
SSE rises by 37.72\% (Figure~\ref{fig:layer-propagated-output-sse-llama2-7b})
and the final block-output NMSE rises from 0.08358 to 0.10799.

\paragraph{Two propagation-sensitive modules.}
All 14 of the 7,616 module pairs in which refinement more than doubles the
propagated SSE come from the \texttt{down\_proj} of one early block per
model: block~2 of Llama-2-7B and block~7 of Qwen3-8B. There, the host's
propagated SSE is 77--1,133 and 6--34 times that of the preceding block,
and refinement raises it in 9 and 13 of the 16 host, bit-width, and corpus
settings. At block~7 of Qwen3-8B, local NMSE still falls in all 13 of
those settings: the refined weights fit better, but these modules amplify
the small input drift left by the refined preceding blocks. The main target
$\widetilde{\mathbf{X}}\mathbf{W}$ ignores such drift by design; the
propagation-corrected target of Appendix~\ref{sec:target_dependence}
accounts for it.

\paragraph{AWQ coordinates.}
For Qwen3-8B with AWQ at W3A16 on C4, the propagated SSE of the third
block's \texttt{up\_proj} changes from $3.30\times10^{14}$ to
$3.32\times10^{14}$ (Figure~\ref{fig:layer-propagated-output-sse-qwen3-8b}),
while its aligned local NMSE falls by 98.75\%
(Figure~\ref{fig:layer-local-nmse-qwen3-8b}). Large raw SSE values in AWQ
panels reflect its folded coordinates rather than larger quantization
error.

\clearpage
\begin{figure}[p]
\centering
\includegraphics[width=\linewidth]{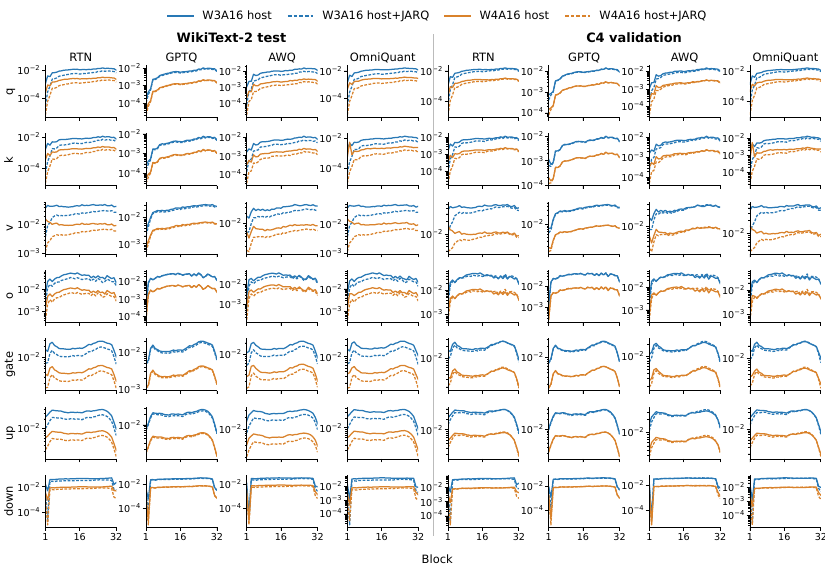}
\caption{Local linear NMSE \eqref{eq:layer-local-nmse} for Llama-2-7B.
Rows are the seven projections; the left and right halves are WT2 and C4.
At the final block, refinement lowers it in 104/112 module pairs (RTN 26/28,
GPTQ 24/28, AWQ 26/28, OmniQuant 28/28).}
\label{fig:layer-local-nmse-llama2-7b}

\vspace{8pt}
\includegraphics[width=\linewidth]{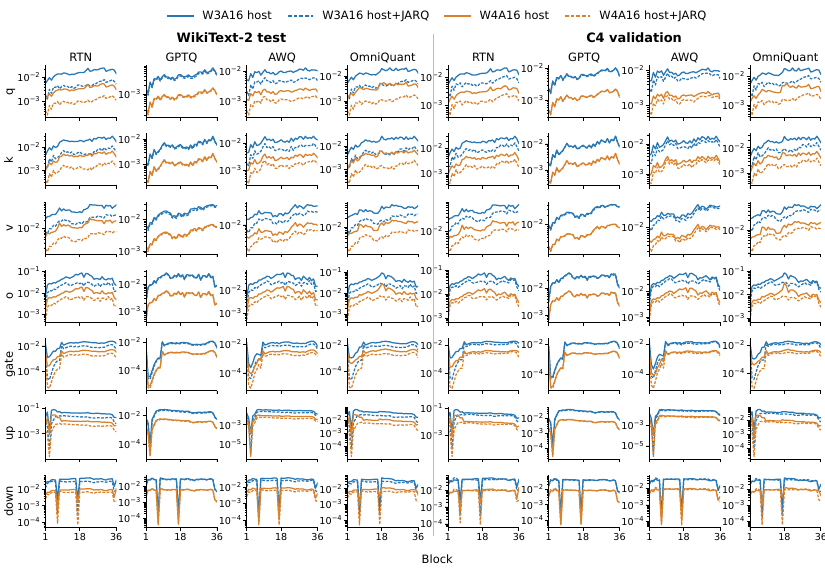}
\caption{Local linear NMSE \eqref{eq:layer-local-nmse} for Qwen3-8B, as in
Figure~\ref{fig:layer-local-nmse-llama2-7b}. At the final block, refinement
lowers it in 104/112 module pairs (RTN 28/28, GPTQ 20/28, AWQ 28/28,
OmniQuant 28/28).}
\label{fig:layer-local-nmse-qwen3-8b}
\end{figure}

\begin{figure}[p]
\centering
\includegraphics[width=\linewidth]{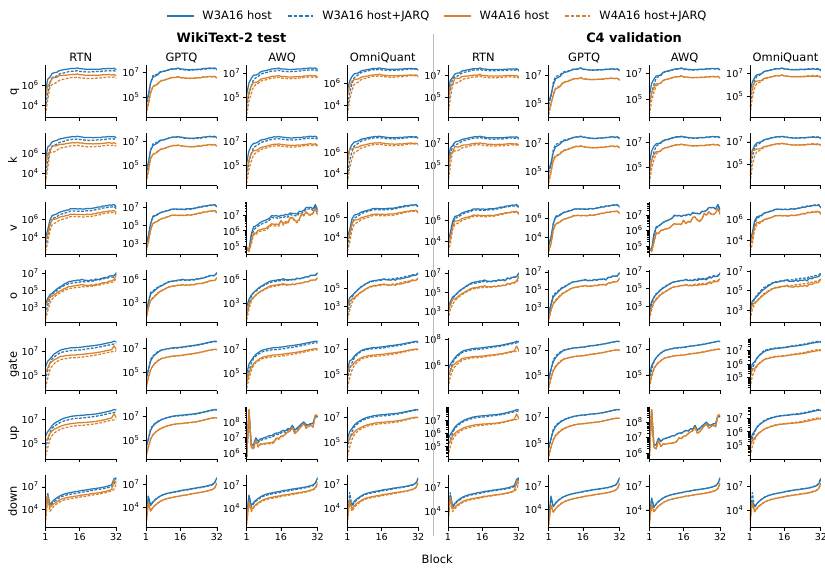}
\caption{Propagated output SSE \eqref{eq:layer-output-sse} for Llama-2-7B,
laid out as in Figure~\ref{fig:layer-local-nmse-llama2-7b}. At the final
block, refinement lowers it in 75/112 module pairs (RTN 27/28, GPTQ 16/28,
AWQ 21/28, OmniQuant 11/28).}
\label{fig:layer-propagated-output-sse-llama2-7b}

\vspace{8pt}
\includegraphics[width=\linewidth]{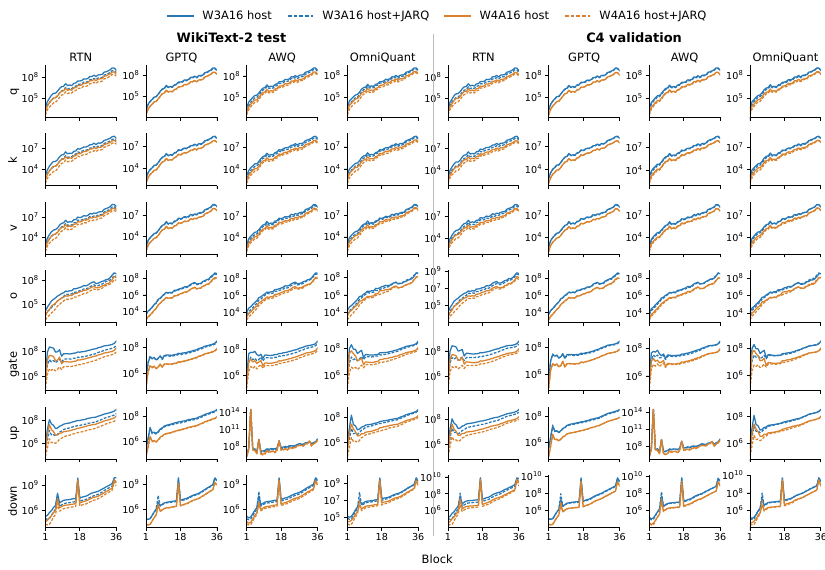}
\caption{Propagated output SSE \eqref{eq:layer-output-sse} for Qwen3-8B,
laid out as in Figure~\ref{fig:layer-local-nmse-llama2-7b}. At the final
block, refinement lowers it in 105/112 module pairs (RTN 28/28, GPTQ 28/28,
AWQ 24/28, OmniQuant 25/28).}
\label{fig:layer-propagated-output-sse-qwen3-8b}
\end{figure}

\begin{figure}[p]
\centering
\includegraphics[width=\linewidth]{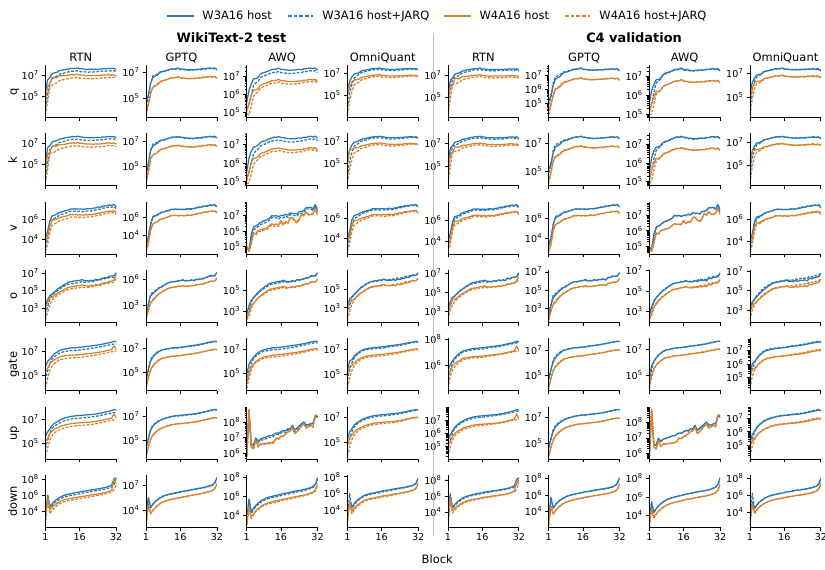}
\caption{Per-module combined error \eqref{eq:layer-combined} for
Llama-2-7B, laid out as in Figure~\ref{fig:layer-local-nmse-llama2-7b}.
At the final block, refinement lowers it in 75/112 module pairs (RTN 27/28,
GPTQ 16/28, AWQ 21/28, OmniQuant 11/28).}
\label{fig:layer-combined-error-llama2-7b}

\vspace{8pt}
\includegraphics[width=\linewidth]{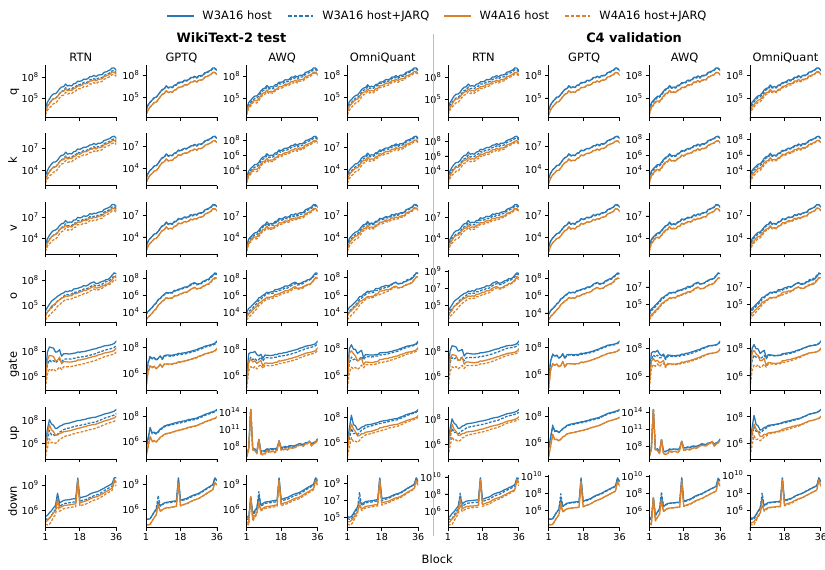}
\caption{Per-module combined error \eqref{eq:layer-combined} for
Qwen3-8B, laid out as in Figure~\ref{fig:layer-local-nmse-llama2-7b}.
At the final block, refinement lowers it in 105/112 module pairs (RTN 28/28,
GPTQ 28/28, AWQ 24/28, OmniQuant 25/28).}
\label{fig:layer-combined-error-qwen3-8b}
\end{figure}

\clearpage

\section{Implementation Details}
\label{app:solver-details}
\label{app:solver-ablation}
\label{sec:safeguards}

\subsection{Reconstruction after QR Reduction}

For the thin QR factorization $\mathbf{A}=\mathbf{U}\mathbf{R}$ and
$\overline{\mathbf{Y}}=\mathbf{U}^{\top}\mathbf{Y}$, orthogonality gives
\begin{equation}
\|\mathbf{Y}-\mathbf{A}\widehat{\mathbf{W}}\|_F^2
=\|\overline{\mathbf{Y}}-\mathbf{R}\widehat{\mathbf{W}}\|_F^2
+\|(\mathbf{I}-\mathbf{U}\mathbf{U}^{\top})\mathbf{Y}\|_F^2.
\label{eq:qr_equivalence}
\end{equation}
The second term does not depend on the quantized weights, so the
original objective \eqref{eq:adaptive_grid_problem} and the reduced
objective \eqref{eq:reduced_problem} have the same minimizers. Appending $\nu\mathbf{I}$ to the activation matrix
makes $\mathbf{A}$ full column rank when $\nu>0$.

\subsection{Numerical Updates and Stopping}
\label{app:numerical-updates}

\paragraph{Order of updates.}
Algorithm~\ref{alg:jarq} summarizes the layerwise alternating updates;
the stopping rule below can stop a channel before its iteration budget
is exhausted.
Within a layer, the input factors and reconstruction target stay fixed.
Each iteration fits the scales and then proposes group-level integer
updates. A code proposal is accepted only if it lowers the reconstruction
objective. The refined layer's outputs become the inputs to the next
layer. Group QR uses no additional column ordering; any ordering
introduced by GPTQ is preserved during refinement and inverted when
the weights are written back.

\paragraph{Solving for scales.}
The scale system is solved in FP32.
Let $\mathbf{G}_j=\mathbf{B}_j^\top\mathbf{B}_j$.
We symmetrize this matrix before factorization.
If its spectrum is nonfinite, its smallest eigenvalue is at most
$r_{\rm cond}\max(|\lambda_{\max}(\mathbf{G}_j)|,\epsilon_{\rm fp})$,
or Cholesky factorization fails, we add $\lambda\mathbf{I}$ and
factorize again, with $\lambda=10^{-4}$, $r_{\rm cond}=10^{-10}$,
and $\epsilon_{\rm fp}$ equal to FP32 machine epsilon. If factorization
still fails, the solve stops with a numerical error. Thus
$\lambda_j=0$ for the ordinary solve and $\lambda_j=10^{-4}$ when
stabilization is triggered. The stabilized branch solves a
ridge-regularized scale subproblem, not the exact least-squares
problem in \eqref{eq:scale_normal_equations}, and its result is kept
without a separate acceptance test on the original reconstruction
objective; iteration-wise monotonicity is therefore not guaranteed
for this branch (Remark~\ref{rem:ridge-acceptance}).

\paragraph{Early stopping.}
Let $L_j^{(k)}$ be the full reconstruction objective immediately after
the scale update, including the constant term in
\eqref{eq:qr_equivalence}. For $k\geq2$, an output channel stops
before the next code update when
\begin{equation}
\frac{L_j^{(k-1)}-L_j^{(k)}}
     {\max(|L_j^{(k-1)}|,\epsilon_{\rm fp})}\leq\tau
\quad\text{or}\quad
|L_j^{(k-1)}-L_j^{(k)}|\leq\tau,
\qquad \tau=10^{-5}.
\label{eq:strict_stopping}
\end{equation}
The new scales are kept. At termination, the current scales and codes
define the returned weights, which are cast to the model's weight
dtype and installed in the layer.

\paragraph{Small scales.}
Code proposals are computed only when $|s_{i,j}|$ exceeds FP32
machine epsilon; groups with smaller scales keep their codes.
A proposal is rejected if its residual is nonfinite or if it does not
lower the conditional reconstruction objective.

\subsection{Representing Negative Scales}

All quality evaluations use the final signed scales directly.
For a deployment format that supports the resulting scales and zero
points, a negative scale can be converted to a positive one without changing the weights
it represents.
Let $q_{\max}=2^b-1$. For $s_{i,j}<0$, set
\begin{equation}
s_{i,j}'=-s_{i,j},\qquad
\mathbf{q}_{i,j}'=q_{\max}\mathbf{1}-\mathbf{q}_{i,j},\qquad
z_{i,j}'=q_{\max}-z_{i,j}.
\label{eq:canonicalization}
\end{equation}
The transformed codes remain in the same allowed integer range, and
\begin{equation}
s_{i,j}'(\mathbf{q}_{i,j}'-z_{i,j}'\mathbf{1})
=(-s_{i,j})(-\mathbf{q}_{i,j}+z_{i,j}\mathbf{1})
=s_{i,j}(\mathbf{q}_{i,j}-z_{i,j}\mathbf{1}).
\label{eq:canonicalization_equivalence}
\end{equation}
This optional conversion is applied after refinement; zero points
stay fixed during optimization. Fitting the offset jointly with the
scale, as an affine term $s_{i,j}z_{i,j}$, would keep the scale step
linear but would yield real-valued offsets, whereas group-wise formats
store integer zero points; fixing $\mathbf{Z}$ keeps the host's format.

\subsection{Refinement with QuaRot}
\label{sec:variants}

The QuaRot results in Table~\ref{tab:host-compatibility}(b) use
Algorithm~\ref{alg:jarq} in the rotated coordinates, starting directly
from the scales, codes, and zero points of QuaRot+GPTQ.
Each iteration jointly fits the scales and then performs group-wise
bounded Babai updates using already rounded higher-index codes during
back substitution, as in \eqref{eq:babai_update}.
The iteration budget is $K=3$, and the reconstruction objective
is \eqref{eq:adaptive_grid_problem} with $\nu=0.6$.
The Hadamard rotation, group assignments, and zero points stay fixed.
Numerical stabilization, stopping, and code-proposal acceptance follow
Appendix~\ref{app:numerical-updates}. The final scales and codes
define the weights installed in the rotated model.

\subsection{Propagation-Corrected Targets and QEP}
\label{sec:target_dependence}

The main target $\mathbf{Y}^{\star}=\widetilde{\mathbf{X}}\mathbf{W}$
asks each layer to reproduce the original weights' response to the
inputs it actually receives. It does not compensate for errors
introduced by preceding layers, so the gains in Section~\ref{sec:experiments}
come from the scale and code updates, not from a corrected target.
A target can also absorb those errors. Keeping the design matrix in
\eqref{eq:strict_target}, replace the activation part of $\mathbf{Y}$ with
\begin{equation}
\mathbf{Y}^{\star}(\mu)
=(1-\mu)\mathbf{X}\mathbf{W}
+\mu\widetilde{\mathbf{X}}\mathbf{W},\qquad \mu\in[0,1],
\label{eq:mu-target}
\end{equation}
where $\mathbf{X}$ and $\widetilde{\mathbf{X}}$ stack the inputs from
full-precision and quantized preceding layers, respectively. Setting
$\mu=1$ recovers the main target.

\paragraph{Equivalence to QEP's corrected weights.}
Let $\mathbf{H}=\widetilde{\mathbf{X}}^{\top}\widetilde{\mathbf{X}}
+\nu^2\mathbf{I}$, which is nonsingular for $\nu>0$, and let
$\mathbf{Y}(\mu)$ be the augmented target with activation part
\eqref{eq:mu-target}. The normal equations of
$\min_{\widehat{\mathbf{W}}}\|\mathbf{Y}(\mu)-\mathbf{A}\widehat{\mathbf{W}}\|_F^2$
over real matrices give
\begin{equation}
\mathbf{W}_{\mathrm{corr}}
=\mathbf{W}+(1-\mu)\,\mathbf{H}^{-1}\widetilde{\mathbf{X}}^{\top}
(\mathbf{X}-\widetilde{\mathbf{X}})\mathbf{W},
\label{eq:w-corr}
\end{equation}
which has the form of QEP's propagation-corrected weights
\citep{arai2025qep} with correction strength $\alpha=1-\mu$ and damping
$\nu^2$; QEP's default $\alpha=0.5$ corresponds to $\mu=0.5$. Because $\mathbf{Y}(\mu)-\mathbf{A}\mathbf{W}_{\mathrm{corr}}$
is orthogonal to the range of $\mathbf{A}$, for every $\widehat{\mathbf{W}}$,
\begin{equation}
\|\mathbf{Y}(\mu)-\mathbf{A}\widehat{\mathbf{W}}\|_F^2
=\|\mathbf{A}(\mathbf{W}_{\mathrm{corr}}-\widehat{\mathbf{W}})\|_F^2
+\|\mathbf{Y}(\mu)-\mathbf{A}\mathbf{W}_{\mathrm{corr}}\|_F^2 .
\end{equation}
Refining with target \eqref{eq:mu-target} is therefore the main problem
with $\mathbf{W}$ replaced by $\mathbf{W}_{\mathrm{corr}}$, plus a constant
that Algorithm~\ref{alg:jarq} carries in $c_j$. Theorem~\ref{thm:descent},
Corollary~\ref{cor:host}, and the propositions of
Appendix~\ref{app:theoretical-analysis} need only a target that is fixed
within the layer, so they hold for every $\mu\in[0,1]$. In this form,
QEP chooses the target and \method{} solves it on a moving grid. All
experiments in this paper use $\mu=1$.

\section{Group Size and Regularization Ablations}
\label{app:hyperparameter-ablations}

We run two full-model ablations on Llama-2-7B and Qwen3-8B with the
RTN host at W3A16 and W4A16. Both use 128 WT2 training sequences of
2,048 tokens, seed 0, and an iteration budget of $K=3$ with early
stopping. Each model is evaluated on the WT2 test set and 256 C4
validation windows. All runs share the same calibration and evaluation
data; the $g=128$, $\nu=0.6$ states are shared by the two ablations,
giving 44 distinct model states.

\subsection{Group Size}
\label{app:group-size-ablation}

We vary the group size $g\in\{64,128,256\}$ while fixing $\nu=0.6$.
Each group size has its own RTN Base ($K=0$).
Table~\ref{tab:group-size-ablation} shows that refinement lowers both
WT2 and C4 perplexity in all 12 model--bit-width--group-size
configurations. The main experiments use $g=128$, the common setting
for group-wise LLM quantization used in AWQ~\citep{lin2023awq}.

\begin{table}[!htbp]
\centering
\caption{Group-size ablation with the RTN host, $\nu=0.6$, and $K=3$.
Entries are Base/$+$Ours perplexity (lower is better). Bold marks the
better value in each pair (compared before rounding).}
\label{tab:group-size-ablation}
\begingroup
\small
\setlength{\tabcolsep}{4pt}
\renewcommand{\arraystretch}{1.1}
\begin{tabular*}{\textwidth}{@{\extracolsep{\fill}}lllrrr@{}}
\toprule
Model & W/A & Data & $g=64$ & $g=128$ & $g=256$ \\
\midrule
\multirow{4}{*}{Llama-2-7B} & \multirow{2}{*}{W3A16} & WT2 & \ppl{6.4009}/\textbf{\ppl{6.0186}} & \ppl{6.6630}/\textbf{\ppl{6.2043}} & \ppl{7.1019}/\textbf{\ppl{6.2951}} \\
 &  & C4 & \ppl{8.1095}/\textbf{\ppl{7.8924}} & \ppl{8.4050}/\textbf{\ppl{8.2166}} & \ppl{8.9828}/\textbf{\ppl{8.4944}} \\
\addlinespace[2pt]
 & \multirow{2}{*}{W4A16} & WT2 & \ppl{5.6780}/\textbf{\ppl{5.5938}} & \ppl{5.7261}/\textbf{\ppl{5.6256}} & \ppl{5.7495}/\textbf{\ppl{5.6474}} \\
 &  & C4 & \ppl{7.2114}/\textbf{\ppl{7.1347}} & \ppl{7.2458}/\textbf{\ppl{7.1857}} & \ppl{7.2983}/\textbf{\ppl{7.2414}} \\
\midrule
\multirow{4}{*}{Qwen3-8B} & \multirow{2}{*}{W3A16} & WT2 & \ppl{12.3856}/\textbf{\ppl{10.4292}} & \ppl{13.4564}/\textbf{\ppl{10.7123}} & \ppl{15.7918}/\textbf{\ppl{11.1354}} \\
 &  & C4 & \ppl{16.0018}/\textbf{\ppl{14.5767}} & \ppl{17.5825}/\textbf{\ppl{15.1033}} & \ppl{20.3924}/\textbf{\ppl{15.8302}} \\
\addlinespace[2pt]
 & \multirow{2}{*}{W4A16} & WT2 & \ppl{9.9778}/\textbf{\ppl{9.8458}} & \ppl{10.1369}/\textbf{\ppl{9.9980}} & \ppl{10.4434}/\textbf{\ppl{10.0580}} \\
 &  & C4 & \ppl{13.5253}/\textbf{\ppl{13.4867}} & \ppl{13.6431}/\textbf{\ppl{13.6271}} & \ppl{13.8796}/\textbf{\ppl{13.7751}} \\
\bottomrule
\end{tabular*}
\endgroup
\end{table}

\paragraph{Runtime.}
Table~\ref{tab:group-size-time} compares the runtime of $g=64$ and
$g=128$. Relative to $g=128$, $g=64$ gives 0.57--3.95\% lower refined
perplexity across the eight model--bit-width--corpus comparisons and
takes 2.45--2.74 times as long.

\begin{table}[!htbp]
\centering
\caption{Full-model quantization and refinement time for the RTN host
with $\nu=0.6$ and $K=3$. Increase is $100(T_{64}/T_{128}-1)$.}
\label{tab:group-size-time}
\begingroup
\small
\setlength{\tabcolsep}{4pt}
\renewcommand{\arraystretch}{1.1}
\begin{tabular*}{\textwidth}{@{\extracolsep{\fill}}llrrr@{}}
\toprule
Model & W/A & $g=128$ (min) & $g=64$ (min) & Increase \\
\midrule
\multirow{2}{*}{Llama-2-7B} & W3A16 & 32.13 & 87.96 & $+173.78\%$ \\
 & W4A16 & 32.06 & 87.04 & $+171.50\%$ \\
\midrule
\multirow{2}{*}{Qwen3-8B} & W3A16 & 38.54 & 95.36 & $+147.43\%$ \\
 & W4A16 & 39.24 & 96.00 & $+144.64\%$ \\
\bottomrule
\end{tabular*}
\endgroup
\end{table}

The timer covers calibration input capture, RTN quantization,
refinement, diagnostics and progress logging, and block propagation;
model loading and perplexity evaluation are excluded. Each entry is one
full-model measurement, with a different timing scope from the
block-level measurements in Appendix~\ref{app:efficiency-protocol}.

\subsection{Regularization Strength}
\label{app:regularization-ablation}

We fix $g=128$ and vary $\nu\in\{0,0.2,0.4,0.6,0.8,1.0\}$ in
\eqref{eq:strict_target}; the weight-error coefficient is $\nu^2$.
Every value, including $\nu=0$, uses $K=3$; only the RTN Base uses
$K=0$. The conditional $10^{-4}$ stabilization of the scale solve
(Appendix~\ref{app:numerical-updates}) is unchanged.
Table~\ref{tab:regularization-ablation} reports all six strengths.

\begin{table}[!htbp]
\centering
\caption{Regularization-strength ablation with the RTN host, $g=128$,
and $K=3$. Entries are WT2/C4 perplexity (lower is better). Bold marks
the best of the six strengths for each model, bit width, and corpus
(compared before rounding).}
\label{tab:regularization-ablation}
\begingroup
\small
\setlength{\tabcolsep}{4pt}
\renewcommand{\arraystretch}{1.1}
\begin{tabular*}{\textwidth}{@{\extracolsep{\fill}}lrrrr@{}}
\toprule
& \multicolumn{2}{c}{Llama-2-7B} & \multicolumn{2}{c}{Qwen3-8B} \\
\cmidrule(lr){2-3}\cmidrule(lr){4-5}
$\nu$ & W3A16 & W4A16 & W3A16 & W4A16 \\
\midrule
Base ($K=0$) & \ppl{6.6630}/\ppl{8.4050} & \ppl{5.7261}/\ppl{7.2458} & \ppl{13.4564}/\ppl{17.5825} & \ppl{10.1369}/\ppl{13.6431} \\
\midrule
0.0 & \ppl{6.2254}/\ppl{8.2287} & \textbf{\ppl{5.6203}}/\ppl{7.1873} & \textbf{\ppl{10.5207}}/\textbf{\ppl{15.0312}} & \ppl{9.9718}/\ppl{13.5872} \\
0.2 & \ppl{6.1879}/\ppl{8.2263} & \ppl{5.6242}/\textbf{\ppl{7.1837}} & \ppl{10.6558}/\ppl{15.1019} & \ppl{9.9430}/\ppl{13.5646} \\
0.4 & \ppl{6.1944}/\ppl{8.2391} & \ppl{5.6295}/\ppl{7.1875} & \ppl{10.7394}/\ppl{15.1621} & \textbf{\ppl{9.9146}}/\textbf{\ppl{13.5575}} \\
0.6 & \ppl{6.2043}/\ppl{8.2166} & \ppl{5.6256}/\ppl{7.1857} & \ppl{10.7123}/\ppl{15.1033} & \ppl{9.9980}/\ppl{13.6271} \\
0.8 & \ppl{6.1906}/\ppl{8.2149} & \ppl{5.6340}/\ppl{7.1848} & \ppl{10.7197}/\ppl{15.1473} & \ppl{10.0092}/\ppl{13.6307} \\
1.0 & \textbf{\ppl{6.1830}}/\textbf{\ppl{8.2041}} & \ppl{5.6354}/\ppl{7.1875} & \ppl{10.7658}/\ppl{15.2126} & \ppl{9.9385}/\ppl{13.5743} \\
\bottomrule
\end{tabular*}
\endgroup
\end{table}

Refinement improves both perplexities over RTN at every tested strength.
The best strength depends on the model and bit width: Llama-2-7B at
W3A16 reaches its lowest values at $\nu=1.0$, whereas Qwen3-8B at
W3A16 and W4A16 reaches them at $\nu=0$ and $\nu=0.4$, respectively.

\section{Analysis of the Refinement Updates}
\label{app:theoretical-analysis}

This appendix proves the results stated in Section~\ref{sec:jarq_solver},
gives two further propositions on the scale and code updates, relates
the code update to GPTQ, and derives the computational cost of the
refinement procedure.

\subsection{Joint Scale Fitting}
\label{app:joint-scale-proof}

\begin{proposition}[Error gap for joint scale fitting]
\label{prop:joint-scale-gap}
Fix the codes of output column $j$ and write
$f_j(\mathbf{s})=\|\mathbf{y}_j-\mathbf{B}_j\mathbf{s}\|_2^2$.
For any least-squares minimizer $\mathbf{s}_j^\star$ and any
$\widetilde{\mathbf{s}}_j\in\mathbb{R}^G$,
\begin{equation}
f_j(\widetilde{\mathbf{s}}_j)-f_j(\mathbf{s}_j^\star)
=\|\mathbf{B}_j(\widetilde{\mathbf{s}}_j-\mathbf{s}_j^\star)\|_2^2
\geq0.
\label{eq:joint-scale-gap}
\end{equation}
Equality holds exactly when
$\mathbf{B}_j(\widetilde{\mathbf{s}}_j-\mathbf{s}_j^\star)=\mathbf{0}$.
\end{proposition}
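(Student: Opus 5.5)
The plan is to expand $f_j$ around the minimizer and use the normal equations \eqref{eq:scale_normal_equations} to kill the cross term. Write $\mathbf{r}^\star=\mathbf{y}_j-\mathbf{B}_j\mathbf{s}_j^\star$ and $\boldsymbol{\delta}=\widetilde{\mathbf{s}}_j-\mathbf{s}_j^\star$. Then
\begin{equation*}
\mathbf{y}_j-\mathbf{B}_j\widetilde{\mathbf{s}}_j=\mathbf{r}^\star-\mathbf{B}_j\boldsymbol{\delta},
\end{equation*}
and expanding the squared norm gives
\begin{equation*}
f_j(\widetilde{\mathbf{s}}_j)=\|\mathbf{r}^\star\|_2^2-2\,\boldsymbol{\delta}^\top\mathbf{B}_j^\top\mathbf{r}^\star+\|\mathbf{B}_j\boldsymbol{\delta}\|_2^2 .
\end{equation*}

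The key step is that any least-squares minimizer satisfies the normal equations, so $\mathbf{B}_j^\top\mathbf{r}^\star=\mathbf{0}$. This needs no assumption on the rank of $\mathbf{B}_j$. If $\mathbf{B}_j$ is rank deficient, the minimizer set is an affine set, but every element still has zero gradient of the convex quadratic $f_j$. That gradient is $-2\mathbf{B}_j^\top(\mathbf{y}_j-\mathbf{B}_j\mathbf{s})$, so the cross term vanishes for every minimizer, not only a particular one.

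Substituting $\mathbf{B}_j^\top\mathbf{r}^\star=\mathbf{0}$ yields \eqref{eq:joint-scale-gap}, since $\|\mathbf{r}^\star\|_2^2=f_j(\mathbf{s}_j^\star)$. Nonnegativity is immediate because the right-hand side is a squared norm. The equality characterization follows because a squared Euclidean norm is zero exactly when the vector is zero.

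There is no real obstacle here. The only point needing care is to justify the normal equations for an arbitrary, possibly non-unique minimizer via the first-order condition of the convex quadratic. The argument should not invoke uniqueness or the Cholesky solve, since these fail when the Gram matrix is singular.
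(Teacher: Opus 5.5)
Your proof is correct and follows the paper's argument essentially line for line. Like the paper, you expand $f_j$ about $\mathbf{s}_j^\star$ with $\boldsymbol{\delta}=\widetilde{\mathbf{s}}_j-\mathbf{s}_j^\star$, drop the cross term using $\mathbf{B}_j^\top\mathbf{r}^\star=\mathbf{0}$ from the normal equations, and read off nonnegativity and the equality case. You also justify $\mathbf{B}_j^\top\mathbf{r}^\star=\mathbf{0}$ for an arbitrary, possibly non-unique minimizer through the zero-gradient condition, which spells out the paper's closing remark that full column rank is not needed.
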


\begin{proof}[Proof of Proposition~\ref{prop:joint-scale-gap}]
Let $\mathbf{r}_j^\star=\mathbf{y}_j-\mathbf{B}_j\mathbf{s}_j^\star$
and $\boldsymbol{\delta}=\widetilde{\mathbf{s}}_j-\mathbf{s}_j^\star$.
The least-squares normal equations imply
$\mathbf{B}_j^\top\mathbf{r}_j^\star=\mathbf{0}$. Hence
\begin{align*}
f_j(\widetilde{\mathbf{s}}_j)
&=\|\mathbf{r}_j^\star-\mathbf{B}_j\boldsymbol{\delta}\|_2^2\\
&=\|\mathbf{r}_j^\star\|_2^2
  -2\boldsymbol{\delta}^\top\mathbf{B}_j^\top\mathbf{r}_j^\star
  +\|\mathbf{B}_j\boldsymbol{\delta}\|_2^2\\
&=f_j(\mathbf{s}_j^\star)+\|\mathbf{B}_j\boldsymbol{\delta}\|_2^2.
\end{align*}
The final term is nonnegative and vanishes exactly under the stated
condition. The argument does not require $\mathbf{B}_j$ to have full
column rank.
\end{proof}

\paragraph{When do separate group fits suffice?}
For the main target, $\mathbf{y}_j=\mathbf{A}\mathbf{w}_j$.
Write $\mathbf{c}_{i,j}=\mathbf{q}_{i,j}-z_{i,j}\mathbf{1}$ and
$\mathbf{e}_{i,j}=\mathbf{w}_{i,j}-s_{i,j}\mathbf{c}_{i,j}$.
Then
\begin{equation}
f_j(\mathbf{s}_j)
=\sum_i\|\mathbf{A}_i\mathbf{e}_{i,j}\|_2^2
+2\sum_{i<h}\mathbf{e}_{i,j}^\top
\mathbf{A}_i^\top\mathbf{A}_h\mathbf{e}_{h,j}.
\label{eq:group-error-coupling}
\end{equation}
If $\mathbf{A}_i^\top\mathbf{A}_h=\mathbf{0}$ for every $i\neq h$,
minimizing each group's reconstruction error separately also minimizes
the full objective. In general the cross terms remain, so separate fits
need not satisfy the joint normal equations. Proposition~\ref{prop:joint-scale-gap}
measures their resulting error gap, and
Proposition~\ref{prop:independent-gap} below expresses it through the
coupling between groups. The scale-fitting comparisons in
Table~\ref{tab:mechanism-summary} quantify this effect on model layers;
their reported gaps are normalized by the host objective.

The independent fit in Table~\ref{tab:mechanism-summary} fits each
group's scale to that group's own contribution
$\mathbf{A}_i\mathbf{w}_{i,j}$:
\begin{equation}
\widetilde{s}_{i,j}
=\arg\min_{s\in\mathbb{R}}
\|\mathbf{A}_i(\mathbf{w}_{i,j}-s\,\mathbf{c}_{i,j})\|_2^2
=\frac{\mathbf{b}_{i}^\top\mathbf{A}_i\mathbf{w}_{i,j}}
      {\|\mathbf{b}_{i}\|_2^2},
\qquad
\mathbf{b}_{i}=\mathbf{A}_i\mathbf{c}_{i,j},
\label{eq:independent-fit}
\end{equation}
where $\mathbf{b}_i$ is the $i$th column of $\mathbf{B}_j$.
We use this definition below. Fitting each scale alone to the full
target, $\mathbf{b}_i^\top\mathbf{y}_j/\|\mathbf{b}_i\|_2^2$, is a
different estimator and is not the one reported in the table.

\begin{proposition}[Gap of independent scale fits]
\label{prop:independent-gap}
Let $\mathbf{y}_j=\mathbf{A}\mathbf{w}_j$, fix the codes, and assume
every $\mathbf{b}_i\neq\mathbf{0}$ and
$\mathbf{G}_j=\mathbf{B}_j^\top\mathbf{B}_j$ is nonsingular.
Let $\mathbf{e}_{h,j}=\mathbf{w}_{h,j}-\widetilde{s}_{h,j}\mathbf{c}_{h,j}$,
$\mathbf{N}=\mathbf{B}_j\mathbf{D}_j^{-1/2}$ with
$\mathbf{D}_j=\operatorname{diag}(\mathbf{G}_j)$, and let
$\mathbf{C}_j=\mathbf{N}^\top\mathbf{N}
=\mathbf{D}_j^{-1/2}\mathbf{G}_j\mathbf{D}_j^{-1/2}$
be the normalized Gram matrix, with columns
$\mathbf{n}_i=\mathbf{b}_i/\|\mathbf{b}_i\|_2$ of $\mathbf{N}$.
Let $\Delta_j=f_j(\widetilde{\mathbf{s}}_j)-\min_{\mathbf{s}}f_j(\mathbf{s})$.
\begin{enumerate}
\item[(a)] $\Delta_j=\widehat{\mathbf{g}}^\top\mathbf{C}_j^{-1}\widehat{\mathbf{g}}$,
where $\widehat{g}_i=\sum_{h\neq i}\mathbf{n}_i^\top\mathbf{A}_h\mathbf{e}_{h,j}$.
\item[(b)] $\|\widehat{\mathbf{g}}\|_2^2/\lambda_{\max}(\mathbf{C}_j)
\leq\Delta_j\leq
\|\widehat{\mathbf{g}}\|_2^2/\lambda_{\min}(\mathbf{C}_j)$.
\item[(c)] Let $\boldsymbol{\Gamma}\in\mathbb{R}^{G\times G}$ have zero
diagonal and entries $\Gamma_{ih}=\|\mathbf{U}_i^\top\mathbf{U}_h\|_2$
for $i\neq h$, with $\mathbf{U}_i$ from \eqref{eq:group_qr}, and let
$\gamma=\|\boldsymbol{\Gamma}\|_2$. Then
$\|\mathbf{C}_j-\mathbf{I}\|_2\leq\gamma$ and
\begin{equation}
\Delta_j\leq
\frac{\gamma^2}{\lambda_{\min}(\mathbf{C}_j)}
\sum_{h=1}^{G}\|\mathbf{A}_h\mathbf{e}_{h,j}\|_2^2
\leq\frac{\gamma^2}{1-\gamma}
\sum_{h=1}^{G}\|\mathbf{A}_h\mathbf{e}_{h,j}\|_2^2
\quad\text{if }\gamma<1.
\label{eq:independent-gap-bound}
\end{equation}
\end{enumerate}
\end{proposition}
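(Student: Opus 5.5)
Let $\mathbf{s}^\star=\mathbf{G}_j^{-1}\mathbf{B}_j^\top\mathbf{y}_j$ denote the joint fit. By Proposition~\ref{prop:joint-scale-gap}, $\Delta_j=\|\mathbf{B}_j(\widetilde{\mathbf{s}}_j-\mathbf{s}^\star)\|_2^2$. Writing $\boldsymbol{\delta}=\widetilde{\mathbf{s}}_j-\mathbf{s}^\star$, the normal equations give $\mathbf{G}_j\boldsymbol{\delta}=\mathbf{B}_j^\top(\mathbf{B}_j\widetilde{\mathbf{s}}_j-\mathbf{y}_j)=-\mathbf{B}_j^\top\mathbf{r}$, where $\mathbf{r}=\mathbf{y}_j-\mathbf{B}_j\widetilde{\mathbf{s}}_j=\sum_h\mathbf{A}_h\mathbf{e}_{h,j}$. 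This uses $\mathbf{y}_j=\mathbf{A}\mathbf{w}_j=\sum_h\mathbf{A}_h\mathbf{w}_{h,j}$. Hence
\[
\Delta_j=\mathbf{r}^\top\mathbf{B}_j\mathbf{G}_j^{-1}\mathbf{B}_j^\top\mathbf{r}.
\]
The key observation is the defining property \eqref{eq:independent-fit} of the independent fit: $\mathbf{b}_i^\top\mathbf{A}_i\mathbf{e}_{i,j}=0$ for each $i$. Therefore
\[
\mathbf{b}_i^\top\mathbf{r}=\sum_{h\neq i}\mathbf{b}_i^\top\mathbf{A}_h\mathbf{e}_{h,j}=\|\mathbf{b}_i\|_2\,\widehat g_i,
\]
that is, $\mathbf{B}_j^\top\mathbf{r}=\mathbf{D}_j^{1/2}\widehat{\mathbf{g}}$. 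Substituting $\mathbf{G}_j=\mathbf{D}_j^{1/2}\mathbf{C}_j\mathbf{D}_j^{1/2}$ makes the $\mathbf{D}_j$ factors cancel and gives (a), $\Delta_j=\widehat{\mathbf{g}}^\top\mathbf{C}_j^{-1}\widehat{\mathbf{g}}$. Here $\mathbf{D}_j$ is invertible because every $\mathbf{b}_i\ne\mathbf{0}$, and $\mathbf{C}_j$ is invertible because $\mathbf{G}_j$ is. Part (b) then follows from the Rayleigh quotient bounds for the symmetric positive definite matrix $\mathbf{C}_j^{-1}$, whose eigenvalues are the reciprocals of those of $\mathbf{C}_j$.

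For (c), write $\mathbf{b}_i=\mathbf{A}_i\mathbf{c}_{i,j}=\mathbf{U}_i\mathbf{R}_i\mathbf{c}_{i,j}$ using \eqref{eq:group_qr}, so $\mathbf{n}_i=\mathbf{U}_i\mathbf{u}_i$ with $\|\mathbf{u}_i\|_2=1$. Then $\mathbf{C}_j$ has unit diagonal and, for $i\neq h$,
\[
|(\mathbf{C}_j)_{ih}|=|\mathbf{u}_i^\top\mathbf{U}_i^\top\mathbf{U}_h\mathbf{u}_h|\le\Gamma_{ih}.
\]
The matrix $\mathbf{C}_j-\mathbf{I}$ is entrywise dominated in absolute value by the nonnegative matrix $\boldsymbol{\Gamma}$, so its spectral norm is at most $\|\boldsymbol{\Gamma}\|_2=\gamma$. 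This uses the standard fact that $\|\mathbf{M}\|_2\le\||\mathbf{M}|\|_2\le\|\mathbf{N}\|_2$ whenever $|\mathbf{M}|\le\mathbf{N}$ entrywise; it follows from $|\mathbf{x}^\top\mathbf{M}\mathbf{y}|\le|\mathbf{x}|^\top\mathbf{N}|\mathbf{y}|$. Weyl's inequality then gives $\lambda_{\min}(\mathbf{C}_j)\ge1-\gamma$, which supplies the second inequality in \eqref{eq:independent-gap-bound}.

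The main step is bounding $\|\widehat{\mathbf{g}}\|_2^2\le\gamma^2\sum_h\|\mathbf{A}_h\mathbf{e}_{h,j}\|_2^2$; combined with (b), this gives the first inequality. Write $\mathbf{A}_h\mathbf{e}_{h,j}=\mathbf{U}_h\mathbf{t}_h$ with $\|\mathbf{t}_h\|_2=\|\mathbf{A}_h\mathbf{e}_{h,j}\|_2$. Then
\[
|\widehat g_i|=\Bigl|\sum_{h\ne i}\mathbf{u}_i^\top\mathbf{U}_i^\top\mathbf{U}_h\mathbf{t}_h\Bigr|\le\sum_{h\ne i}\Gamma_{ih}\|\mathbf{t}_h\|_2=(\boldsymbol{\Gamma}\boldsymbol{\tau})_i,
\]
where $\tau_h=\|\mathbf{t}_h\|_2$. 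Hence $\|\widehat{\mathbf{g}}\|_2\le\|\boldsymbol{\Gamma}\boldsymbol{\tau}\|_2\le\gamma\|\boldsymbol{\tau}\|_2$, and squaring finishes the proof. Throughout, the only point needing care is that the orthogonality $\mathbf{b}_i^\top\mathbf{A}_i\mathbf{e}_{i,j}=0$ removes the diagonal terms of $\mathbf{B}_j^\top\mathbf{r}$. That is why the bound involves only the off-diagonal coherence and is quadratic in $\gamma$.
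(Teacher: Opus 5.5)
Your proof is correct and follows the paper's argument essentially step for step: the gap formula from Proposition~\ref{prop:joint-scale-gap}, the orthogonality $\mathbf{b}_i^\top\mathbf{A}_i\mathbf{e}_{i,j}=0$ that reduces $\mathbf{B}_j^\top\mathbf{r}$ to $\mathbf{D}_j^{1/2}\widehat{\mathbf{g}}$, the Rayleigh quotient for (b), and entrywise domination by $\boldsymbol{\Gamma}$ to bound both $\|\mathbf{C}_j-\mathbf{I}\|_2$ and $\|\widehat{\mathbf{g}}\|_2$ in (c). The only cosmetic difference is that you bound the spectral norm through the bilinear form $|\mathbf{x}^\top\mathbf{M}\mathbf{y}|$, whereas the paper uses the quadratic form of the symmetric matrix $\mathbf{C}_j-\mathbf{I}$.
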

The entry $\Gamma_{ih}$ is the cosine of the smallest principal angle
between the column spaces of $\mathbf{A}_i$ and $\mathbf{A}_h$; it depends
on the inputs and $\nu$ but not on the codes. The sum in
\eqref{eq:independent-gap-bound} is the first term of
\eqref{eq:group-error-coupling} at the independent fit. Thus the gap is
zero when the group column spaces are orthogonal and grows at most
quadratically with their coherence. With many correlated groups,
$\gamma$ can exceed one and part (c) becomes uninformative, while
the exact expression in part (a) still holds.

\begin{proof}[Proof of Proposition~\ref{prop:independent-gap}]
Let $\widetilde{\mathbf{r}}=\mathbf{y}_j-\mathbf{B}_j\widetilde{\mathbf{s}}_j$.
The unique minimizer is
$\mathbf{s}_j^\star=\mathbf{G}_j^{-1}\mathbf{B}_j^\top\mathbf{y}_j$, so
$\widetilde{\mathbf{s}}_j-\mathbf{s}_j^\star
=-\mathbf{G}_j^{-1}\mathbf{B}_j^\top\widetilde{\mathbf{r}}$.
By Proposition~\ref{prop:joint-scale-gap} and
$\mathbf{B}_j=\mathbf{N}\mathbf{D}_j^{1/2}$,
\[
\Delta_j
=\widetilde{\mathbf{r}}^\top\mathbf{B}_j\mathbf{G}_j^{-1}
 \mathbf{B}_j^\top\widetilde{\mathbf{r}}
=\widetilde{\mathbf{r}}^\top\mathbf{N}\mathbf{C}_j^{-1}
 \mathbf{N}^\top\widetilde{\mathbf{r}}.
\]
Since $\mathbf{y}_j=\sum_h\mathbf{A}_h\mathbf{w}_{h,j}$, we have
$\widetilde{\mathbf{r}}=\sum_h\mathbf{A}_h\mathbf{e}_{h,j}$.
The optimality condition of \eqref{eq:independent-fit} gives
$\mathbf{b}_i^\top\mathbf{A}_i\mathbf{e}_{i,j}=0$, so
$\mathbf{n}_i^\top\widetilde{\mathbf{r}}=\widehat{g}_i$. This proves (a).
The matrix $\mathbf{C}_j$ is positive definite because $\mathbf{G}_j$
is, and (b) follows from the Rayleigh quotient.
For (c), $\mathbf{n}_i$ is a unit vector in the range of $\mathbf{U}_i$
and $\mathbf{A}_h\mathbf{e}_{h,j}$ lies in the range of $\mathbf{U}_h$.
Hence $|\mathbf{n}_i^\top\mathbf{A}_h\mathbf{e}_{h,j}|
\leq\Gamma_{ih}\|\mathbf{A}_h\mathbf{e}_{h,j}\|_2$ and
$|(\mathbf{C}_j)_{ih}|=|\mathbf{n}_i^\top\mathbf{n}_h|\leq\Gamma_{ih}$
for $i\neq h$. Let $\boldsymbol{\rho}$ have entries
$\rho_h=\|\mathbf{A}_h\mathbf{e}_{h,j}\|_2$. Then
$\|\widehat{\mathbf{g}}\|_2\leq\|\boldsymbol{\Gamma}\boldsymbol{\rho}\|_2
\leq\gamma\|\boldsymbol{\rho}\|_2$. Since $\boldsymbol{\Gamma}$ is symmetric
and entrywise nonnegative, for any $\mathbf{x}$,
$|\mathbf{x}^\top(\mathbf{C}_j-\mathbf{I})\mathbf{x}|
\leq|\mathbf{x}|^\top\boldsymbol{\Gamma}|\mathbf{x}|
\leq\gamma\|\mathbf{x}\|_2^2$, where $|\mathbf{x}|$ is taken entrywise.
Thus $\|\mathbf{C}_j-\mathbf{I}\|_2\leq\gamma$ and
$\lambda_{\min}(\mathbf{C}_j)\geq1-\gamma$. Combining these bounds with (b)
gives \eqref{eq:independent-gap-bound}.
\end{proof}
For the other estimator,
$\mathbf{b}_i^\top\mathbf{y}_j/\|\mathbf{b}_i\|_2^2$, the same argument
gives the gap
$\widehat{\mathbf{b}}^\top(\mathbf{C}_j-\mathbf{I})\mathbf{C}_j^{-1}
(\mathbf{C}_j-\mathbf{I})\widehat{\mathbf{b}}$ with
$\widehat{\mathbf{b}}=\mathbf{D}_j^{-1/2}\mathbf{B}_j^\top\mathbf{y}_j$,
which is again zero when $\mathbf{C}_j=\mathbf{I}$.

\subsection{A Single-Code Minimum with an Improving Babai Proposal}
\label{app:multicode-proof}

\begin{proof}[Proof of Proposition~\ref{prop:multicode-descent}]
Consider one group of two weights with a scalar scale $s$, zero point
$z=0$, and two-bit codes. Its reduced reconstruction problem is
\begin{equation}
f(s,\mathbf{q})=\|\mathbf{y}-s\mathbf{R}\mathbf{q}\|_2^2,
\qquad
\mathbf{R}=\begin{bmatrix}1&-1\\0&1/10\end{bmatrix},
\qquad
\mathbf{y}=\begin{bmatrix}-1\\3/10\end{bmatrix},
\qquad
\mathbf{q}\in\{0,1,2,3\}^2.
\label{eq:multicode-example}
\end{equation}
The matrix $\mathbf{R}$ is nonsingular. For a nonzero code vector,
the optimal scale and the corresponding error are
\begin{equation}
s^\star(\mathbf{q})
=\frac{(\mathbf{R}\mathbf{q})^\top\mathbf{y}}
       {\|\mathbf{R}\mathbf{q}\|_2^2},
\qquad
\phi(\mathbf{q})=\min_s f(s,\mathbf{q})
=\|\mathbf{y}\|_2^2
-\frac{((\mathbf{R}\mathbf{q})^\top\mathbf{y})^2}
      {\|\mathbf{R}\mathbf{q}\|_2^2}.
\label{eq:profiled-example-loss}
\end{equation}
At $\mathbf{q}_0=(1,2)^\top$, these give
$s_0=53/52$ and $\phi(\mathbf{q}_0)=1/104$.
Table~\ref{tab:single-code-example} enumerates all six single-code
neighbors, and Figure~\ref{fig:multicode-example} shows the refitted
error of every code pair. Every neighbor has larger error even after its scale is
refitted optimally, since their minimum is $9/409>1/104$.

\begin{table}[!htbp]
\centering
\caption{All single-code neighbors of $(1,2)^\top$ in
\eqref{eq:multicode-example}, with an exact scale refit for each
candidate. The current state's error is $1/104$.}
\label{tab:single-code-example}
\small
\begin{tabular}{lcc}
\toprule
Changed coordinate & New codes $\mathbf{q}^\top$ &
Refitted error $\phi(\mathbf{q})$ \\
\midrule
First & $(0,2)$ & $4/101$ \\
      & $(2,2)$ & $1$ \\
      & $(3,2)$ & $25/104$ \\
\addlinespace[2pt]
Second & $(1,0)$ & $9/100$ \\
       & $(1,1)$ & $1$ \\
       & $(1,3)$ & $9/409$ \\
\bottomrule
\end{tabular}
\end{table}

\begin{figure}[!htbp]
\centering
\begin{tikzpicture}[x=0.62cm, y=0.62cm, font=\tiny]
\definecolor{mcblue}{HTML}{2878B5}
\definecolor{mcred}{HTML}{C0392B}
\definecolor{mcgreen}{HTML}{1A7F37}
\fill[mcblue!0] (0,0) rectangle ++(1,1);
\node[text=black!80] at (0.5,0.5) {1.09};
\fill[mcblue!48] (0,1) rectangle ++(1,1);
\node[text=white] at (0.5,1.5) {.040};
\fill[mcblue!48] (0,2) rectangle ++(1,1);
\node[text=white] at (0.5,2.5) {.040};
\fill[mcblue!48] (0,3) rectangle ++(1,1);
\node[text=white] at (0.5,3.5) {.040};
\fill[mcblue!31] (1,0) rectangle ++(1,1);
\node[text=black!80] at (1.5,0.5) {.090};
\fill[mcblue!0] (1,1) rectangle ++(1,1);
\node[text=black!80] at (1.5,1.5) {1.00};
\fill[mcblue!83] (1,2) rectangle ++(1,1);
\node[text=white] at (1.5,2.5) {.010};
\fill[mcblue!62] (1,3) rectangle ++(1,1);
\node[text=white] at (1.5,3.5) {.022};
\fill[mcblue!31] (2,0) rectangle ++(1,1);
\node[text=black!80] at (2.5,0.5) {.090};
\fill[mcblue!21] (2,1) rectangle ++(1,1);
\node[text=black!80] at (2.5,1.5) {.158};
\fill[mcblue!0] (2,2) rectangle ++(1,1);
\node[text=black!80] at (2.5,2.5) {1.00};
\fill[mcblue!95] (2,3) rectangle ++(1,1);
\node[text=white] at (2.5,3.5) {0};
\fill[mcblue!31] (3,0) rectangle ++(1,1);
\node[text=black!80] at (3.5,0.5) {.090};
\fill[mcblue!25] (3,1) rectangle ++(1,1);
\node[text=black!80] at (3.5,1.5) {.122};
\fill[mcblue!14] (3,2) rectangle ++(1,1);
\node[text=black!80] at (3.5,2.5) {.240};
\fill[mcblue!0] (3,3) rectangle ++(1,1);
\node[text=black!80] at (3.5,3.5) {1.00};
\draw[black!25] (0,0) grid (4,4);
\draw[mcred, line width=0.8pt, dash pattern=on 2pt off 1pt] (0.06,2.06) rectangle ++(0.88,0.88);
\draw[mcred, line width=0.8pt, dash pattern=on 2pt off 1pt] (2.06,2.06) rectangle ++(0.88,0.88);
\draw[mcred, line width=0.8pt, dash pattern=on 2pt off 1pt] (3.06,2.06) rectangle ++(0.88,0.88);
\draw[mcred, line width=0.8pt, dash pattern=on 2pt off 1pt] (1.06,0.06) rectangle ++(0.88,0.88);
\draw[mcred, line width=0.8pt, dash pattern=on 2pt off 1pt] (1.06,1.06) rectangle ++(0.88,0.88);
\draw[mcred, line width=0.8pt, dash pattern=on 2pt off 1pt] (1.06,3.06) rectangle ++(0.88,0.88);
\draw[black, line width=1.1pt] (1,2) rectangle ++(1,1);
\draw[mcgreen, line width=1.1pt] (2,3) rectangle ++(1,1);
\draw[-stealth, line width=0.9pt, mcgreen] (1.72,2.72) to[bend left=35] (2.28,3.3);
\node[anchor=north, text=black!70] at (0.5,0) {0};
\node[anchor=east, text=black!70] at (0,0.5) {0};
\node[anchor=north, text=black!70] at (1.5,0) {1};
\node[anchor=east, text=black!70] at (0,1.5) {1};
\node[anchor=north, text=black!70] at (2.5,0) {2};
\node[anchor=east, text=black!70] at (0,2.5) {2};
\node[anchor=north, text=black!70] at (3.5,0) {3};
\node[anchor=east, text=black!70] at (0,3.5) {3};
\node[anchor=north, text=black!70] at (2,-0.45) {$q_1$};
\node[anchor=east, text=black!70] at (-0.4,2) {$q_2$};
\end{tikzpicture}
\caption{Error after refitting the scale for each code pair in
\eqref{eq:multicode-example} (darker is lower). Black: current codes
$(1,2)$; dashed: single-code moves; green: the Babai step to $(2,3)$.}
\label{fig:multicode-example}
\end{figure}

\medskip
At the current scale $s_0$, bounded Babai back substitution gives
\begin{equation}
q_2^+=\operatorname{clip}_{\{0,1,2,3\}}
\!\left(\operatorname{round}\frac{156}{53}\right)=3,
\qquad
q_1^+=\operatorname{clip}_{\{0,1,2,3\}}
\!\left(\operatorname{round}\frac{107}{53}\right)=2.
\label{eq:example-babai-codes}
\end{equation}
Thus $\mathbf{q}^+=(2,3)^\top$ changes both codes and satisfies
\begin{equation}
f(s_0,\mathbf{q}^+)=\frac{109}{270400}
<\frac{1}{104}=f(s_0,\mathbf{q}_0).
\label{eq:example-babai-descent}
\end{equation}
The proposal is therefore accepted. The next scale fit gives $s^+=1$
and zero error because $\mathbf{y}=\mathbf{R}(2,3)^\top$.
\end{proof}

This construction is compatible with the main reconstruction target:
$\mathbf{y}=\mathbf{R}\mathbf{w}$ for $\mathbf{w}=(2,3)^\top$.
For any fixed $\nu>0$ and a sufficiently large constant $c$, the
matrix $c^2\mathbf{R}^\top\mathbf{R}-\nu^2\mathbf{I}$ is positive
definite. Choosing $\widetilde{\mathbf{X}}$ with
$\widetilde{\mathbf{X}}^\top\widetilde{\mathbf{X}}
=c^2\mathbf{R}^\top\mathbf{R}-\nu^2\mathbf{I}$ makes the augmented
objective in \eqref{eq:adaptive_grid_problem} equal to $c^2f(s,\mathbf{q})$,
so the scale fits, Babai codes, and strict error comparisons are unchanged.

\subsection{Refitting the Grid of a Fixed-Grid Optimum}
\label{app:grid-staleness-proof}

\begin{proof}[Proof of Proposition~\ref{prop:grid-staleness}]
Take one group of two weights with zero point $z=0$, two-bit codes
$\mathbb{B}=\{0,1,2,3\}$, and
\begin{equation}
\mathbf{R}=\begin{bmatrix}1&\rho\\0&\sqrt{1-\rho^2}\end{bmatrix},
\qquad \rho=\tfrac{18}{25},
\qquad \mathbf{y}=\mathbf{R}\mathbf{w},
\qquad \mathbf{w}=\bigl(\tfrac{21}{50},\tfrac{31}{20}\bigr)^{\top}.
\label{eq:staleness-example}
\end{equation}
Then $f(s,\mathbf{q})=\|\mathbf{y}-s\mathbf{R}\mathbf{q}\|_2^2
=(\mathbf{w}-s\mathbf{q})^\top\mathbf{M}(\mathbf{w}-s\mathbf{q})$ with
$\mathbf{M}=\mathbf{R}^\top\mathbf{R}=\bigl[\begin{smallmatrix}1&\rho\\\rho&1\end{smallmatrix}\bigr]$,
so the inputs of the two weights have correlation $0.72$.

\emph{Host state.} Let $s_0=6/5$ and $\mathbf{q}_0=(1,1)^\top$, so
$f(s_0,\mathbf{q}_0)=16889/50000\approx0.338$. Evaluating all 16 codes at
$s_0$ gives the next smallest value $f(s_0,(0,2)^\top)=19241/50000\approx0.385$;
hence $\mathbf{q}_0$ minimizes $f(s_0,\cdot)$ over $\mathbb{B}^2$.

\emph{Scale refit.} For $\mathbf{q}_0=(1,1)^\top$, the least-squares scale is
$s_1=(\mathbf{q}_0^\top\mathbf{M}\mathbf{w})/(\mathbf{q}_0^\top\mathbf{M}\mathbf{q}_0)
=(w_1+w_2)/2=197/200$, and $f(s_1,\mathbf{q}_0)=89383/500000\approx0.179$.

\emph{Babai proposal at $s_1$.} Back substitution on
$\mathbf{R}\mathbf{q}\approx\mathbf{y}/s_1$ gives the continuous values
$w_2/s_1=310/197\approx1.574$, so $q_2^+=2$, and then
$w_1/s_1+\rho(w_2/s_1-2)=588/4925\approx0.119$, so $q_1^+=0$; neither
value needs clipping. The proposal $\mathbf{q}^+=(0,2)^\top$ changes both
codes and has $f(s_1,\mathbf{q}^+)=3087/31250\approx0.099<f(s_1,\mathbf{q}_0)$,
so it is accepted.
\end{proof}

At $s_0$ the same back substitution returns $\mathbf{q}_0$ itself
($w_2/s_0=31/24$ and $14/25$ round to $1$ and $1$), in line with
$\mathbf{q}_0$ being optimal on the host grid. One more scale refit gives
$s_2=4631/5000$ and $f(s_2,\mathbf{q}^+)=132741/1562500\approx0.085$, a
$75\%$ reduction from the host state; the Babai proposal at $s_2$ is
$\mathbf{q}^+$ again, so the state is scale-optimal and Babai-stable.
The scaling argument at the end of Appendix~\ref{app:multicode-proof}
realizes this instance with the augmented design of
\eqref{eq:adaptive_grid_problem} for any fixed $\nu>0$.

\subsection{Babai Proposals versus Coordinatewise Rounding}
\label{app:babai-vs-rounding}

Fix an output column $j$, a group $i$, and write $s=s_{i,j}\neq0$,
$z=z_{i,j}$, and $\mathbf{t}=\mathbf{y}_j^{(i,k)}$ for the conditional
target in \eqref{eq:conditional_residual}. With
$\mathbf{v}=\mathbf{U}_i^\top\mathbf{t}/s$ as in \eqref{eq:babai_update}
and $\eta=\|(\mathbf{I}-\mathbf{U}_i\mathbf{U}_i^\top)\mathbf{t}\|_2^2$,
the group error in \eqref{eq:group_bils} is
\begin{equation}
E(\mathbf{q})
=\|\mathbf{t}-s\mathbf{A}_i(\mathbf{q}-z\mathbf{1})\|_2^2
=s^2\|\mathbf{v}-\mathbf{R}_i(\mathbf{q}-z\mathbf{1})\|_2^2+\eta .
\label{eq:group-error-reduced}
\end{equation}
Its unconstrained real minimizer is
$\boldsymbol{\xi}=z\mathbf{1}+\mathbf{R}_i^{-1}\mathbf{v}$.
Let $\mathbf{q}^{\mathrm{B}}$ be the Babai proposal
\eqref{eq:babai_update} and
$\mathbf{q}^{\mathrm{R}}=\operatorname{round}(\boldsymbol{\xi})$ the
coordinatewise rounding of $\boldsymbol{\xi}$.

\begin{proposition}[Babai versus coordinatewise rounding]
\label{prop:babai-vs-rounding}
\begin{enumerate}
\item[(a)] If no clipping is active in \eqref{eq:babai_update}, then
every entry of $\mathbf{v}-\mathbf{R}_i(\mathbf{q}^{\mathrm{B}}-z\mathbf{1})$
has magnitude at most $|(\mathbf{R}_i)_{\ell\ell}|/2$, and
\[
E(\mathbf{q}^{\mathrm{B}})\leq\eta+\frac{s^2}{4}
\sum_{\ell=1}^{d_i}(\mathbf{R}_i)_{\ell\ell}^2 .
\]
\item[(b)] Ignore the code bounds and suppose the fractional parts of
$\boldsymbol{\xi}$ are independent and uniform on $[0,1)$. Then
\[
\mathbb{E}\,E(\mathbf{q}^{\mathrm{B}})
=\eta+\frac{s^2}{12}\sum_{\ell}(\mathbf{R}_i)_{\ell\ell}^2,
\qquad
\mathbb{E}\,E(\mathbf{q}^{\mathrm{R}})
=\eta+\frac{s^2}{12}\|\mathbf{R}_i\|_F^2,
\]
and the difference is
$\frac{s^2}{12}\sum_{\ell<h}(\mathbf{R}_i)_{\ell h}^2
=\frac{s^2}{12}\bigl(\|\mathbf{A}_i\|_F^2-\sum_\ell(\mathbf{R}_i)_{\ell\ell}^2\bigr)
\geq0$, with equality if and only if $\mathbf{A}_i$ has orthogonal columns.
\end{enumerate}
\end{proposition}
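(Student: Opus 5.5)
Everything reduces to the identity \eqref{eq:group-error-reduced}. Write $E(\mathbf{q})=s^2\|\mathbf{v}-\mathbf{R}_i(\mathbf{q}-z\mathbf{1})\|_2^2+\eta$ and substitute $\mathbf{v}=\mathbf{R}_i(\boldsymbol{\xi}-z\mathbf{1})$. This gives
\[
E(\mathbf{q})=\eta+s^2\|\mathbf{R}_i(\boldsymbol{\xi}-\mathbf{q})\|_2^2 ,
\]
so both parts become statements about the vector $\mathbf{r}=\mathbf{R}_i(\boldsymbol{\xi}-\mathbf{q})$. Because $\mathbf{R}_i$ is upper triangular, its $\ell$th entry is $r_\ell=\sum_{h\ge\ell}(\mathbf{R}_i)_{\ell h}(\xi_h-q_h)$.

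\textbf{Part (a).} When no clipping is active, \eqref{eq:babai_update} says that $q^{\mathrm{B}}_\ell$ is the nearest integer to
\[
c_\ell=z+\Bigl(v_\ell-\sum_{h>\ell}(\mathbf{R}_i)_{\ell h}(q^{\mathrm{B}}_h-z)\Bigr)/(\mathbf{R}_i)_{\ell\ell}.
\]
From the definition, the $\ell$th entry of $\mathbf{v}-\mathbf{R}_i(\mathbf{q}^{\mathrm{B}}-z\mathbf{1})$ equals $(\mathbf{R}_i)_{\ell\ell}(c_\ell-q^{\mathrm{B}}_\ell)$. Since $|c_\ell-q^{\mathrm{B}}_\ell|\le 1/2$, this entry has magnitude at most $|(\mathbf{R}_i)_{\ell\ell}|/2$. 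Squaring, summing over $\ell$, and multiplying by $s^2$ gives the bound.

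\textbf{Part (b), coordinatewise rounding.} Define $\varepsilon_h=\xi_h-q^{\mathrm{R}}_h$. Under the uniform-fractional-part model these are i.i.d.\ uniform on $[-1/2,1/2)$, with mean $0$ and variance $1/12$. Then
\[
\mathbb{E}\|\mathbf{R}_i\boldsymbol{\varepsilon}\|_2^2=\operatorname{tr}(\mathbf{R}_i^\top\mathbf{R}_i)/12=\|\mathbf{R}_i\|_F^2/12 .
\]

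\textbf{Part (b), Babai.} By part (a), with bounds ignored, $r_\ell=(\mathbf{R}_i)_{\ell\ell}\delta_\ell$, where $\delta_\ell=c_\ell-q^{\mathrm{B}}_\ell$ is the rounding error at step $\ell$. The key step is to show that the $\delta_\ell$ are again i.i.d.\ uniform on $[-1/2,1/2)$. I would argue backward in $\ell$:
\[
c_\ell=\xi_\ell+\sum_{h>\ell}\frac{(\mathbf{R}_i)_{\ell h}}{(\mathbf{R}_i)_{\ell\ell}}\,(\xi_h-q^{\mathrm{B}}_h),
\]
and the sum depends only on $\xi_h$ for $h>\ell$. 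Conditional on those coordinates, the fractional part of $c_\ell$ is the fractional part of $\xi_\ell$ shifted by a fixed constant modulo $1$. Since $\xi_\ell$ has a uniform fractional part independent of the later coordinates, $\delta_\ell$ is uniform and independent of $\delta_{\ell+1},\dots,\delta_{d_i}$. It follows that
\[
\mathbb{E}\sum_\ell(\mathbf{R}_i)_{\ell\ell}^2\delta_\ell^2=\sum_\ell(\mathbf{R}_i)_{\ell\ell}^2/12 .
\]
This independence step is the main point needing care. It requires stating precisely that only fractional parts matter, via shift-invariance of the uniform distribution on the torus; once that is in place, the rest is bookkeeping.

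\textbf{The difference.} Subtracting the two expectations leaves $\tfrac{s^2}{12}$ times the strictly upper-triangular energy $\sum_{\ell<h}(\mathbf{R}_i)_{\ell h}^2$. Next, $\|\mathbf{R}_i\|_F^2=\|\mathbf{U}_i\mathbf{R}_i\|_F^2=\|\mathbf{A}_i\|_F^2$ because $\mathbf{U}_i$ has orthonormal columns, which gives the stated form. Finally, the difference is zero exactly when $\mathbf{R}_i$ is diagonal. That holds if and only if $\mathbf{A}_i^\top\mathbf{A}_i=\mathbf{R}_i^\top\mathbf{R}_i$ is diagonal: a diagonal $\mathbf{R}_i$ clearly gives a diagonal Gram matrix, and conversely, by uniqueness of the Cholesky factor up to signs, a diagonal Gram matrix forces a diagonal $\mathbf{R}_i$. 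A diagonal Gram matrix means precisely that $\mathbf{A}_i$ has orthogonal columns.
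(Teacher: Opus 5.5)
Your proposal is correct and follows the paper's proof essentially step for step: it uses the same nearest-plane residual identity, that the $\ell$th entry equals $(\mathbf{R}_i)_{\ell\ell}(c_\ell-q^{\mathrm{B}}_\ell)$, for (a); the same backward conditioning argument showing that each Babai rounding error is uniform modulo one given the later coordinates; and the same identity $\|\mathbf{R}_i\|_F=\|\mathbf{A}_i\|_F$ for the difference. The only differences are cosmetic: you settle the equality case by uniqueness of the Cholesky factor up to signs, where the paper argues by induction on the rows of $\mathbf{R}_i$; and, as in the paper, the expectation needs only each $\delta_\ell$ to be marginally uniform, not the joint independence you establish.
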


Part (a) is deterministic and is the classical nearest-plane
bound~\citep{babai1986lovasz}; \citet{chen2025geometry} note the same
bound for GPTQ without clipping. Part (b) is exact when $\mathbf{v}$ is
uniformly distributed modulo the lattice $\mathbf{R}_i\mathbb{Z}^{d_i}$
and otherwise follows the standard uniform model of rounding error; it
compares expected errors.

\begin{proof}[Proof of Proposition~\ref{prop:babai-vs-rounding}]
Write $\mathbf{R}=\mathbf{R}_i$, $\mathbf{c}=\mathbf{q}^{\mathrm{B}}-z\mathbf{1}$,
and let $x_\ell=(v_\ell-\sum_{h>\ell}R_{\ell h}c_h)/R_{\ell\ell}$.
Since $z$ is an integer and no clipping is active,
$c_\ell=\operatorname{round}(x_\ell)$. The $\ell$th entry of
$\mathbf{v}-\mathbf{R}\mathbf{c}$ is
$v_\ell-\sum_{h\geq\ell}R_{\ell h}c_h=R_{\ell\ell}(x_\ell-c_\ell)$,
and $|x_\ell-c_\ell|\leq1/2$. Summing squares and using
\eqref{eq:group-error-reduced} gives (a).

For (b), write $\boldsymbol{\xi}=\mathbf{m}+\mathbf{u}$ with
$\mathbf{m}$ integer and $\mathbf{u}$ uniform on $[0,1)^{d_i}$.
Since $v_\ell=\sum_{h\geq\ell}R_{\ell h}(\xi_h-z)$,
\[
x_\ell=u_\ell+(m_\ell-z)
+\sum_{h>\ell}\frac{R_{\ell h}}{R_{\ell\ell}}(\xi_h-z-c_h),
\]
where the last sum depends only on $u_h$ for $h>\ell$. Conditional on
these, $x_\ell$ is uniform modulo one, so
$\delta_\ell=x_\ell-c_\ell$ is uniform on $[-1/2,1/2]$ and independent of
$\delta_h$ for $h>\ell$. Hence $\mathbb{E}\,\delta_\ell^2=1/12$ and
$\mathbb{E}\|\mathbf{v}-\mathbf{R}\mathbf{c}\|_2^2
=\sum_\ell R_{\ell\ell}^2/12$. For rounding,
$\mathbf{v}-\mathbf{R}(\mathbf{q}^{\mathrm{R}}-z\mathbf{1})
=\mathbf{R}(\boldsymbol{\xi}-\mathbf{q}^{\mathrm{R}})$, whose entries
$\xi_h-q^{\mathrm{R}}_h$ are independent, uniform on $[-1/2,1/2]$, and
have mean zero. Thus the expectation is $\|\mathbf{R}\|_F^2/12$.
The difference is the strictly upper-triangular energy of $\mathbf{R}$.
Because $\mathbf{U}_i$ has orthonormal columns,
$\|\mathbf{R}\|_F=\|\mathbf{A}_i\|_F$. The difference vanishes if and only
if $\mathbf{R}$ is diagonal. For nonsingular upper-triangular
$\mathbf{R}$, this holds if and only if
$\mathbf{A}_i^\top\mathbf{A}_i=\mathbf{R}^\top\mathbf{R}$ is diagonal,
by induction on the rows of $\mathbf{R}$.
\end{proof}

Proposition~\ref{prop:multicode-descent} shows that coordinated code
changes can help; Proposition~\ref{prop:babai-vs-rounding} gives the
expected size of the advantage over coordinatewise rounding: it is
proportional to the within-group off-diagonal energy of $\mathbf{R}_i$,
which is produced by correlated input coordinates. RTN assigns each
code by rounding its own weight, while GPTQ codes already come from a
nearest-plane recursion on the full Hessian
lattice~\citep{chen2025geometry}. The higher Babai improvement rate
for the RTN host than for the GPTQ host in Table~\ref{tab:mechanism-summary}
(40.14\% versus 13.53\%) is consistent with this view.

\subsection{Monotone Descent and Finite Termination}
\label{app:descent-proof}

The objective \eqref{eq:bbmils_columns} is a sum of per-column terms
$\mathcal{F}_j$, and column $j$'s updates change only
$(\mathbf{s}_j,\mathbf{q}_j)$. It therefore suffices to consider one
column. We write $(\mathbf{s}^{(k)},\mathbf{q}^{(k)})$ for its state
after iteration $k$, with $(\mathbf{s}^{(0)},\mathbf{q}^{(0)})$ from the
host, and drop the index $j$.

\begin{proof}[Proof of Theorem~\ref{thm:descent}]
\emph{Scale update.} Without the ridge term, the Cholesky solve returns
the minimizer of $\mathcal{F}(\cdot,\mathbf{q}^{(k-1)})$, which is unique
because $\lambda_j=0$ means $\mathbf{G}_j$ passed the Cholesky test and
is positive definite. Hence
\[
\mathcal{F}(\mathbf{s}^{(k)},\mathbf{q}^{(k-1)})
=\phi(\mathbf{q}^{(k-1)})
\leq\mathcal{F}(\mathbf{s}^{(k-1)},\mathbf{q}^{(k-1)}).
\]
\emph{Code update.} With the scales and all other groups fixed,
$\mathcal{F}$ as a function of $\mathbf{q}_i$ equals the conditional
residual in \eqref{eq:group_bils}, since \eqref{eq:conditional_residual}
subtracts exactly the current contributions of the other groups.
The QR reduction in \eqref{eq:qr_equivalence} and the projection onto
$\mathbf{U}_i$ in \eqref{eq:group-error-reduced} change this residual
only by constants independent of $\mathbf{q}_i$. The acceptance test
therefore accepts a proposal exactly when it strictly lowers
$\mathcal{F}$. Rejected proposals and skipped groups with
$|s_i|\leq\epsilon_{\rm fp}$ leave the state unchanged. This proves (i).
For $|s_i|>\epsilon_{\rm fp}$ the proposal is computed at the current
scale.

\emph{Profiled objective.} Codes change only through accepted proposals.
If iteration $k$ accepts one, then
$\phi(\mathbf{q}^{(k)})\leq\mathcal{F}(\mathbf{s}^{(k)},\mathbf{q}^{(k)})
<\mathcal{F}(\mathbf{s}^{(k)},\mathbf{q}^{(k-1)})=\phi(\mathbf{q}^{(k-1)})$,
which proves (ii).

\emph{Termination.} By (ii), the code vectors after code-changing
iterations have strictly decreasing values of $\phi$, so none repeats.
The set $\mathbb{B}^{d_{\mathrm{in}}}$ is finite, so at most
$|\mathbb{B}|^{d_{\mathrm{in}}}-1$ iterations change a code. Consider
the first iteration $k$ that accepts no proposal. Throughout its sweep
the state is $(\mathbf{s}^{(k)},\mathbf{q}^{(k-1)})$, so every proposal
was computed at this state and rejected, and $\mathbf{s}^{(k)}$
minimizes $\mathcal{F}(\cdot,\mathbf{q}^{(k-1)})$. The state is thus
scale-optimal and Babai-stable. The next scale update receives the same
codes and returns the same unique minimizer, so the same proposals are
rejected again. This proves (iii).
\end{proof}

\paragraph{Early stopping.}
Under the exact-solve assumption, $L_j^{(k)}$ in
\eqref{eq:strict_stopping} includes the constant in
\eqref{eq:qr_equivalence} and therefore equals
$\phi(\mathbf{q}^{(k-1)})$, so it is nonincreasing in $k$.
If iteration $k-1$ changes no code, then $L_j^{(k)}=L_j^{(k-1)}$ and
the rule stops the column. Hence a column stops at the latest one
iteration after reaching the fixed point in Theorem~\ref{thm:descent}.
It may stop earlier, when the decrease falls below $\tau$; that state
satisfies (i) but need not be Babai-stable.

\begin{proof}[Proof of Corollary~\ref{cor:host}]
For $K=0$ the host state is returned. Otherwise, the host scales are
feasible for the first scale update, so
$\mathcal{F}_j(\mathbf{s}_j^{(1)},\mathbf{q}_j^{(0)})
\leq\mathcal{F}_j(\mathbf{s}_j^{(0)},\mathbf{q}_j^{(0)})$.
By Theorem~\ref{thm:descent}(i), later half-steps do not increase
$\mathcal{F}_j$, and early stopping returns one of these states.
Summing over columns gives the result.
\end{proof}

Both states are evaluated with the refined model's inputs
$\widetilde{\mathbf{X}}$ and the same target. Since
$\widetilde{\mathbf{X}}$ depends on the preceding layers, the guarantee
is per layer; FP32 scale solves and the final cast to the model's
weight dtype can perturb it by rounding.

\begin{remark}[Making the guarantee unconditional]
\label{rem:ridge-acceptance}
When the ridge branch is triggered, the scales
$(\mathbf{G}_j+\lambda\mathbf{I})^{-1}\mathbf{B}_j^\top\mathbf{y}_j$
need not lower $\mathcal{F}_j$ relative to the previous scales, so
Theorem~\ref{thm:descent}(i) can fail at that update. Suppose the
ridge solution were kept only when it does not increase $\mathcal{F}_j$,
and the previous scales retained otherwise. Then (i) and
Corollary~\ref{cor:host} would hold without the exact-solve assumption.
With $\tau>0$, $L_j^{(k)}$ would be nonincreasing and bounded below, so
the absolute test in \eqref{eq:strict_stopping} would stop each column
within $L_j^{(1)}/\tau+2$ iterations. Part (iii) would still
require exact scale solves.
\end{remark}

\subsection{Quality of Babai-Stable Codes}
\label{app:stable-quality}

\begin{proposition}[Error at a Babai-stable point]
\label{prop:stable-quality}
Let $(\mathbf{s}_j,\mathbf{q}_j)$ be scale-optimal and Babai-stable for
column $j$, let $i$ be a group with $|s_{i,j}|>\epsilon_{\rm fp}$, and let
$\mathbf{r}$ be its conditional target, the residual after subtracting
the other groups' contributions. If the bounded Babai proposal for this
group needs no clipping, then
\begin{equation}
\|\mathbf{r}-s_{i,j}\mathbf{A}_i(\mathbf{q}_{i,j}-z_{i,j}\mathbf{1}_{d_i})\|_2^2
\le\min_{\mathbf{u}\in\mathbb{R}^{d_i}}\|\mathbf{r}-\mathbf{A}_i\mathbf{u}\|_2^2
+\frac{s_{i,j}^2}{4}\sum_{\ell=1}^{d_i}(\mathbf{R}_i)_{\ell\ell}^2 .
\label{eq:stable-quality}
\end{equation}
\end{proposition}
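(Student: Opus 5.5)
The plan is to combine Babai-stability with the deterministic nearest-plane bound of Proposition~\ref{prop:babai-vs-rounding}(a). Throughout, fix the column $j$ and the group $i$, and write $s=s_{i,j}$, $z=z_{i,j}$, and $\mathbf{R}=\mathbf{R}_i$. Since the state is scale-optimal, the conditional target $\mathbf{r}$ is exactly the vector $\mathbf{t}=\mathbf{y}_j^{(i,k)}$ used in \eqref{eq:group-error-reduced}. It is built from the current scales and from the current codes of all other groups. The group error $E(\mathbf{q})=\|\mathbf{r}-s\mathbf{A}_i(\mathbf{q}-z\mathbf{1})\|_2^2$ therefore coincides, up to a constant independent of $\mathbf{q}$, with $\mathcal{F}_j$ viewed as a function of the codes of group $i$. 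This is the same identification used in the code-update step of the proof of Theorem~\ref{thm:descent}.

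\emph{Step 1 (stability gives comparison with the proposal).} Let $\mathbf{q}^{\mathrm{B}}$ be the bounded Babai proposal computed at the current state. Babai-stability means that accepting $\mathbf{q}^{\mathrm{B}}$ does not strictly lower $\mathcal{F}_j$. By the identification above, this gives
\[
E(\mathbf{q}_{i,j})\le E(\mathbf{q}^{\mathrm{B}}).
\]
One case needs care: if the proposal's residual were nonfinite it would be rejected, but in exact arithmetic with $|s|>\epsilon_{\rm fp}$ the residual is always finite, so this case does not arise.

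\emph{Step 2 (nearest-plane bound).} The proposal needs no clipping by hypothesis, so Proposition~\ref{prop:babai-vs-rounding}(a) applies directly and gives
\[
E(\mathbf{q}^{\mathrm{B}})\le\eta+\frac{s^2}{4}\sum_\ell R_{\ell\ell}^2,
\qquad \eta=\|(\mathbf{I}-\mathbf{U}_i\mathbf{U}_i^\top)\mathbf{r}\|_2^2.
\]

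\emph{Step 3 (identify $\eta$ as the real least-squares error).} The columns of $\mathbf{U}_i$ form an orthonormal basis of $\operatorname{range}(\mathbf{A}_i)$, because $\mathbf{A}_i=\mathbf{U}_i\mathbf{R}_i$ with $\mathbf{R}_i$ nonsingular. Hence
\[
\eta=\min_{\mathbf{u}\in\mathbb{R}^{d_i}}\|\mathbf{r}-\mathbf{A}_i\mathbf{u}\|_2^2.
\]
Chaining Steps 1--3 yields \eqref{eq:stable-quality}.

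The only delicate point is Step 1. It requires that the Babai-stability condition, which is stated in terms of $\mathcal{F}_j$, is evaluated with the same conditional target $\mathbf{r}$ that appears in \eqref{eq:stable-quality}. I would settle this by noting two facts. First, at a stable state no other group's codes change during the sweep. Second, the proposal is computed at the state $(\mathbf{s},\mathbf{q})$ itself, exactly as in part (iii) of Theorem~\ref{thm:descent}. Together these make the target fixed and the comparison exact.
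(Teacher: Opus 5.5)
Your proof is correct and follows the paper's argument. Babai stability gives $E(\mathbf{q}_{i,j})\le E(\mathbf{q}^{\mathrm{B}})$, the unclipped nearest-plane bound controls $E(\mathbf{q}^{\mathrm{B}})$, and nonsingularity of $\mathbf{R}_i$ identifies $\eta$ with the real least-squares error; the only difference is that the paper re-derives the bound of Proposition~\ref{prop:babai-vs-rounding}(a) inline instead of citing it. One minor point: scale-optimality is not what makes $\mathbf{r}$ the proposal's target (that comes from the proposal being computed at $(\mathbf{s},\mathbf{q})$, as you note at the end), and it is not actually used anywhere in the argument.
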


\begin{proof}
Write $s=s_{i,j}$, $z=z_{i,j}$, and $\mathbf{A}_i=\mathbf{U}_i\mathbf{R}_i$.
For any codes $\mathbf{q}$, orthogonality gives
\begin{equation*}
\|\mathbf{r}-s\mathbf{A}_i(\mathbf{q}-z\mathbf{1})\|_2^2
=\|(\mathbf{I}-\mathbf{U}_i\mathbf{U}_i^\top)\mathbf{r}\|_2^2
+s^2\|\mathbf{v}-\mathbf{R}_i(\mathbf{q}-z\mathbf{1})\|_2^2,
\qquad \mathbf{v}=\mathbf{U}_i^\top\mathbf{r}/s .
\end{equation*}
Since $\mathbf{R}_i$ is nonsingular, the first term equals
$\min_{\mathbf{u}}\|\mathbf{r}-\mathbf{A}_i\mathbf{u}\|_2^2$. For the
unclipped proposal $\mathbf{q}^+$, let
$t_\ell=\bigl(v_\ell-\sum_{h>\ell}(\mathbf{R}_i)_{\ell h}(q^+_h-z)\bigr)/(\mathbf{R}_i)_{\ell\ell}$,
so that $q^+_\ell=\operatorname{round}(z+t_\ell)$ by \eqref{eq:babai_update}
and $|t_\ell-(q^+_\ell-z)|\le 1/2$. The $\ell$th entry of
$\mathbf{v}-\mathbf{R}_i(\mathbf{q}^+-z\mathbf{1})$ is
$(\mathbf{R}_i)_{\ell\ell}\bigl(t_\ell-(q^+_\ell-z)\bigr)$, so its squared
norm is at most $\sum_\ell(\mathbf{R}_i)_{\ell\ell}^2/4$. Hence
$\mathbf{q}^+$ satisfies \eqref{eq:stable-quality}. With the other groups
and the scales fixed, $\mathcal{F}_j$ differs from the group's conditional
error by a constant, and Babai stability means that $\mathbf{q}^+$ does
not strictly lower $\mathcal{F}_j$. The current codes therefore have
conditional error no larger than that of $\mathbf{q}^+$.
\end{proof}

The bound is deterministic and needs no distributional assumption.
It compares the codes with the best real-valued weights for the group
on the conditional target and shrinks with the grid spacing $s_{i,j}$.

\subsection{Relation to GPTQ as Babai's Algorithm}
\label{app:gptq-babai}

\begin{remark}[Relation to GPTQ]
\label{rem:gptq-babai}
\citet{chen2025geometry} show that GPTQ, executed from the last to the
first dimension, is Babai's nearest-plane algorithm for a closest-vector
problem on a lattice defined by the Hessian, on a fixed quantization
grid. The code update \eqref{eq:babai_update} applies the same
recursion to a single group: the lattice is generated by
$s_{i,j}\mathbf{R}_i$ at the current scale, and the target is the
conditional residual \eqref{eq:conditional_residual}, from which the
latest contributions of all other groups are subtracted. With $K=0$,
{\upshape\method} returns the host state, whether or not the host is GPTQ.
The differences from GPTQ are as follows. (i) The grid changes: all
scales of a column are refit jointly given the codes, and the codes are
decoded again on the new grid. (ii) Each proposal is accepted only if
it lowers $\mathcal{F}_j$, which gives Theorem~\ref{thm:descent};
GPTQ applies each rounding decision once. (iii) Decoding is repeated
across groups and iterations, so a group's codes respond to later
changes in other groups. When the host is GPTQ, its codes are already
nearest-plane points of a closely related lattice, which is consistent
with the lower Babai improvement rate for GPTQ in
Table~\ref{tab:mechanism-summary}
(Appendix~\ref{app:babai-vs-rounding}). The lattices differ in the
calibration inputs, the regularization, and the grouping, so this
is not an identity.
\end{remark}

\subsection{Computational Complexity}
\label{app:complexity}

Let $d=d_{\mathrm{in}}$, $p=d_{\mathrm{out}}$, and assume equal-sized
groups of $g$ weights, so $G=d/g$. We count dense arithmetic for
QR reduction followed by cached group factors and incremental
residual updates. The counts cover one linear layer's refinement;
host quantization, activation collection, and model evaluation are
separate costs.

\paragraph{Shared preprocessing.}
Thin QR of the $(n+d)\times d$ augmented matrix costs
$O((n+d)d^2)$. Forming $\overline{\mathbf{Y}}=\mathbf{U}^\top\mathbf{Y}$ as in
Algorithm~\ref{alg:jarq}, equivalently solving
$\mathbf{R}^\top\overline{\mathbf{Y}}=\mathbf{A}^\top\mathbf{Y}$, costs $O((n+d)dp+d^2p)$.
For the main target, $\mathbf{Y}=\mathbf{A}\mathbf{W}$, so the
identity $\overline{\mathbf{Y}}=\mathbf{R}\mathbf{W}$ permits a
target-projection shortcut costing $O(d^2p)$ instead. The orthogonal
residual in \eqref{eq:qr_equivalence} is zero in exact arithmetic
for this target.
QR factorization of each reduced $d\times g$ group matrix costs
$O(dg^2)$, or $O(d^2g)$ across all groups. These factors are reused
across output columns and iterations.

\paragraph{One iteration for one output column.}
Constructing $\mathbf{B}_j\in\mathbb{R}^{d\times G}$ costs $O(d^2)$.
Forming its Gram matrix costs $O(dG^2)$, and the scale solve costs
$O(G^3)$, including the spectral check and Cholesky factorization.
The right-hand side costs $O(dG)$.
For code updates, cache the full residual and update it whenever a
group proposal is accepted. Projection onto a group's QR basis and
updating its contribution each cost $O(dg)$; bounded Babai back
substitution costs $O(g^2)$. A complete group sweep therefore costs
$O(d^2+dg)$, dominated by the residual updates rather than the back
substitution.

\paragraph{Total time and workspace.}
For an iteration budget $K\geq1$, the resulting bound is
\begin{equation}
\begin{aligned}
T_{\mathrm{ref}}
&=O\!\left((n+d)(d^2+dp)+Kp\left[d^2+dG^2+G^3\right]\right).
\end{aligned}
\label{eq:refinement-complexity}
\end{equation}
Here $g\leq d$ and $K\geq1$ allow the $d^2p$, $d^2g$, and $Kpdg$
terms to be absorbed; early stopping reduces the actual work.
Using the main-target shortcut above removes the $(n+d)dp$ term,
giving $O((n+d)d^2+Kp[d^2+dG^2+G^3])$.
The shared reduced matrix and group factors occupy $O(d^2)$ storage.
Processing one output column at a time requires an additional
$O(dG+G^2+d)$ workspace for the scale system and residuals, excluding
the stored model, host states, calibration inputs, and outputs.
Batching columns increases this workspace with the batch size.

Since $G=d/g$, halving the group size multiplies the $dG^2$ term by
four and the $G^3$ term by eight, while reducing the $dg$ back-substitution
term. This explains why smaller groups can increase refinement time,
as observed in Table~\ref{tab:group-size-time}.

\end{document}